\documentclass[runningheads]{llncs}

\usepackage{eccv}

\usepackage[a4paper,margin=1in]{geometry}

\usepackage{eccvabbrv}

\usepackage{graphicx}
\usepackage{booktabs}
\usepackage{multirow}
\usepackage{wrapfig}
\usepackage{amssymb}
\usepackage{pifont}
\usepackage{float}
\usepackage{capt-of}
\usepackage{subcaption}
\usepackage{algorithm}
\usepackage{algorithmic}
\usepackage{tikz}
\usetikzlibrary{shapes.geometric, arrows.meta, positioning, fit, calc}
\usepackage{listings}
\usepackage{xcolor}
\usepackage{amsmath}
\usepackage{docmute}

\definecolor{codegreen}{rgb}{0,0.6,0}
\definecolor{codegray}{rgb}{0.5,0.5,0.5}
\definecolor{codepurple}{rgb}{0.58,0,0.82}
\definecolor{backcolour}{rgb}{0.98,0.98,0.98}
\definecolor{keywordcolor}{rgb}{0.6,0,0.6}
\definecolor{commentcolor}{rgb}{0,0.5,0}
\definecolor{stringcolor}{rgb}{0.8,0.2,0.2}

\lstdefinestyle{pythonstyle}{
    backgroundcolor=\color{backcolour},
    commentstyle=\color{commentcolor}\itshape,
    keywordstyle=\color{keywordcolor}\bfseries,
    numberstyle=\tiny\color{codegray},
    stringstyle=\color{stringcolor},
    basicstyle=\ttfamily\scriptsize,
    breakatwhitespace=false,
    breaklines=true,
    captionpos=b,
    keepspaces=true,
    numbers=none,
    numbersep=5pt,
    showspaces=false,
    showstringspaces=false,
    showtabs=false,
    tabsize=2,
    xleftmargin=3pt,
    xrightmargin=3pt,
    frame=none,
    morekeywords={def, return, for, in, if, with, as}
}
\tikzset{
    process/.style={rectangle, minimum width=3cm, minimum height=0.8cm,
        text centered, draw=black, text width=5.5cm, font=\small},
    arrow/.style={-{Stealth[length=2mm]}, thick}
}

\usepackage[accsupp]{axessibility}  

\newcommand{\cmark}{\ding{51}}%
\newcommand{\xmark}{\ding{55}}%

\usepackage{hyperref}

\usepackage{orcidlink}

\begin{document}

\title{iADD: Improving Alignment and Diversity in Diffusion Policy Optimization} 

\titlerunning{i-ADD: Improving Alignment and Diversity in Diffusion Policy Optimization}

\author{Ashok Prasad Neupane\inst{1}\textsuperscript{*} \and
Saugat Adhikari\inst{1}\textsuperscript{*} \and
Pramish Paudel\inst{3}\textsuperscript{*} \and
Ajad Chhatkuli\inst{2,3} \and
Danda Pani Paudel\inst{2,3}}

\authorrunning{A.~P.~Neupane et al.}

\institute{Pulchowk Campus, IOE, Tribhuvan University, Lalitpur, Nepal \and
NAAMII, Kathmandu, Nepal \and
INSAIT, Sofia University ``St. Kliment Ohridski'', Bulgaria\\
\textsuperscript{*}Equal contribution}

\maketitle
\begin{figure}[H]
    \centering
    \begin{subfigure}[b]{0.54\textwidth}
        \centering
        \includegraphics[width=\textwidth]{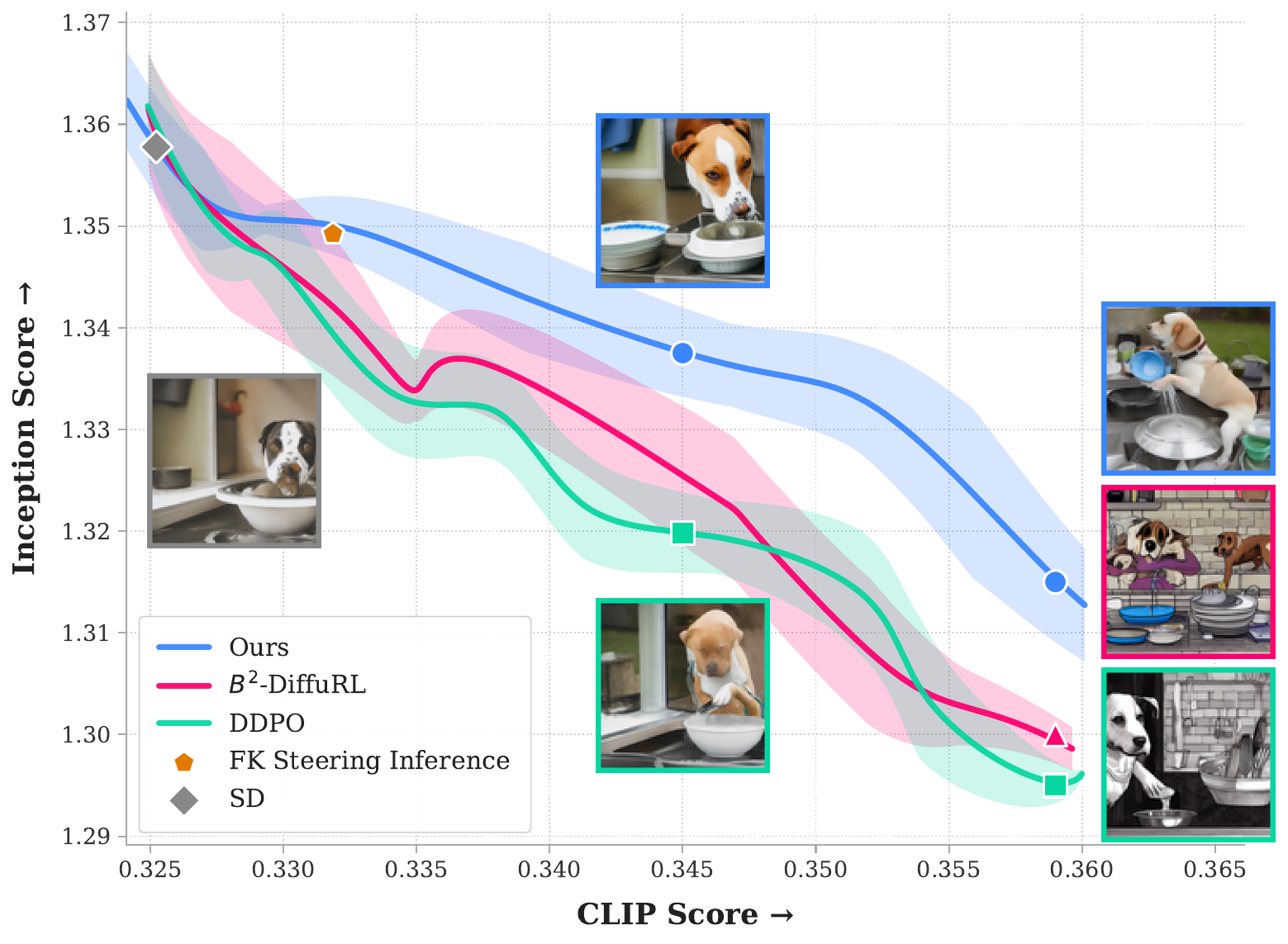}
        \caption{Alignment-diversity trade-off curves}
        \label{fig:diversity_vs_reward}
    \end{subfigure}
    \hfill
    \begin{subfigure}[b]{0.45\textwidth}
        \centering
        \includegraphics[width=\textwidth,trim={0 40 350 20},clip]{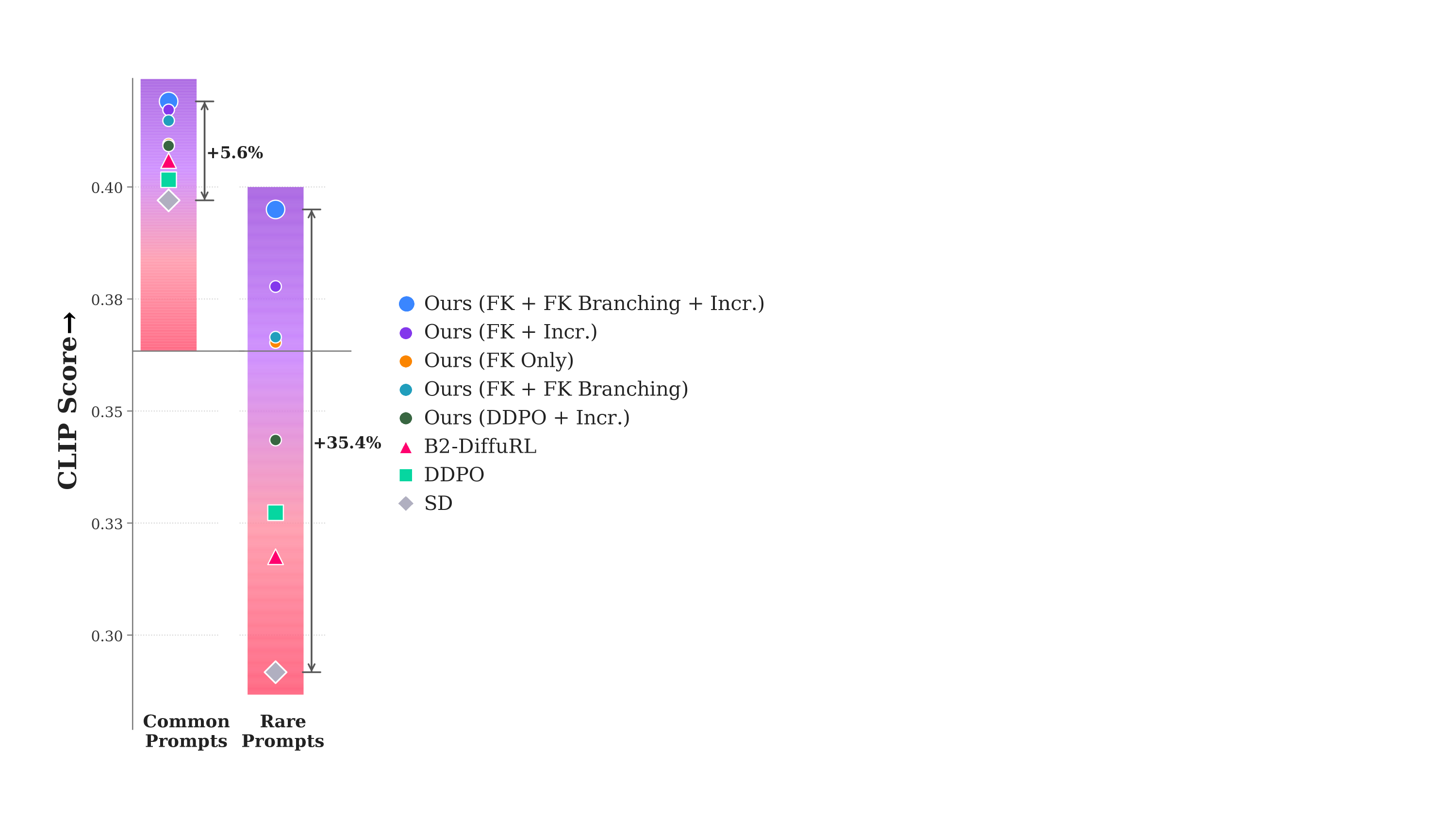}
        \caption{Alignment performance in common and rare prompts}
        \label{fig:common_rare_barplot}
    \end{subfigure}
    \caption{\textbf{Performance in alignment and diversity.} \textbf{(a)} Our method, iADD, achieves a higher CLIP Score at any given diversity level compared to other methods. Notably, as reward increases, competing methods (DDPO and $B^2$-DiffuRL) produce increasingly cartoonish, over-saturated outputs, whereas our method preserves photorealistic quality throughout training. The visualized images are for the prompt ``a dog washing dishes''. \textbf{(b)} Alignment performance on common and rare prompts, evaluated at constant Inception Score(IS) for each method. Rare prompts represent conditions that inherently yield low-reward outcomes under the pretrained model.}
    \label{fig:teaser-figure}
\end{figure}

\begin{abstract}
  Reinforcement learning based post training of diffusion models, such as Denoising Diffusion Policy Optimization (DDPO), optimizes a reverse diffusion process under a reward function. However, current approaches to reward optimizations do so at the cost of diversity and quality. In this paper, we provide better tradeoffs through careful theoretical considerations and method design. We analyze the theoretical framework and mathematically demonstrate that \emph{only-latter timestep} updates of diffusion model may be harmful for diversity contrary to the conclusions presented in a previous work. Additionally, we propose an incremental Feynman-Kac training based on strong theoretical foundations in order to achieve the best-yet alignment-diversity tradeoffs. We perform extensive experiments and compare our method against related diffusion policy optimization approaches in three different tasks and also provide strong ablations for each component, thus validating strong performance gains in both alignment and diversity.
  

  Project page: \url{https://saugat2002.github.io/iadd/} \textperiodcentered{} Code: \url{https://github.com/Saugat2002/iADD}.

 \keywords{Reinforcement Learning \and Diffusion Policy Optimization}
\end{abstract}

\section{Introduction}

Diffusion Models are currently the gold standard for generation and editing tasks on images~\cite{ddpm, ddim, flow, rombach2022highresolution,song2021scorebasedgenerativemodelingstochastic, sohl2015deep,nichol2022glidephotorealisticimagegeneration, saharia2022photorealistic}, videos~\cite{ho2022videodiffusionmodels, ho2022imagenvideohighdefinition, Chen2024forcing} as well as 3D scenes~\cite{gokmen2025inferring, hu2026mixed, tang2024diffuscene} or objects. Through time-dependent iterative movement defining a path, diffusion methods learn to reach the target data distribution, exhibiting strong generalization and diversity~\cite{Dhariwal2021,Li2023Generalization, nichol2022glidephotorealisticimagegeneration}. Nevertheless, the quality of the modeling is limited by the target distribution samples; thus in most cases generation quality is naturally impacted for rare classes and specific user constraints. Reinforcement Learning (RL)~\cite{Williams1992REINFORCE, ppo, instructgpt, schulman2015trust} has the potential to address the problem, without using additional data, in the ideals of learning solely through experience~\cite{Silver2025}.

Recent works~\cite{black2023ddpo,fan2023dpok, Zhao2025Score_as_asction,hu2025towards, alignprop, d3po, xu2023imagereward} have precisely addressed the problem of finetuning the reverse diffusion process online, by applying the RL algorithm~\cite{Williams1992REINFORCE}. However, naive policy optimization of diffusion models can lead to mode collapse and general quality degradation of the generated samples~\cite{fan2023dpok, hu2025towards}. This is reflected in the \emph{alignment-diversity tradeoffs}, see Fig.~\ref{fig:teaser-figure}.

Circumventing mode collapse and general quality degradation in lack of additional preference data~\cite{wallace2023diffusiondpo,li2024diffusionkto,zhang2024largescalerl,kang2025rethinkingdpo}, though important is highly challenging. Thus, most approaches tend to be domain specific, and thus are validated on specific tasks. A more generic and theoretically grounded study of the alignment-diversity tradeoffs under the constraints of additional preference data-free rewards, remains under-explored. Existing solutions often perform regularization, \eg through selected timestep updates~\cite{barcelo2024avoiding,hu2025towards} or via measuring the KL divergence~\cite{fan2023dpok}. Specifically, \cite{hu2025towards} intuitively argues that reward backpropagation is more reliable for latter timesteps and thus do not update the model on earlier timesteps. On the other hand, \cite{barcelo2024avoiding, meng2021sdedit} refine middle timesteps based on the observation of how semantics are constructed in diffusion models~\cite{Dhariwal2021,Choi2022}. Similarly, \cite{fan2023dpok,Zhao2025Score_as_asction} specifically consider the Denoising Diffusion Probabilistic Model (DDPM) in order to propose reward optimization strategies. Furthermore, \cite{Zhao2025Score_as_asction} proposes a continuous time formulation in DDPM for RL policy optimization.

In this paper, we study the alignment-diversity tradeoff in finetuning discrete-time denoising diffusion model through early intervention as well as careful regularization. We formulate the tradeoff problem as that of attaining the minimum change in the target samples required for the alignment, while preserving the original diversity through structural regularization, and target distribution's volume preservation. We discover that attaining higher alignment with minimal diversity tradeoff requires early intervention via two processes. The first and retrospectively obvious solution is to update the early timesteps of the reverse denoising path, contrary to \cite{hu2025towards} that updates the late timesteps. We provide actionable theoretical results that untangles the puzzle of timestep selection. The results justify partial timestep selection from the viewpoint of target distribution's volume preservation. Moreover, the theoretical analyses suggest an approach, which incrementally increases the per-iteration sparse timestep updates, over the course of the finetuning. Additionally, our second solution is to perform early intervention by looking ahead in the denoising path. Specifically, we propose the discrete-time implementation of the Feynman-Kac path guidance~\cite{Moral2004,Kappen2005,Singhal2025FK_Steering,Skreta2025FK_Correctors, smith2013sequential, jasra2009sequential} during training in the following two ways: discontinuing least promising paths with low reward expectations to improve the alignment and branching the promising ones in order to encourage diversity. Table~\ref{tab:component_disassembly} shows the relative impact and relevant theory behind the method's components.


\begin{figure}[h]
    \centering
    \vspace{-1.2\baselineskip}
    \includegraphics[width=\linewidth, trim={0 40 0 50}, clip]{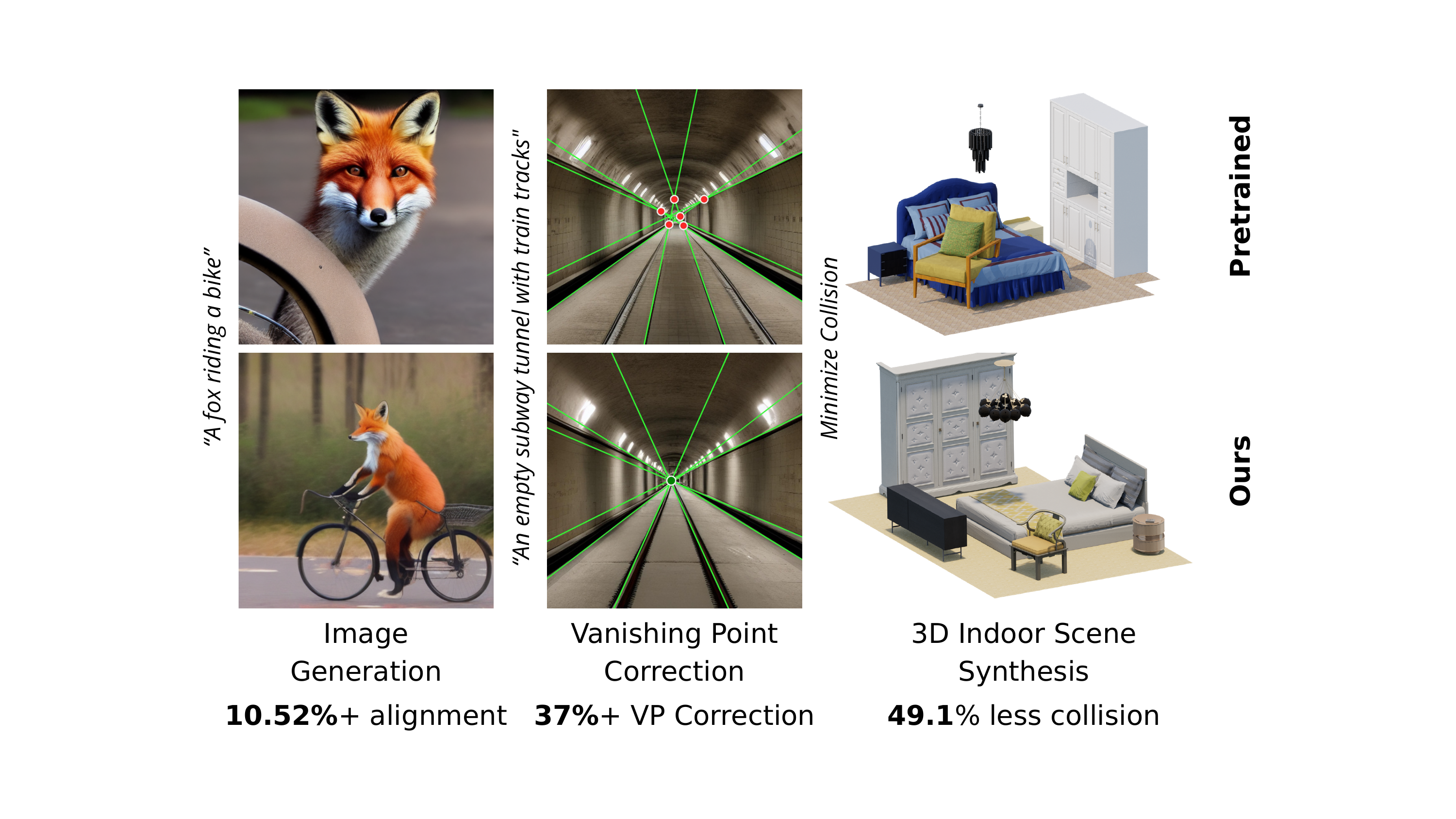}
    \vspace{-1.8\baselineskip}
    \caption{\textbf{Versatility across diverse generation tasks.}}
    \label{fig:intro_figure}
    \vspace{-0.8\baselineskip}
\end{figure}
\vspace{-0.5\baselineskip}
Our final method, termed, iADD thus employs both early timestep update, early intervention as well as structural regularization in order to obtain a balanced approach to reward-based finetuning of diffusion models. We perform experiments on three different tasks, rare prompt-based image generation, vanishing point correction in image generation and 3D scene generation, in order to show the versatility. Our comparisons show clear improvements in alignment as well as diversity/quality of generation in all three tasks compared to the state-of-the-art approaches.
In summary, our main contributions are as follows:
\begin{enumerate}
    \item[a.] We provide theoretical analysis of the impact of timestep selection strategies on diversity.
    \item[b.] We propose iADD, a novel training method that integrates an incremental timestep curriculum with discrete-time Feynman-Kac (FK) branching to optimally balance reward alignment and diversity.
    \item[c.] Extensive experiments across image and 3D scene generation demonstrate that iADD improves alignment, diversity, and rare-event success; comparisons with recent GRPO and inference-time guidance methods further show that its components are complementary to the underlying policy optimizer.
\end{enumerate}


\begin{table}[h]
\centering
\caption{Disassembly of our Method's components and their roles. Up arrow ($\uparrow$) denotes improvement and down arrow ($\downarrow$) denotes worsening. Additional reference to sections or theoretical justifications are mentioned in the most relevant column.}
\label{tab:component_disassembly}
{\renewcommand{\arraystretch}{1.12}
\begin{tabular}{@{}clcccc@{}}
\toprule
& \textbf{Components} & \textbf{Alignment} & \textbf{Diversity} & \textbf{Rarity} & \textbf{Training Eff.} \\ \midrule
\multirow{3}{*}{\rotatebox[origin=c]{90}{\scriptsize Timesteps}} & Early timesteps & $\uparrow$  & $\uparrow$, Prop.\ \ref{prop:control_authority} & $\uparrow$ & $\uparrow$ \\
& Timestep sampling & $\uparrow$ & $\uparrow$, Prop.\ \ref{prop:control_authority}, \ref{prop:any_vs_all} & $\uparrow$ & $\uparrow$ \\
& Incremental timesteps & $\uparrow$ & $\uparrow$, Remark.\ \ref{rem:incremental} & $\uparrow$ & $\uparrow$ \\
\multicolumn{6}{@{}c@{}}{\dotfill} \\
\multirow{2}{*}{\rotatebox[origin=c]{90}{\scriptsize FK}} & FK Guidance                  & $\uparrow$, Sec.~\ref{sec:fk}  & $\uparrow$  & $\uparrow \uparrow$ & $\downarrow$  \\
& FK Branching                 & $\uparrow$ & $\uparrow \uparrow$, Sec.~\ref{sec:fk} & $\uparrow$  & $\downarrow$ \\ \bottomrule
\end{tabular}
}
\end{table}

\vspace{-1em}
\section{Related Work}
\vspace{-1em}
\emph{Inference-time adjustments.}
In order to improve alignment-diversity, reward metrics are also used solely at inference time. Such training free-approaches \cite{singhal2025general, Skreta2025FK_Correctors,song2021scorebasedgenerativemodelingstochastic} utilize inference time measures to maximize the reward for specific generated samples. However, these techniques are also susceptible to degradation of sample quality at higher reward and moreover do not scale well during inference.

\noindent\emph{Preference-based optimization.} RL when used with additional preference data allows for advanced techniques that can preserve diversity. Such approaches are the counterparts of RL with Human Feedback (RLHF)~\cite{Christiano2017}, popularly used in Large Language Models finetuning~\cite{Ziegler2019}, in the diffusion policy optimization. Methods in the category primarily follow the Direct Policy Optimization (DPO)~\cite{rafailov2024directpreferenceoptimizationlanguage , wallace2023diffusiondpo} strategies either in discrete-time formulation~\cite{wallace2023diffusiondpo} or continuous-time~\cite{Hua2025neurips}. However, they are severely limited by the requirement of human preference data, requiring an offline strategy. 

\noindent\emph{Online RL for Diffusion Models.} Our work is on online RL finetuning of diffusion models~\cite{black2023ddpo,fan2023dpok,Behjoo2025,hu2025towards,Yan2025entropy,Clark2023}, which has seen limited applications in diffusion models, largely due to the difficulty of the joint alignment-diversity optimization. In the topic, DDPO~\cite{black2023ddpo} is a seminal work that finetunes the complete trajectory as discrete steps in the Markov chain. DPOK~\cite{fan2023dpok} proposed a KL-divergence regularization using the DDPM formulation. Notably, \cite{hu2025towards} shows the challenges of the sparse terminal rewards~\cite{andrychowicz2018hindsightexperiencereplay,hare2019dealingsparserewardsreinforcement,memarian2021selfsupervisedonlinerewardshaping,nachum2018dataefficienthierarchicalreinforcementlearning,devidze2022explorationguided,gupta2022unpackingrewardshapingunderstanding,hu2020learningutilizeshapingrewards,Ng1999PolicyIU} and suggests solely updating the later timesteps of denoising, while also incorporating contrastive pairs in branches for improving diversity. A recent work also formulates the continuous-time online RL for diffusion models~\cite{Zhao2025Score_as_asction} in the DDPM formulation while also defining each path's value function at any given time similar to a Feynman-Kac potential. Other approaches~\cite{Clark2023,Prabhudesai2024} use task-specific considerations for online finetuning of diffusion models. However, current approaches do not treat time-step selection under careful theoretical considerations and under-exploit the denoising paths in reward optimization leading to poorer tradeoffs.


\begin{table}[t]
    \centering
    \caption{Summary of methods and techniques across training, inference, and diffusion formulation. Our method is designed for the DDIM formulation.}
    \label{tab:related-work-summary}
    {\setlength{\tabcolsep}{4pt}
    \renewcommand{\arraystretch}{1.12}
    \resizebox{\textwidth}{!}{%
    \begin{tabular}{l|ccc|c|c}
        \hline
        \multicolumn{1}{c|}{\multirow{2}{*}{\textbf{Method}}} & \multicolumn{3}{c|}{\textbf{Training}} & \multicolumn{1}{c|}{\textbf{Inference}} & \textbf{Formulation} \\
        \cline{2-6}
        & \textbf{Incr.} & \textbf{FK} & \textbf{Branch.} & \textbf{No Test-time Scaling} & \textbf{DDPM/DDIM} \\
        \hline
        DPOK\cite{fan2023dpok} & \xmark & \xmark & \xmark & \cmark & DDPM \\
        Score-as-Action\cite{Zhao2025Score_as_asction} & \xmark & \cmark & \xmark & \cmark & DDPM \\
        \hline
        \hline
        DDPO\cite{black2023ddpo} & \xmark & \xmark & \xmark & \cmark & DDIM \\
        B$^2$-DiffuRL\cite{hu2025towards} & \xmark & \xmark & \cmark & \cmark & DDIM \\
        FK Steering \& FK Correctors\cite{singhal2025general, Skreta2025FK_Correctors} & \xmark & \xmark & \xmark & \xmark & DDIM \\
        \hline
        Ours & \cmark & \cmark & \cmark & \cmark & DDIM \\
        \hline
    \end{tabular}
    }}
\end{table}
\vspace{-0.3em}
\section{Preliminaries}
\vspace{-0.3em}
\paragraph{Discrete-Time Diffusion Models.}
\label{subsec:diffusion}

Diffusion models~\cite{sohl2015deep} are a class of latent variable models that learn to approximate a target data distribution by systematically reversing a predefined Markovian corruption process. Our work specifically builds upon discrete-time formulations, where the generation trajectory unfolds over a finite set of steps $t \in \{0, \dots, T\}$.

\noindent\textbf{Forward Process.}
The forward process progressively corrupts a data sample $x_0 \sim q(x_0)$ by injecting Gaussian noise according to a fixed variance schedule $\beta_1, \dots, \beta_T$. This trajectory is defined as a Markov chain:
\begin{equation}
q(x_{1:T} \mid x_0) = \prod_{t=1}^T q(x_t \mid x_{t-1}),
\label{eq:forward_process}
\end{equation}
where the transition dynamics are given by $q(x_t \mid x_{t-1}) = \mathcal{N}(x_t; \sqrt{1-\beta_t} \, x_{t-1}, \beta_t I)$. For a sufficiently large number of steps $T$, the final state $x_T$ converges to an isotropic Gaussian noise distribution $\mathcal{N}(0, I)$.

\noindent\textbf{Reverse Generative Process.}
The core objective of a diffusion model is to learn the reverse of this trajectory, constructing clean data samples starting from pure noise $x_T$. The reverse process is modeled as a parameterized Markov chain, often conditioned on auxiliary context $c$ (\eg, text prompts):
\begin{equation}
p_\theta(x_{0:T} \mid c) = p(x_T) \prod_{t=1}^T \pi_\theta(x_{t-1} \mid x_t, c),
\label{eq:reverse_process}
\end{equation}
where $\pi_\theta(x_{t-1} \mid x_t, c)$ is approximated by a deep neural network predicting the parameters of the Gaussian transition necessary to denoise $x_t$ into $x_{t-1}$.

\paragraph{Denoising Diffusion Implicit Models (DDIM)} 
DDIM~\cite{ddim} generalizes the standard DDPM~\cite{ddpm} forward process to a family of non-Markovian diffusion processes that share the same training objective. This decouples the number of inference steps from the number of training timesteps, enabling sampling over an arbitrary subsequence $\mathcal{T}_{sub} \subset \{1, \ldots, T\}$ without any retraining. Crucially, at each denoising step $t$, DDIM produces an explicit estimate of the clean sample $x_0$ via:

\begin{equation}
\hat{x}_0 = \frac{x_t - \sqrt{1 - \alpha_t}\, \epsilon_\theta^{(t)}(x_t)}{\sqrt{\alpha_t}}
\end{equation}

which we leverage to compute reward estimates from intermediate timesteps, enabling us to guide the generative trajectory before the final sample is produced.

\paragraph{Denoising Diffusion Policy Optimization (DDPO)}
\label{subsec:ddpo}

Denoising Diffusion Policy Optimization (DDPO)~\cite{black2023ddpo} treats the iterative reverse diffusion process as a multi-step Markov Decision Process (MDP). At each timestep $t$, the state is defined as $s_t \triangleq (c, t, x_t)$, the action as $a_t \triangleq x_{t-1}$, and the policy is parameterized by the reverse transition probability $\pi(a_t \mid s_t) \triangleq \pi_\theta(x_{t-1} \mid x_t, c)$.

The objective is to optimize a downstream reward $R(x_0, c)$ based on the final generated sample $x_0$ and the conditioning context $c$. Because each reverse transition is an isotropic Gaussian, exact per-step log-likelihoods are tractable. DDPO optimizes this MDP using an importance-weighted policy gradient estimator (DDPO-IS), allowing for multiple gradient updates using data from a previous policy parameters $\theta_{\text{old}}$:
\begin{equation}
\nabla_\theta \mathcal{F}(\theta) = \mathbb{E} \left[ \sum_{t=0}^T \frac{\pi_\theta(x_{t-1} \mid x_t, c)}{\pi_{\theta_{\text{old}}}(x_{t-1} \mid x_t, c)} \nabla_\theta \log \pi_\theta(x_{t-1} \mid x_t, c) \cdot R(x_0, c) \right].
\label{eq:ddpo_is}
\end{equation}

However, standard DDPO updates all timesteps equally and relies solely on a single final reward evaluated on the complete trajectory. This sparse terminal reward and uniform credit assignment leads to reward hacking, which naturally motivates our proposed particle-based local advantage estimation and incremental training scheduling.

\section{Method}
\begin{figure}[h]
    \centering
    \includegraphics[width=1\textwidth]{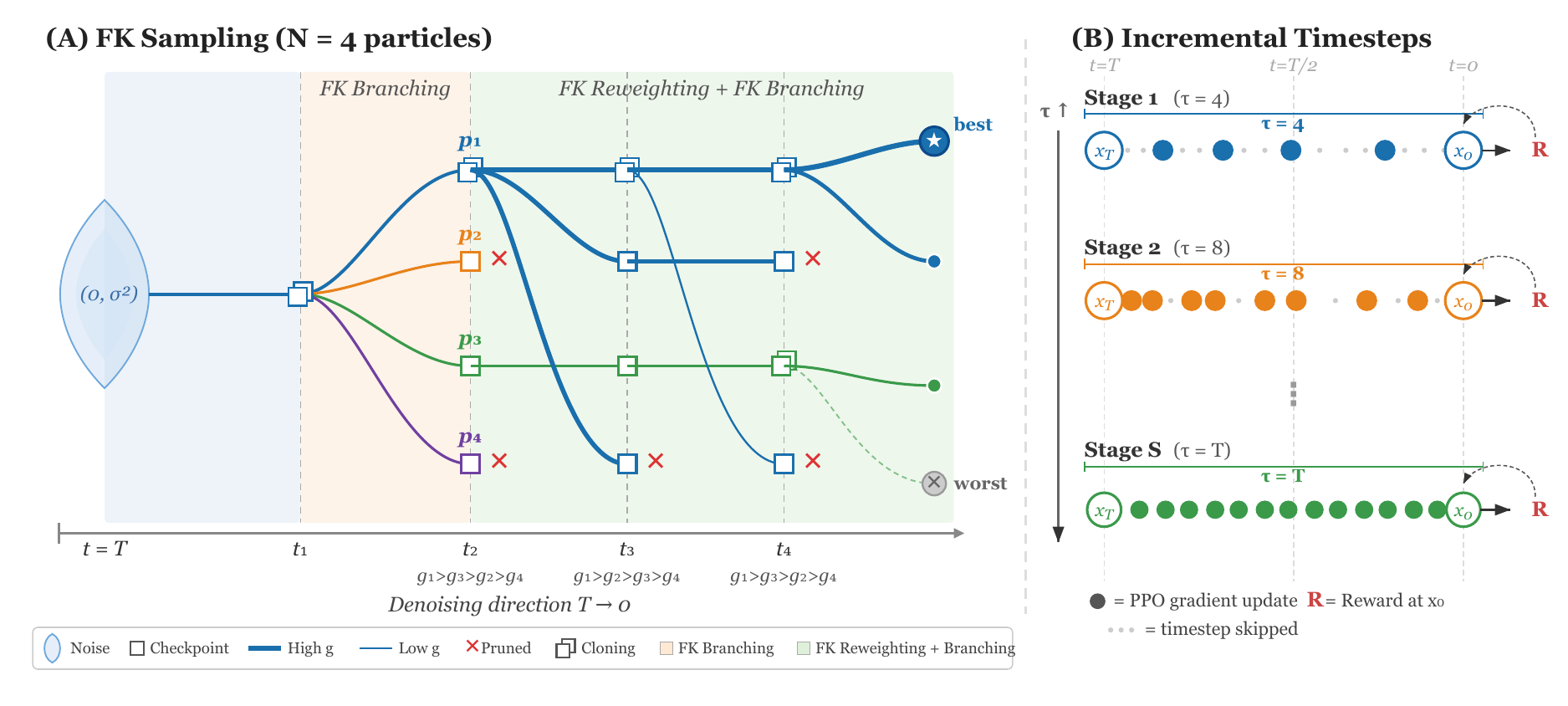}
    \caption{\textbf{Overview of FK Sampling and Incremental Timestep Training.} \textbf{(A)} FK Sampling: Our approach employs a Feynman-Kac (FK) particle filter for sampling. Starting from Gaussian noise at $t=T$, trajectories ($p_1, \dots, p_4$) are evaluated at discrete checkpoints $t_n$ based on their potentials $g$. The $FK Branching$ mechanism duplicates high-potential particles while pruning low-potential ones, effectively redirecting the stochastic denoising paths toward regions of higher expected reward. \textbf{(B)} Incremental timestep updates: During training, we utilize a progressive stage-wise strategy where the cardinality of the updated timesteps $\mathcal{S}$ increases.}
    \label{fig:method}
\end{figure}
\vspace{-0.5em}

\subsection{Problem formulation}
Let $\pi_\phi(\mathbf{x}_{t-1}\mid \mathbf{x}_t, c)$ denote a pretrained conditional diffusion \emph{reverse} transition model (with parameters $\phi$) used for sampling, where $c$ is a condition (e.g., a text prompt) and $\mathbf{x}_T\sim \mathcal{N}(\mathbf{0},\mathbf{I})$. Sampling produces a trajectory $\tau=(\mathbf{x}_T,\mathbf{x}_{T-1},\ldots,\mathbf{x}_0)$ by iterating $\mathbf{x}_{t-1}\sim \pi_\phi(\cdot\mid \mathbf{x}_t,c)$ for $t=T,\ldots,1$. This induces a marginal distribution over outputs $\mathbf{x}_0$, which we denote by $p_\phi(\mathbf{x}_0\mid c)$.

Our goal is to adapt the sampler to align generated samples with user preferences while preserving the original prior and diversity. We formalize preferences via a reward function $R(\mathbf{x}_0, c)\in\mathbb{R}$ that scores the final generated sample (e.g., aesthetics, constraint satisfaction, or a task-specific metric), and seek to increase expected reward while preserving sample quality and diversity.

We parameterize the \emph{trainable} sampler by $\pi_\theta(\mathbf{x}_{t-1}\mid \mathbf{x}_t, c)$, initialized from the pretrained model ($\theta\leftarrow \phi$) and optimized using reinforcement learning (DDPO) to maximize a sparse terminal reward:
\begin{equation}
    \mathcal{F}(\theta)\;=\;\mathbb{E}_{\tau\sim \pi_\theta}\big[R(\mathbf{x}_0,c)\big],
\end{equation}
where a trajectory $\tau = (\mathbf{x}_T,\mathbf{x}_{T-1},\ldots,\mathbf{x}_0)$ is generated by repeatedly sampling $\mathbf{x}_{t-1}\sim \pi_\theta(\cdot\mid \mathbf{x}_t,c)$ for $t=T,\ldots,1$.

\subsection{Sparse Incremental Timestep Update}
We use the DDIM formulation and first consider the role of structural regularization via selected timestep updates~\cite{hu2025towards,barcelo2024avoiding}. We denote the list of updated timesteps as $\mathcal{S} = {t_i}$. Below we provide actionable theoretical results followed by specific design principles implied thereof.
The proofs of Propositions \ref{prop:control_authority}, \ref{prop:any_vs_all} and Remark \ref{rem:incremental} are provided in the supplementary.

\begin{proposition}[Generative Reach]
\label{prop:control_authority}
    Let $R(\mathbf{x}_0, c)$ be a sparse terminal reward and $\delta \mathbf{\epsilon}_t$ be the parameter update induced by the RL objective at timestep $t$. For a fixed budget of $N=|\mathcal{S}|$ optimized timesteps, a uniform sampling strategy $\mathcal{S}_{uni} \sim \mathcal{U}(1, T)$ yields a strictly greater total perturbation magnitude $||\delta \mathbf{x}_0||$ than a late-stage strategy $\mathcal{S}_{late} \sim \mathcal{U}(1, N)$.
\end{proposition}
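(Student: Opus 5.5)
The plan is to linearize the deterministic DDIM map from an intermediate state to the output, and to show that a noise-prediction perturbation injected at timestep $t$ reaches $\mathbf{x}_0$ through a per-timestep gain $G_t$ that grows with $t$. The argument works with the DDIM update
\[
\mathbf{x}_{t-1} = \sqrt{\alpha_{t-1}}\,\hat{\mathbf{x}}_0(\mathbf{x}_t) + \sqrt{1-\alpha_{t-1}}\,\epsilon_\theta^{(t)}(\mathbf{x}_t),
\]
where $\alpha_t$ is decreasing in $t$. A perturbation $\delta\epsilon_t$ of the noise prediction at step $t$ changes $\mathbf{x}_{t-1}$ by
\[
\delta\mathbf{x}_{t-1} = c_t\,\delta\epsilon_t, \qquad c_t = \sqrt{1-\alpha_{t-1}} - \sqrt{\alpha_{t-1}}\sqrt{1-\alpha_t}/\sqrt{\alpha_t}.
\]
This perturbation is then transported to $\mathbf{x}_0$ by the Jacobian product $J_{t-1\to 0} = \prod_{s=1}^{t-1} \partial \mathbf{x}_{s-1}/\partial \mathbf{x}_s$. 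To first order,
\[
\delta\mathbf{x}_0 = \sum_{t\in\mathcal{S}} J_{t-1\to 0}\, c_t\, \delta\epsilon_t,
\]
and the total perturbation magnitude is controlled by the gains $G_t := \|J_{t-1\to0}\|\,|c_t|$.

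The key steps are as follows.
\begin{enumerate}
\item Compute $|c_t|$ explicitly and show it is increasing in $t$. Using $\alpha_{t-1}>\alpha_t$, one gets $|c_t| = \sqrt{\alpha_{t-1}}\bigl(\sqrt{(1-\alpha_t)/\alpha_t} - \sqrt{(1-\alpha_{t-1})/\alpha_{t-1}}\bigr)$. This is $\sqrt{\alpha_{t-1}}$ times the increment of the noise-to-signal ratio $\sigma_t := \sqrt{(1-\alpha_t)/\alpha_t}$. For standard schedules (linear or cosine $\beta_t$), these increments grow with $t$. I would state this as a mild schedule assumption.
\item Argue that the transport $J_{t-1\to0}$ is non-contractive in the directions the update acts on. To leading order, the DDIM ODE in the $\mathbf{x}/\sqrt{\alpha}$ coordinates transports perturbations with a scaling factor $\sqrt{\alpha_0/\alpha_{t-1}}\ge 1$, which is increasing in $t$. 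I would make this an explicit assumption: the score Jacobian has bounded norm, so the correction terms are dominated.
\item Conclude that $G_t$ is strictly increasing in $t$.
\item Take expectations over the random set $\mathcal{S}$, assuming comparable update magnitudes $\mathbb{E}\|\delta\epsilon_t\|$ across $t$. This assumption is natural because the RL gradient is driven by the same terminal reward. For the triangle-inequality-free version, also assume the per-step contributions are roughly aligned, or uncorrelated so that a sum-of-squares bound applies.
\item Compare the two strategies. Under this setup,
\[
\mathbb{E}\|\delta\mathbf{x}_0\| \propto \sum_{t\in\mathcal{S}} G_t .
\]
For $\mathcal{S}_{uni}$ (sampling without replacement from $\{1,\dots,T\}$), the expected sum is $N\cdot\frac1T\sum_{t=1}^T G_t$. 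For $\mathcal{S}_{late}$, it is $\sum_{t=1}^N G_t$, which equals $N\cdot\frac1N\sum_{t\le N} G_t$.
\item Finish with an elementary fact: for a strictly increasing sequence, the average over the first $N$ terms is strictly below the average over all $T$ terms when $N<T$. This gives the strict inequality.
\end{enumerate}

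The main obstacle is step 2, controlling $J_{t-1\to0}$ rigorously, since it depends on the learned network. I would first try to handle it by the assumption that $\|\partial\epsilon_\theta/\partial\mathbf{x}_t\|$ is small, so the transport is near-identity in scaled coordinates. I would also note explicitly that the result is first-order, holding in the small-update regime of PPO-style clipped steps.
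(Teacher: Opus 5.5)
Your skeleton matches the paper's: linearize by the chain rule to $\delta\mathbf{x}_0=\sum_{t\in\mathcal{S}}\mathcal{J}_{t-1\to0}\,c_t\,\delta\epsilon_t$ (your $c_t$ is the coefficient $\beta_t$ in the paper's update), then argue that the reach $\|\mathcal{J}_{t-1\to0}\|\,|c_t|$ is larger at early, large-$t$ steps. Your explicit coefficient and the averaging over the random subset are improvements on the paper. The gap is step 2, which carries all the content.

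A bounded $\epsilon$-Jacobian does not make the transport near-identity in $\mathbf{x}/\sqrt{\alpha}$ coordinates. The per-step correction there is $(\sigma_s-\sigma_{s-1})\sqrt{\alpha_s}\,\partial\epsilon_\theta/\partial\mathbf{x}_s$, and these step sizes sum to roughly $\mathrm{arcsinh}\,\sigma_T\approx3$, which is not small. Take data $\mathbf{x}_0\sim\mathcal{N}(0,I)$. The ideal predictor $\epsilon(\mathbf{x}_t)=\sqrt{1-\alpha_t}\,\mathbf{x}_t$ has Jacobian norm at most $1$. Yet each DDIM step is exactly $\mathbf{x}_{t-1}=\cos(\phi_t-\phi_{t-1})\,\mathbf{x}_t$ with $\cos\phi_t=\sqrt{\alpha_t}$. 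So transport is essentially isometric in $\mathbf{x}$, not expansive by $\sqrt{\alpha_0/\alpha_{t-1}}$; on the paper's 20-step grid that factor would be about $8$ for a perturbation injected at the first step.

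Directions normal to the data manifold are worse. There $\hat{\mathbf{x}}_0$ acts approximately as a projection, so $\partial\epsilon_\theta/\partial\mathbf{x}_t\approx P_\perp/\sqrt{1-\alpha_t}$, which is unbounded as $t\to0$. The step map in your scaled coordinates becomes $I-(1-\sigma_{t-1}/\sigma_t)P_\perp$, and normal components are transported by $\sigma_0/\sigma_{t-1}$. The gain is then $\sqrt{\alpha_0}\,\sigma_0(\sigma_t-\sigma_{t-1})/\sigma_{t-1}$, roughly an increment of $\log\sigma$, and that favors \emph{late} steps. So the comparison depends on direction. You need an explicit assumption that $\delta\epsilon_t$ acts where transport is near-isometric. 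The paper's appeal to contractive $J_k$ near $t\approx0$ does not help: every $\mathcal{J}_{t-1\to0}$ with large $t$ contains those same late factors.

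Second, step 1, and therefore steps 3 and 6, fail for the schedule actually used. For linear and scaled-linear schedules (Stable Diffusion's), $\sigma_t^2=(1-\alpha_t)/\alpha_t$ grows roughly linearly in $t$ near $t=0$, so $\sigma_t$ is concave there. On the paper's grid (timesteps $1,51,\dots,951$), $|c_t|\approx0.012,\,0.19,\,0.12,\,0.11,\,0.11$ at timesteps $1,51,101,151,201$, rising to about $0.31$ at $951$. So $G_t$ is not monotone, and the ``strictly increasing'' fact in step 6 cannot be used.

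You don't need monotonicity; compare averages directly. With near-isometric transport, $G_t\approx|c_t|$, which averages about $0.105$ over the five late steps versus about $0.165$ over all twenty. Under your scaled-identity assumption, the $\sqrt{\alpha_{t-1}}$ factors cancel, so $G_t\approx\sqrt{\alpha_0}(\sigma_t-\sigma_{t-1})$. The sums then telescope to the chord condition $(\sigma_N-\sigma_0)/N<(\sigma_T-\sigma_0)/T$, about $0.11$ versus $0.55$ here. Either way the strict inequality holds in expectation, but by a schedule-dependent margin, not by monotonicity.
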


The first-order sensitivity used in the proof is a convenient way to expose the timestep dependence; the comparative conclusion only requires the corresponding Lipschitz sensitivity $\|\mathcal{J}_{t-1\to0}\beta_t\|$ to be larger for earlier, high-leverage timesteps under the diffusion schedule. Proposition~\ref{prop:any_vs_all} similarly isolates the structural effect of sparse updates under idealized optimization. These results therefore establish comparative predictions rather than exact training-time bounds; Sec.~\ref{sec:empirical-theory} tests both predictions under nonlinear fine-tuning dynamics.

\paragraph{Early timestep update.} The above result directly indicates that early timesteps should be updated in the RL finetuning of diffusion models, contrary to the conclusions of previous works~\cite{hu2025towards,barcelo2024avoiding}.

\begin{proposition}[Volume Preservation via Temporal Sparsity]
\label{prop:any_vs_all}
 Let $\mathcal{J}_{T \to 0}$ be the total Jacobian of a $T$-step deterministic reverse diffusion process. For a fixed RL optimization budget, a sparse update strategy over a subset $\mathcal{S}$ (where $|\mathcal{S}| = N \ll T$) preserves the probability volume of the prior distribution strictly better than a full update strategy ($\mathcal{S} = \{1, \dots, T\}$), as measured by the log-determinant of the global Jacobian.
\end{proposition}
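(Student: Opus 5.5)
We model the deterministic DDIM reverse process as a composition of maps $\mathbf{x}_{t-1} = f_t(\mathbf{x}_t)$, so that the global map is $F = f_1 \circ \cdots \circ f_T$. Its total Jacobian factorizes by the chain rule as $\mathcal{J}_{T\to 0} = \prod_{t=1}^{T} J_t$, where $J_t = \partial f_t/\partial \mathbf{x}_t$. Volume change of the prior under $F$ is therefore measured by
\begin{equation}
\log\left|\det \mathcal{J}_{T\to 0}\right| = \sum_{t=1}^{T} \log\left|\det J_t\right| .
\end{equation}
This additivity over timesteps is the structural fact the whole argument rests on. We take the pretrained map $F_\phi$ as the reference, and we quantify volume preservation by the deviation
\begin{equation}
\Delta = \left|\log|\det \mathcal{J}^{\theta}_{T\to 0}| - \log|\det \mathcal{J}^{\phi}_{T\to 0}|\right| .
\end{equation}
The claim then becomes that $\Delta_{\mathcal{S}} < \Delta_{\text{full}}$.

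\emph{Step 1: first-order perturbation of each factor.} An RL update changes the noise prediction by $\delta\boldsymbol{\epsilon}_t$ only for $t\in\mathcal{S}$. Differentiating the DDIM step gives $J_t^\theta = J_t^\phi + \beta_t\,\nabla_{\mathbf{x}}\delta\boldsymbol{\epsilon}_t$, where $\beta_t$ is the DDIM coefficient on $\boldsymbol{\epsilon}$ (the same coefficient that appears in Proposition~\ref{prop:control_authority}). Using $\log\det(A+E) = \log\det A + \mathrm{tr}(A^{-1}E) + O(\|E\|^2)$, each updated step contributes
\begin{equation}
\delta_t = \beta_t\,\mathrm{tr}\!\left((J_t^\phi)^{-1}\nabla_{\mathbf{x}}\delta\boldsymbol{\epsilon}_t\right),
\end{equation}
and non-updated steps contribute exactly zero. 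Hence $\Delta_{\mathcal{S}} \approx \left|\sum_{t\in\mathcal{S}}\delta_t\right|$.

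\emph{Step 2: formalize the fixed budget and compare.} We read ``fixed RL optimization budget'' as follows. Both strategies must achieve the same reward improvement, or equivalently the same terminal displacement $\|\delta\mathbf{x}_0\|$. Separately, each updated timestep receives an update of comparable magnitude, $\|\nabla_{\mathbf{x}}\delta\boldsymbol{\epsilon}_t\|\le \eta$, since the same optimizer and learning rate are applied per step. Under this reading, the full strategy accumulates $T$ nonzero terms $\delta_t$ while the sparse one accumulates only $N$.

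To compare the sums, we treat the $\delta_t$ as having generically nonzero mean. The trace terms are driven by a shared reward gradient, so they tend to be sign-aligned; this is the mechanism behind expansion and contraction, that is, mode collapse. Under this treatment the full sum scales like $T\bar\delta$ and the sparse sum like $N\bar\delta$. If instead the $\delta_t$ are treated as independent zero-mean terms, the sums scale like $\sqrt{T}$ and $\sqrt{N}$, giving the same strict ordering for $N\ll T$. Either way $\Delta_{\mathcal{S}}<\Delta_{\text{full}}$, and Proposition~\ref{prop:control_authority} ensures that a well-chosen sparse set still retains enough reach to meet the reward target.

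\emph{Main obstacle.} The hard step is justifying that the comparison is fair, i.e. that ``fixed budget'' does not let the sparse strategy concentrate larger per-step updates that would inflate each $\delta_t$. I would handle this with the per-step magnitude bound $\eta$ and the first-order regime $\|E\|$ small. As the text after Proposition~\ref{prop:control_authority} concedes, the result is then a comparative statement under idealized optimization rather than an exact bound. Strictness additionally requires at least one $\delta_t$ with $t\notin\mathcal{S}$ to be nonzero. I would assume this genericity condition explicitly, namely that the gradient has a nonzero trace component at that step.
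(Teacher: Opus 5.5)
Your skeleton is the paper's: the additive decomposition $\log|\det\mathcal{J}_{T\to0}|=\sum_t\log|\det J_t|$, frozen steps keeping their reference contribution, and a per-step deviation on $\mathcal{S}$. Your first-order term $\beta_t\,\mathrm{tr}((J_t^\phi)^{-1}\nabla_{\mathbf{x}}\delta\epsilon_t)$ is the discrete counterpart of the paper's Liouville/divergence step. The gap is in Step 2, which is where strictness has to come from.

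The missing ingredient is a sign condition on every updated step. The paper posits that the RL-tuned score acts as a sink, $\mathrm{div}(\tilde\epsilon_t^\theta)<\mathrm{div}(\tilde\epsilon_t^{ref})$, so $\log|\det J_t^\theta|<\log|\det J_t^{ref}|$, i.e.\ $\delta_t<0$ for all $t$. With that, and with the same $J_t^\theta$ on $\mathcal{S}$ under both strategies, the full-update log-determinant equals the sparse one plus $\sum_{t\notin\mathcal{S}}\delta_t<0$. The strict inequality is then immediate.

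Your substitutes do not give it. The zero-mean branch orders only typical magnitudes: $\sqrt{T}$ versus $\sqrt{N}$ describes spread, not each realization. The genericity condition ``some $\delta_t\neq0$ with $t\notin\mathcal{S}$'' is also insufficient. Extra terms whose sign is opposite to $\sum_{t\in\mathcal{S}}\delta_t$ cancel inside your absolute value and can make $\Delta_{\mathrm{full}}<\Delta_{\mathcal{S}}$. ``Tend to be sign-aligned'' has to become ``all contract'' for the argument to close.

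Second, your reading of the fixed budget undercuts your own counting. Suppose both strategies must reach the same $\|\delta\mathbf{x}_0\|$. Then the sparse strategy has to enlarge its per-step updates, by roughly $T/N$ when the reaches are comparable. At first order, each step's contribution to $\delta\mathbf{x}_0$ and its $\delta_t$ are both linear in that update, so its $\delta_t$ grow by the same factor. $N$ terms of size $(T/N)\bar\delta$ are then no smaller than $T$ terms of size $\bar\delta$.

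Your comparison of $T\bar\delta$ with $N\bar\delta$ silently uses the same $\bar\delta$ in both. The bound $\|\nabla_{\mathbf{x}}\delta\epsilon_t\|\le\eta$ does not enforce that, because it is only an upper bound and the full strategy may run well below it. To obtain the ordering, you must hold the per-step update fixed across strategies, not the terminal outcome. That is what the paper implicitly does by comparing $J_t^\theta$ with $J_t^{ref}$ step by step. Whether the sparse set still meets the reward target is a separate appeal to Proposition~\ref{prop:control_authority}, not part of this comparison.
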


\begin{remark}[Capacity and Over-parametrization]
\label{rem:capacity}
Proposition \ref{prop:any_vs_all} can be interpreted through the lens of model capacity. Since distinct diffusion timesteps $\mathcal{S}$ specialize in non-overlapping spectral components of the image manifold \cite{Dhariwal2021}, a full update strategy ($\mathcal{S} = \{1, \dots, T\}$) effectively increases the degrees of freedom available to the RL optimizer. By optimizing this over-parameterized "temporal ensemble," the model more easily finds degenerate solutions that satisfy the terminal reward $R$ at the cost of global prior collapse.
\end{remark}

\paragraph{Sparse timestep update.} The above result clearly indicates that selecting sparse timesteps for update provides strong regularization in policy optimization of the diffusion models. In fact, this also provides a tangible explanation for the improved performance of \cite{hu2025towards} when using only late timesteps with additional components of the approach.

\begin{remark}[Synthesis: Temporal Curriculum of Generative Reach]
\label{rem:incremental}
The relationship between generative reach (Prop. \ref{prop:control_authority}) and volume preservation (Prop. \ref{prop:any_vs_all}) implies an optimal curriculum for reward finetuning: an incremental expansion of the active timestep subset $\mathcal{S}$. By initially restricting optimization to a sparse set of high-leverage early timesteps ($t \sim T$), the model achieves global mode discovery with minimal cumulative contraction, only increasing $|\mathcal{S}|$ to refine local manifold details once the distribution has stabilized. Fig.\ref{fig:method}(B) visually illustrates the resulting sparse incremental timestep updates.
\end{remark}

\subsection{Feynman-Kac Path Guidance}
\label{sec:fk}
An important drawback of any policy optimization strategy is the sample inefficiency due to terminal rewards obtained only at $\mathbf{x}_0$. However, the Feynman-Kac theory~\cite{Moral2004,Kappen2005} provides a way out by considering each path's prospect at any given timestep. While previous works have used discrete-time DDIM with FK guidance for inference-time sample improvement~\cite{Skreta2025FK_Correctors,Singhal2025FK_Steering}, here, we devise an approach for training.

\paragraph{Feynman--Kac (FK) view and intermediate potentials.}
Optimizing under the FK guidance requires defining a measure that evaluates a path's reward satisfaction, termed as the potential. Following FK steering \cite{singhal2025general, Skreta2025FK_Correctors}, we introduce a sequence of nonnegative \emph{potentials} $\{G_t\}_{t=1}^{T}$ that provide intermediate weighting of partial trajectories. Concretely, given a trajectory $\tau$, we associate the unnormalized weight
\begin{equation}
    w(\tau)\;=\;\prod_{t=1}^{T} G_t(\mathbf{x}_t, c),
\end{equation}
and define an FK-reweighted path distribution
\begin{equation}
    \tilde{p}_\theta(\tau)\;\propto\;p_\theta(\tau)\,\prod_{t=1}^{T} G_t(\mathbf{x}_t,c),
\end{equation}
where $p_\theta(\tau)$ is the trajectory distribution induced by $\pi_\theta$ and the diffusion prior on $\mathbf{x}_T$.

Rather than re-evaluating a denoised prediction at every step, we use FK-style potentials to \emph{allocate compute across a branching sampling tree}. Starting from an initial state $\mathbf{x}_T$, we generate a set of partial trajectories by branching the reverse diffusion transitions. We then complete each branch to obtain terminal samples $\mathbf{x}_0$ and evaluate the terminal reward $R(\mathbf{x}_0,c)$. The resulting terminal rewards are used to assign a scalar potential to each branch (and, by aggregation, to its intermediate nodes), which allows us to prune branches that are unlikely to yield high reward. Fig.~\ref{fig:method} clearly illustrates the path termination and branching for improving alignment and diversity respectively. We provide the detailed algorithm in the supplementary material.

\vspace{-0.3em}
\subsection{Rare Events and Rarity}
\label{sec:rare-events}
In order to correctly quantify desired shifts in distribution, we formalize the concept of rarity. Given a condition $c$, let $p_\phi(\mathbf{x}_0\mid c)$ denote the pretrained model's distribution induced by the reverse diffusion sampler. We define the rare-event set as:
\begin{equation}
    \mathcal{A}_c(\eta) \;=\; \{\mathbf{x}_0 : R(\mathbf{x}_0,c) \ge \eta\}.
\end{equation}
An event is rare under $p_\phi$ when $\mathbb{P}_{\mathbf{x}_0\sim p_\theta(\cdot\mid c)}(\mathbf{x}_0\in\mathcal{A}_c(\eta))$ is small \cite{singhal2025general, Moral2004, hairer2014improved}.

We refer to a condition $c$ as a \emph{rare condition} when it triggers rare events under the sampler, \ie, when the probability mass over high-reward outcomes is small.
We define the \emph{rarity} of a condition as the rare-event probability,
\begin{equation}
    \rho(c;\eta) \;=\; \mathbb{P}_{\mathbf{x}_0\sim p_\phi(\cdot\mid c)}\!\left(\mathbf{x}_0\in\mathcal{A}_c(\eta)\right).
\end{equation}
Smaller $\rho(c;\eta)$ indicates higher rarity (harder conditions).
For a new model with distribution $p_{\theta}(\mathbf{x}_0\mid c)$ induced by the model $\pi_\theta$, we define rarity and rare conditions with the same event set $\mathcal{A}_c(\eta)$ and threshold $\eta$; only the underlying sampling distribution changes from $p_\phi$ to $p_{\theta}$.

\section{Experiments}

\subsection{Experimental Setup}
\label{sec:exp_setup}
\paragraph{Image Generation.}
For image generation, we use Stable Diffusion (SD) v1.4~\cite{rombach2022highresolution} and fine-tune only the UNet via Low-Rank Adaptation (LoRA)~\cite{hu2022lora} for parameter-efficient adaptation, following DDPO~\cite{black2023ddpo}. We evaluate prompts from three image-domain categories: compositional generation (animals \& actions), attribute binding (colors \& fruits), and spatial relations similar to B2DiffuRL\cite{hu2025towards}. Representative prompts include ``a whale riding a bike'', ``brown banana'', and ``a cup on tshirt''. We use the DDIM sampler~\cite{ddim} with settings consistent with prior RL fine-tuning work~\cite{hu2025towards,black2023ddpo}; unless otherwise stated, we use $T=20$ denoising steps and classifier-free guidance scale $w=5.0$. The image reward is CLIPScore, \ie, the similarity between text and image embeddings measured by CLIP~\cite{pmlr-v139-radford21a}. The full prompt list is reported in the supplementary material.
We employ an incremental training curriculum with four stages using $\mathcal{T}_{\mathrm{sch}}=\{5,10,15,20\}$ progressively increasing the updated timesteps. We train each stage is for 10 iterations. For FK-steered sampling, we use $k=4$ particles per prompt and resample every 5 denoising steps (i.e., $\mathcal{T}_{\mathrm{res}}$ with period 5), with FK scale $\lambda_{\mathrm{FK}}=2.0$.

\paragraph{Vanishing Point (VP) Correction.}
VP experiments use the same image-generation pipeline and sampler settings as the general synthesis task to ensure controlled comparisons. We evaluate prompts focused on long-range geometric consistency (e.g., ``railway tracks'' or ``tunnel scenes with clear perspective cues''). We define the reward as the algebraic consensus of the straight lines converging to distinct vanishing points. We call this reward $VP\ Geometry$. We detect line segments using Line Segment Detection (LSD)~\cite{liao2025convex} and apply a custom line grouping algorithm. Additional algorithmic details are in the supplementary material.


\paragraph{3D Indoor Scene Synthesis.}
We adopt the continuous-domain variant of MiDiffusion~\cite{hu2026mixed} as the base diffusion model and use the 3D-FRONT dataset~\cite{fu20213d}. Following prior 3D scene-generation work~\cite{hu2026mixed,paschalidou2021atiss,tang2024diffuscene}, each scene is represented as an unordered set of objects parameterized by semantic class, 3D position, size, and rotation. We train for 2 different rewards separately. First, we define a custom spatial reward for the prompt ``a scene with a TV stand facing a bed,'' which evaluates both object position and orientation. Second, we use a collision reward\cite{pfaff2025steerablescenegenerationpost} that assigns $+1$ to valid scenes and otherwise applies a penalty equal to the total axis-aligned penetration depth across overlapping ground objects. Additional implementation details are provided in the supplementary material.

\paragraph{Methods Compared.}
Our primary comparison includes the pretrained reference model, DDPO~\cite{black2023ddpo}, its backward-progressive extension B$^2$-DiffuRL~\cite{hu2025towards}, and iADD; all start from the same pretrained model. We additionally evaluate recent alternatives on Template~1: inference-time Annealed Importance Guidance (AIG)~\cite{jena2024elucidatingoptimalrewarddiversitytradeoffs}, DanceGRPO~\cite{xue2025dancegrpounleashinggrpovisual}, and BranchGRPO~\cite{li2025branchgrpostableefficientgrpo}. To test compatibility with group-relative optimization, we also replace iADD's per-sample contrastive signal with GRPO-style groupwise advantage normalization, denoted \emph{iADD+GRPO}.

\vspace{-0.3em}
\subsection{Evaluation Metrics}
\label{sec:eval-protocol-metrics}


\noindent\emph{Alignment--Diversity.}
We quantify alignment using the task-specific reward models defined in Sec.~\ref{sec:exp_setup}. We measure image diversity using the Inception Score (IS), following~\cite{hu2025towards}. We additionally report area-under-curve (AUC) metrics to summarize the trade-off between these objectives across operating points. Higher AUC indicates a better overall balance between alignment and diversity. For Template~1, we also report intra-prompt LPIPS: for each prompt, we generate 24 images and average all pairwise LPIPS distances. This complementary metric directly measures visual variation among samples conditioned on the same prompt.

\paragraph{Rarity (Rare-Event Probability).}
\label{sec:exp_rarity}
To quantify performance on rare or difficult conditions, we first fix the rare-condition set $\mathcal{C}_{\mathrm{rare}}$ from the frozen pretrained model and keep this split identical for all methods. Following Sec.~\ref{sec:rare-events}, rarity for each condition is defined by the model-specific rare-event probability under the same event set $\mathcal{A}_c(\eta)$ and threshold $\eta$.

We only report the aggregated rarity score $\mathrm{Rarity}(m)=\frac{1}{|\mathcal{C}_{\mathrm{rare}}|}\sum_{c\in\mathcal{C}_{\mathrm{rare}}}\hat{\rho}_m(c;\eta)$, where $\hat{\rho}_m(c;\eta)$ is the empirical rare-event probability for method $m$ (estimated from $K$ samples per condition; $K=24$ in image experiments), consistent with Sec.~\ref{sec:rare-events}. Following the Q3 rule, $\eta$ is set to the 75th-percentile reward of the frozen pretrained reference distribution for each task, which yields $\eta=0.34$ for image generation and $\eta=0.64$ for vanishing-point correction. Higher $\mathrm{Rarity}(m)$ indicates higher rare-event success probability on the fixed rare-condition set.

\paragraph{3D-Specific Metrics.}
We report Collision ($\downarrow$), using the same penetration-based formulation as the collision reward defined in Sec.~\ref{sec:exp_setup}, and Class Label KL Divergence (CKL, $\downarrow$). The CKL reference distribution is computed from 3D-FRONT~\cite{fu20213d} test scenes that satisfy the spatial reward constraint. To ensure a fair comparison, we evaluate all methods at a common convergence point where they reach the same mean reward threshold during training.

\paragraph{Evaluation Protocol.} We evaluate models at regular intervals across 45 base prompts, generating 1,080 images per checkpoint using deterministic seeds to ensure robust measurement. To guarantee a fair comparison, we report each method at its optimal operating point—defined as the checkpoint closest to the ideal (1,1) on the Reward-Inception Score Pareto frontier.

\subsection{Qualitative Assessment}

We qualitatively compare our method against SD, DDPO, and B$^2$-DiffuRL across all three tasks. Unless otherwise noted, all visualizations are generated from the best operating point of each model after training.

Figure~\ref{fig:image_qualitative_results} shows results on common and rare prompts. On common prompts, DDPO and B$^2$-DiffuRL improve alignment, but our method produces sharper details and more realistic textures while preserving prompt fidelity. On rare prompts, DDPO and B$^2$-DiffuRL either fail to satisfy the prompt or produce cartoonish outputs, whereas our method maintains both semantic alignment and visual quality. These observations are consistent with the quantitative reward--diversity trends reported in Table. ~\ref{tab:results}. We report more common and rare prompts in supplementary.

\begin{figure}[t]
    \centering
    \includegraphics[width=\linewidth, trim={0 120 0 130}, clip]{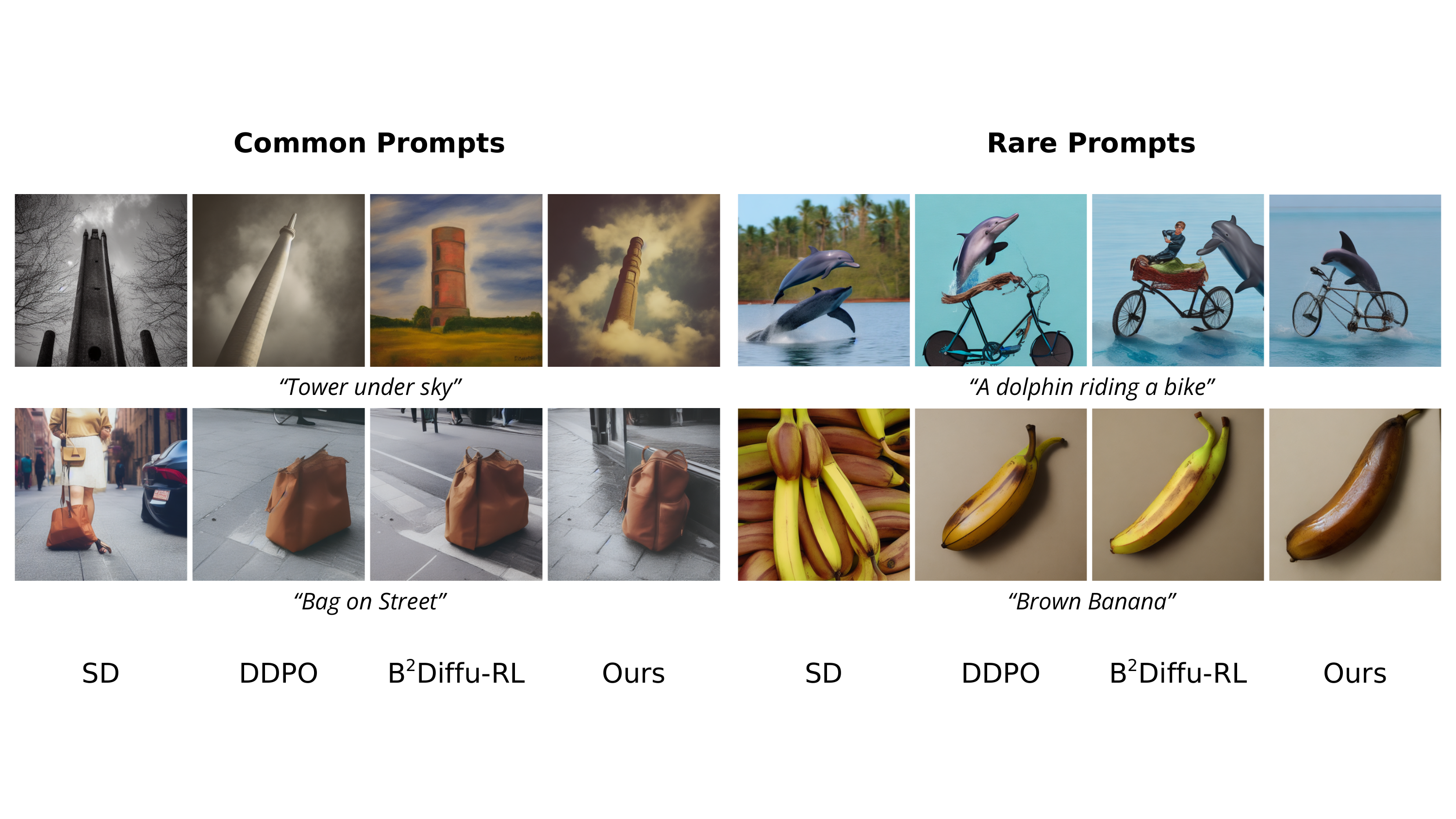}
    \caption{\textbf{Qualitative comparison on common and rare prompts. }Our method better preserves prompt alignment while maintaining realistic visual quality, with larger gains on rare prompts.}
    \label{fig:image_qualitative_results}
\end{figure}

\begin{figure}[h]
    \centering
    \begin{subfigure}[t]{0.46\linewidth}
        \centering
        \includegraphics[width=\linewidth, trim={0 0 0 0}, clip]{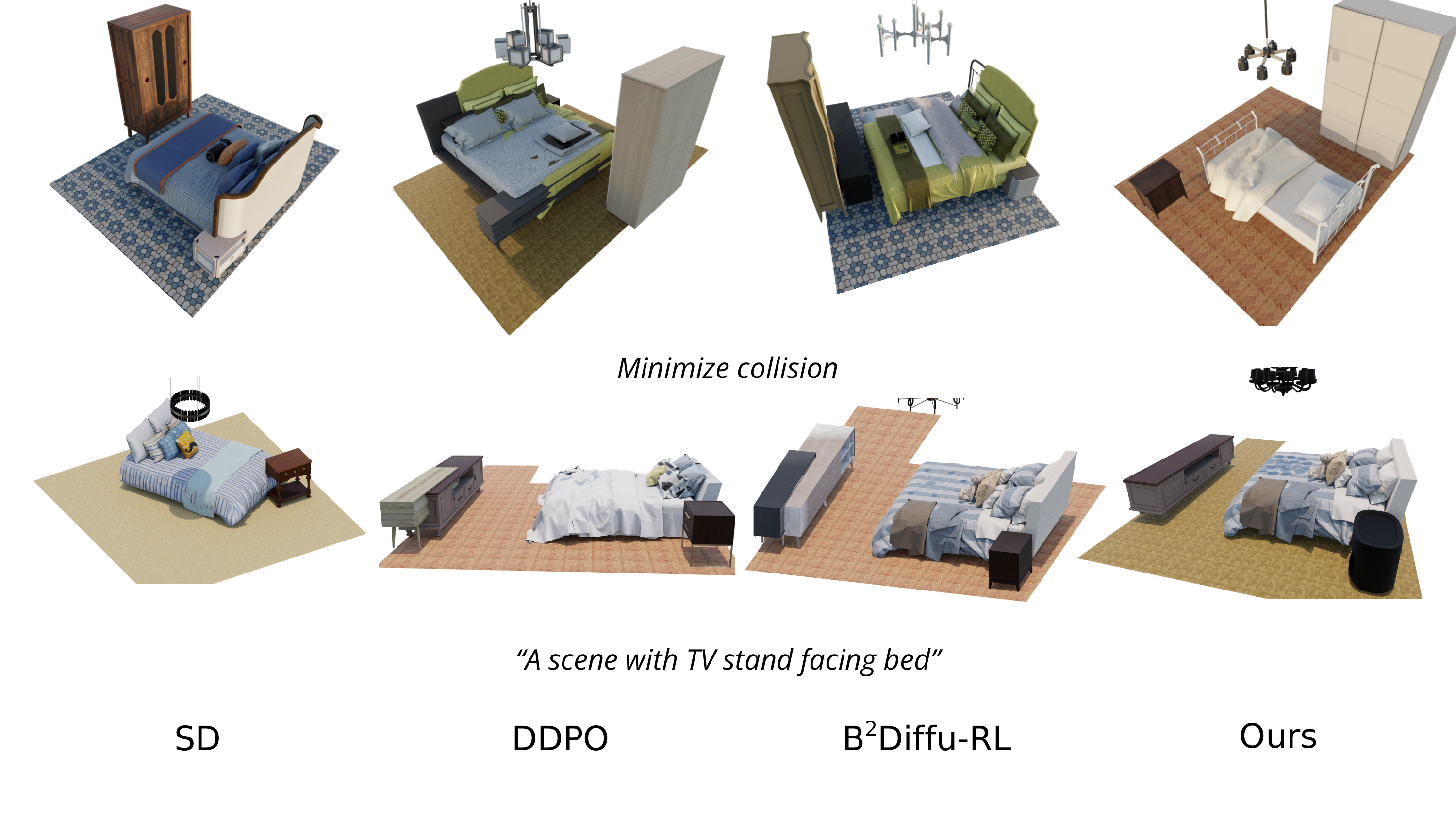}
        \caption{}
        \label{fig:3d_qualitative_results}
    \end{subfigure}\hfill
    \begin{subfigure}[t]{0.53\linewidth}
        \centering
        \includegraphics[width=\linewidth, trim={0 60 0 60}, clip]{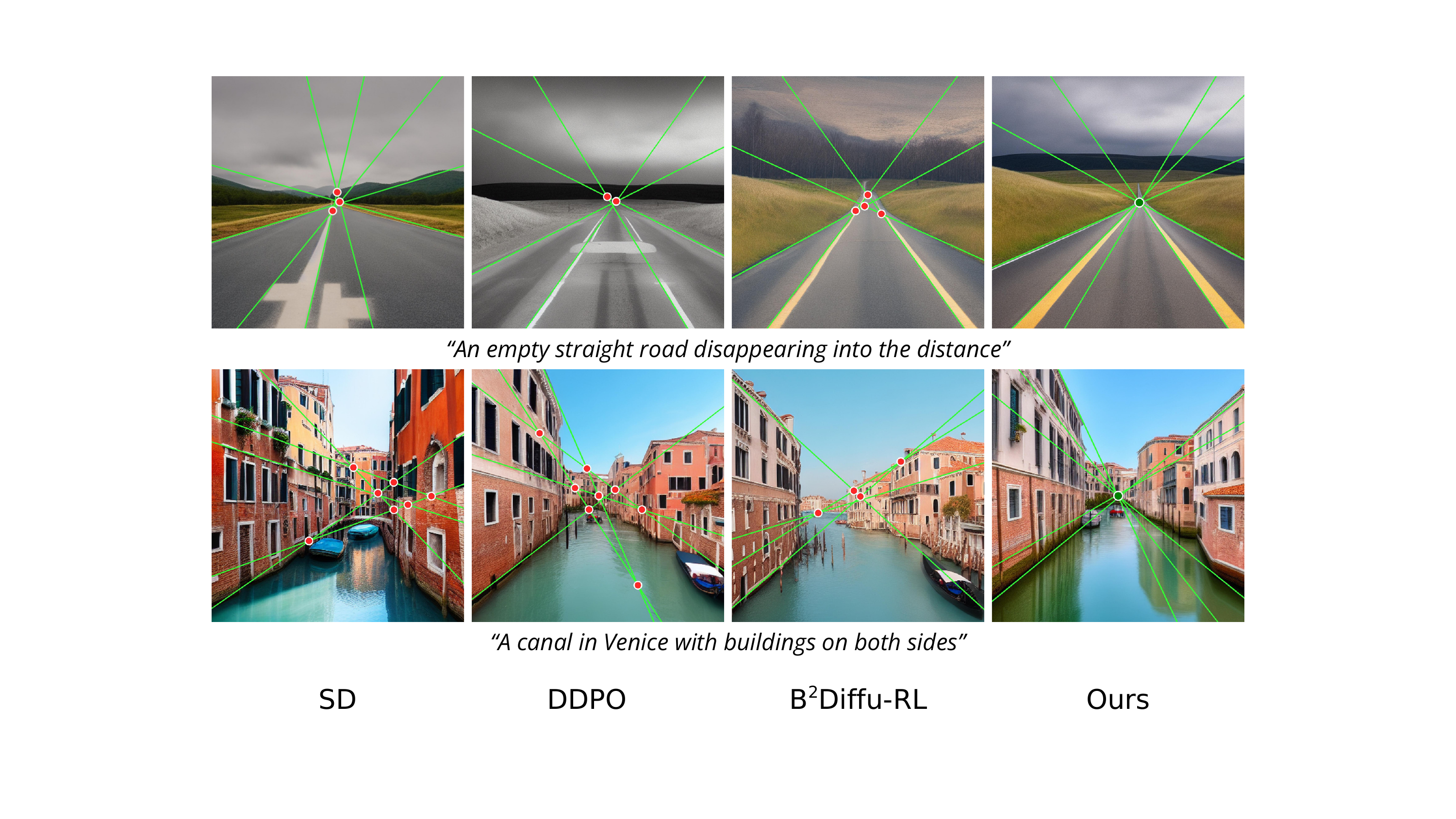}
        \caption{}
        \label{fig:vp_qualitative_results}
    \end{subfigure}
    \vspace{-0.5em}
    \caption{\textbf{Qualitative results.} \textbf{(a) 3D scene synthesis.} Top: collision reward. Bottom: spatial reward (``TV stand facing bed''). DDPO and B$^2$-DiffuRL satisfy constraints but introduce collisions; our method avoids them. \textbf{(b) Vanishing-point correction.} Red: off-vanishing intersections; green: vanishing point. DDPO and B$^2$-DiffuRL shows weaker convergence and desaturation; our method preserves quality while enforcing perspective. (Better viewed zoomed.)}
    \label{fig:qualitative_3d_vp}
    \vspace{-5mm}
\end{figure}

Figure~\ref{fig:vp_qualitative_results} shows that our method yields line structures that converge more consistently to a single vanishing point, while DDPO and B$^2$-DiffuRL often produce weaker or inconsistent convergence. In particular, the first-row DDPO example loses texture fidelity while still failing to enforce clean perspective geometry. Our method preserves texture and local details while satisfying the perspective constraint. This can be attributed to our incremental training schedule with FK-based reweighting and FK branching, which propagates geometric rewards across timesteps without collapsing visual quality.

Figure~\ref{fig:3d_qualitative_results} presents qualitative 3D indoor scene results. For 3D, we evaluate a prompt with explicit geometric constraints ("a scene with TV stand facing bed") (see more prompts in supplementary). Even when optimizing rewards not directly tied to collision minimization, our method better preserves non-colliding layouts than DDPO and B$^2$-DiffuRL, as visible in Figure~\ref{fig:3d_qualitative_results} in the 2nd row and consistent with the observation from Figure~\ref{fig:3d_collision_rate_curve} which reports the corresponding reward--collision trade-off. This shows reward hacking in iADD is lower than other baselines.

\paragraph{Human Evaluation.}
We evaluate perceptual quality and prompt fidelity with a human-preference study over image Templates~1--3. Following the operating-point selection in Sec.~\ref{sec:eval-protocol-metrics}, we sample one image per method for each of 30 prompts and ask 40 independent raters to choose the best output in a blind comparison among the four methods (ours, B$^2$-DiffuRL, DDPO, and the pretrained baseline), yielding 4,800 total judgments. As shown in Figure~\ref{fig:human-eval}, our method is preferred more frequently than DDPO and B$^2$-DiffuRL.

\subsection{Quantitative Results}

\begin{table}[t]
\centering
\caption{\textbf{Performance comparison across metrics and modalities.} Evaluated at their best operating points, our method (iADD) consistently achieves the best alignment-diversity trade-off across all three tasks compared to DDPO and $B^2$-DiffuRL. Notably, iADD maintains higher diversity (better IS and lower collision rates) at comparable or higher reward levels, while also demonstrating state-of-the-art robustness on rare prompts (Rarity). Arrows indicate the desired direction.}
\label{tab:results}
\resizebox{\textwidth}{!}{%
\begin{tabular}{l|cccc|cc|ccc}
\toprule
\textbf{Method}
  & \multicolumn{4}{c|}{\textbf{Image Generation}}
  & \multicolumn{2}{c|}{\textbf{3D Indoor Scene}}
  & \multicolumn{3}{c}{\textbf{Vanishing Point Correction}} \\
\cmidrule(lr){2-5}\cmidrule(lr){6-7}\cmidrule(l){8-10}
& \textbf{Reward$\uparrow$} & \textbf{IS$\uparrow$} & \textbf{Rarity$\uparrow$} & \textbf{AUC$\uparrow$}
&  \textbf{Collision$\downarrow$} & \textbf{CKL$\downarrow$}
& \textbf{Reward$\uparrow$} & \textbf{IS$\uparrow$} & \textbf{Rarity$\uparrow$} \\
\midrule
Pretrained (SD)     & 0.3272 & 1.3514 & - & - & 53.87 & 30.48\% & 0.6105 & 1.2735 & - \\
\midrule
DDPO                & 0.3417 & 1.2786 & 24.81\% & 53.8\% & 63.39\% & 7.726\% & 0.6815 & 1.2600 & 21.33\% \\
B$^2$-DiffuRL              & 0.3452 & 1.2728 & 24.22\% & 58.7\% & 64.06\% & 6.278\% & 0.6486 & 1.2550 & 31.11\% \\
\midrule
\textbf{Ours} & \textbf{0.3553} & \textbf{1.3241} & \textbf{66.85\%} & \textbf{77.7\%} & \textbf{59.20\%} &  \textbf{6.13\%} & \textbf{0.7763} & \textbf{1.2881} & \textbf{95.11\%} \\
\bottomrule
\end{tabular}
}
\end{table}

\begin{table}[t]
\centering
\caption{\textbf{Comparison with recent guidance and GRPO methods on Template~1 (animal activities).} Results use the same prompt set, with trainable methods evaluated at matched checkpoints. iADD provides the strongest standalone diversity and reward--diversity AUC, while iADD+GRPO obtains the highest alignment, rarity, and intra-prompt LPIPS.}
\label{tab:grpo_comparison}
\small
\begin{tabular*}{\textwidth}{@{\extracolsep{\fill}}lccccc@{}}
\toprule
\textbf{Method} & \textbf{CLIP$\uparrow$} & \textbf{IS$\uparrow$} & \textbf{Rarity$\uparrow$} & \textbf{AUC$\uparrow$} & \textbf{LPIPS$\uparrow$} \\
\midrule
AIG (inference) & 0.3526 & 1.3147 & 17.50\% & -- & 0.7531 \\
\midrule
iADD & 0.3830 & \textbf{1.3035} & 78.33\% & \textbf{83.7\%} & 0.7693 \\
DanceGRPO & 0.3886 & 1.2750 & 61.67\% & 39.3\% & 0.7624 \\
BranchGRPO & 0.3854 & 1.2917 & 56.67\% & 43.73\% & 0.7447 \\
iADD+GRPO & \textbf{0.4126} & 1.2942 & \textbf{88.33\%} & 73.3\% & \textbf{0.7778} \\
\bottomrule
\end{tabular*}
\end{table}


\paragraph{Alignment--Diversity Trade-off.}
Figure~\ref{fig:diversity_vs_reward} plots Inception Score (IS) versus CLIPScore  across training for each method. As expected, pushing alignment tends to reduce diversity/quality; however, DDPO and B$^2$-DiffuRL show substantially larger IS degradation at comparable reward levels. In contrast, our method maintains a better reward--IS tradeoff. This is metrically reported in Table~\ref{tab:results}.

\paragraph{Recent guidance and GRPO baselines.}
Table~\ref{tab:grpo_comparison} strengthens this conclusion against more recent alternatives. DanceGRPO and BranchGRPO reach marginally higher CLIPScore than standalone iADD, but incur lower IS, rarity, LPIPS, and substantially lower reward--diversity AUC. AIG preserves high IS at inference time but has lower alignment and rare-event success. Importantly, combining our FK-guided incremental curriculum with GRPO raises CLIPScore from 0.3886 for DanceGRPO to 0.4126 and rarity from 61.67\% to 88.33\%, while also improving LPIPS. Thus, iADD's trajectory guidance and sparse curriculum are complementary to the policy optimizer rather than tied to a specific DDPO objective.

\paragraph{Rarity.}
We report rarity in Table~\ref{tab:results} for both image generation and vanishing-point correction. iADD achieves the strongest rarity performance among all methods. Figure~\ref{fig:common_rare_barplot} shows significant improvement of iADD on rare prompts.

\paragraph{3D Indoor Scene Synthesis.}
Table~\ref{tab:results} shows that, under the same reward target ("a scene with TV stand facing bed") and stopping criterion, DDPO and B$^2$-DiffuRL produce higher collision rates than our method. Figure~\ref{fig:3d_collision_rate_curve} further illustrates this trend across training: competing methods obtain reward gains at the cost of increased collisions, whereas our method maintains lower collision while preserving competitive reward.

\subsection{Ablation Studies}
We conduct controlled ablations to isolate important components for better alignment--diversity--rarity. We present our findings in Table ~\ref{tab:component_disassembly}.

\paragraph{Timestep Selection Ablation.}
\label{sec:empirical-theory}
We first compare standard DDPO with 20-step updates \textit{Vanilla DDPO} against a sparse-update variant that updates any five timesteps \textit{DDPO (any N)}, $N=5$ matched at comparable reward levels. As shown in Fig.~\ref{fig:ablation_diversity_bar}, \textit{DDPO (any N)} achieves higher diversity, supporting Prop.~\ref{prop:any_vs_all}.

Next, we compare branching with late-$N$ updates against branching with any-$N$ updates. Updating any $N$ timesteps consistently yields higher IS than late-$N$ updates (Fig.~\ref{fig:ablation_diversity_bar}), suggesting that the late-timestep strategy used in B$^2$-DiffuRL~\cite{hu2025towards} is suboptimal, as predicted by Prop.~\ref{prop:control_authority}.

We also measure the quantities appearing in the theory during fine-tuning. Figure~\ref{fig:theory_empirical_validation} shows that the cumulative output perturbation $\|\delta x_0\|$ remains larger for uniformly sampled timesteps than for late-only updates throughout training, directly matching Prop.~\ref{prop:control_authority}. At matched compute (stage 27), sparse updates improve the full-generator log-volume measure by $1605$ nats over dense updates ($-25{,}963$ versus $-27{,}568$), with a $437$ SEM for the paired difference across 24 prompt--seed pairs ($3.7$ standard errors). This provides direct empirical support for the volume-preservation prediction of Prop.~\ref{prop:any_vs_all}, beyond the downstream IS measurements.

\begin{figure}[t]
\centering
\begin{minipage}[c]{0.56\textwidth}
    \centering
    \includegraphics[width=\linewidth,trim={15 0 0 0},clip]{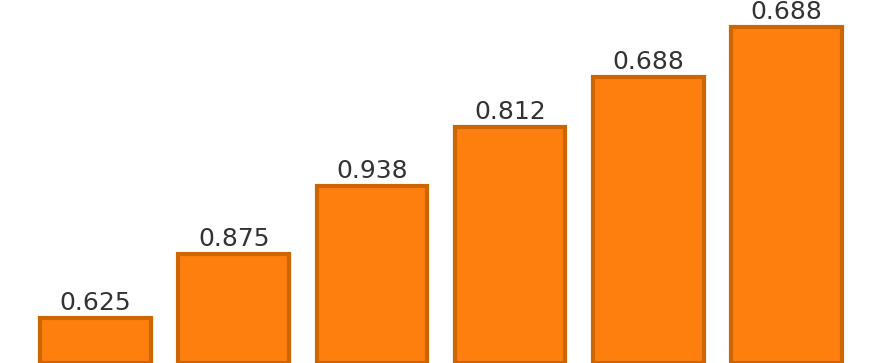}
\end{minipage}\hfill
\begin{minipage}[c]{0.40\textwidth}
    \centering
    \small
    \renewcommand{\arraystretch}{1.15}
    \begin{tabular}{lcc}
    \toprule
    Method & $\log |\det J_{T\to0}|$ & SEM \\
    \midrule
    Dense & $-27{,}568$ & 658 \\
    \textbf{Sparse} & $\mathbf{-25{,}963}$ & 584 \\
    \bottomrule
    \end{tabular}
\end{minipage}
\caption{\textbf{Empirical validation of the timestep analysis.} \textbf{Left:} cumulative $\|\delta x_0\|$ advantage of uniform over late-only updates across training stages (Prop.~\ref{prop:control_authority}). \textbf{Right:} full-Jacobian log-determinant for dense and sparse updates; the less-negative value indicates better volume preservation under sparse optimization (Prop.~\ref{prop:any_vs_all}).}
\label{fig:theory_empirical_validation}
\end{figure}

\paragraph{Ablations on Incremental Strategy.}
Motivated by the gains from any-$N$ updates, we introduce an incremental (curriculum-style) timestep schedule. We compare \textit{FK only} against \textit{FK + Incremental}. Figure~\ref{fig:common_rare_barplot} shows improved alignment on both common and rare prompts, while Fig.~\ref{fig:ablation_diversity_bar} shows improved diversity. We attribute this behavior to stronger reward propagation into earlier denoising steps, which mitigates sparse-reward optimization and encourages broader exploration.

\paragraph{Ablations on FK and FK Branching.}
We evaluate the effect of Feynman--Kac reweighting (FK) and FK Branching on alignment and diversity. Figure~\ref{fig:common_rare_barplot} shows that both FK-based variants improve alignment on common and rare prompts compared with DDPO, when evaluated at a matched diversity target (IS $\approx 1.28$). The CLIPScore gap between \textit{FK only} and \textit{FK + Branching} is modest; however, \textit{FK + Branching} consistently attains higher diversity (Fig.~\ref{fig:ablation_diversity_bar}). This is consistent with the intuition that branching replicates high-potential particles and improves mode coverage during training~\cite{hu2025towards}.
Moreover, the CLIPScore increase from $\textit{DDPO+Incremental}$ to $\textit{Ours (FK Only)}$ in Fig.~\ref{fig:common_rare_barplot} further indicates that Feynman--Kac reweighting improves $Rarity$.
The component study also rules out a purely additive explanation: naive combinations such as branching with late-$N$ updates or FK alone improve IS by only about $0.016$, whereas the complete incremental FK-branching recipe produces the substantially larger gain shown in Fig.~\ref{fig:ablation_diversity_bar}.

\begin{figure}[t]
    \centering
    \begin{subfigure}[t]{0.32\linewidth}
        \centering
        \includegraphics[width=\linewidth,trim=8 8 8 8,clip]{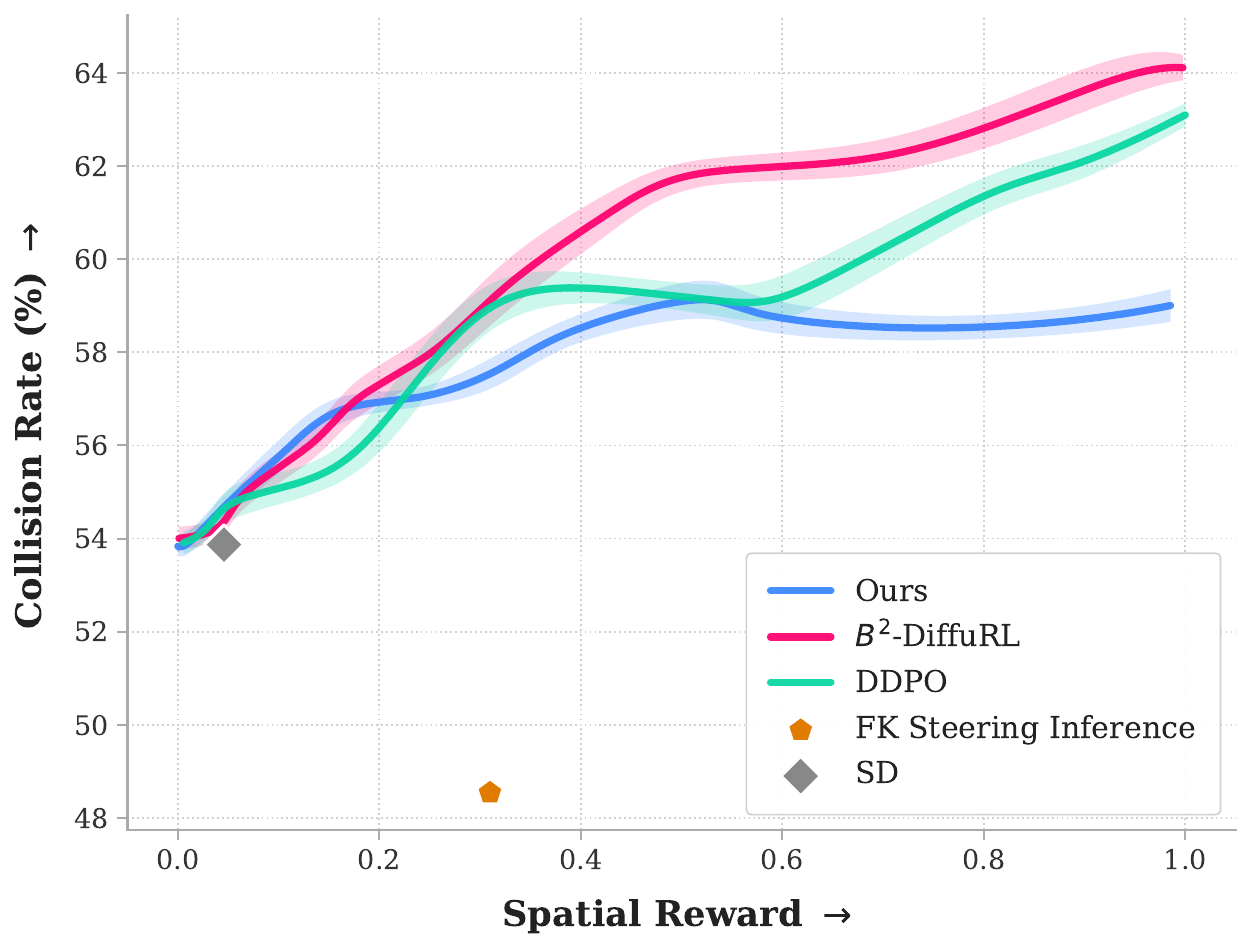}
        \caption{\textbf{Spatial reward vs. collision.} Training with spatial reward only (``TV stand facing bed''). Our method achieves fewer collisions at matched reward levels.}
        \label{fig:3d_collision_rate_curve}
    \end{subfigure}\hfill
    \begin{subfigure}[t]{0.32\linewidth}
        \centering
        \includegraphics[width=\linewidth]{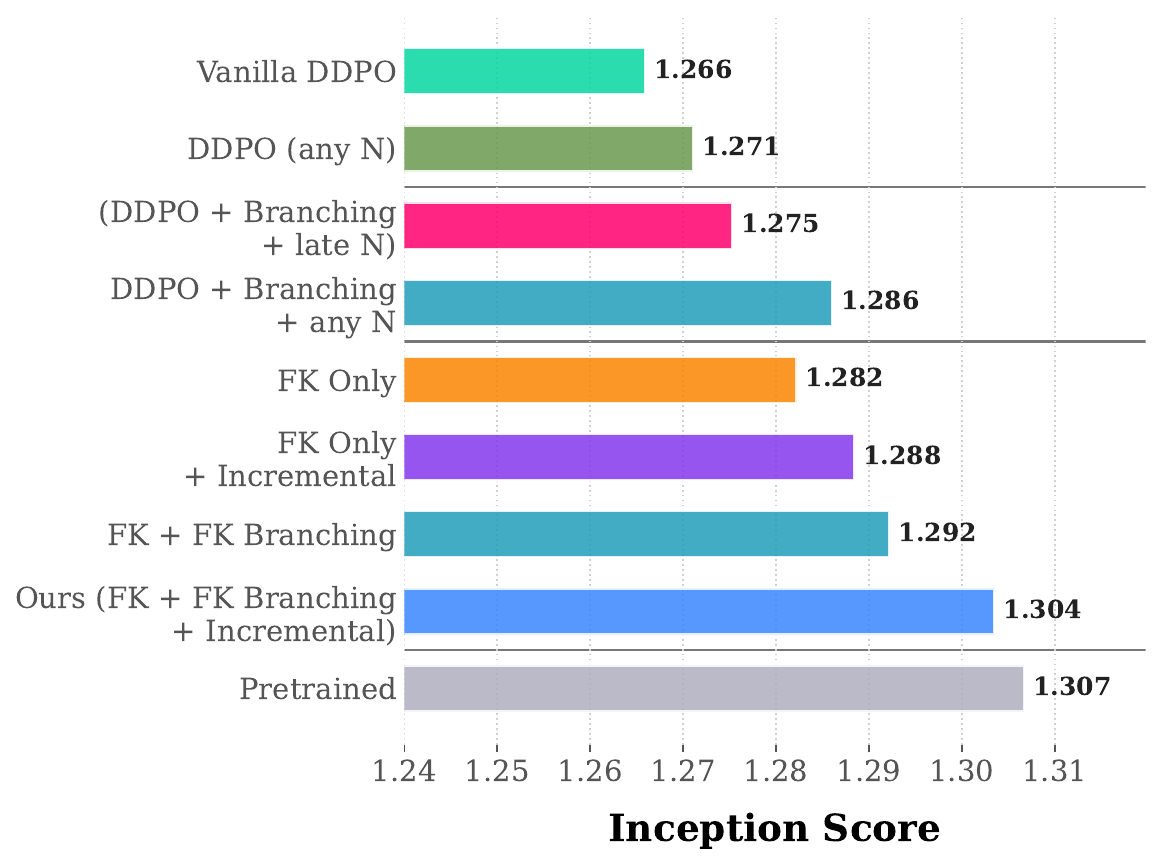}
        \caption{\textbf{Diversity ablation (IS).} Incremental scheduling and FK branching improve diversity over DDPO variants, with our method achieving the best score.}
        \label{fig:ablation_diversity_bar}
    \end{subfigure}\hfill
    \begin{subfigure}[t]{0.32\linewidth}
        \centering
        \includegraphics[width=\linewidth]{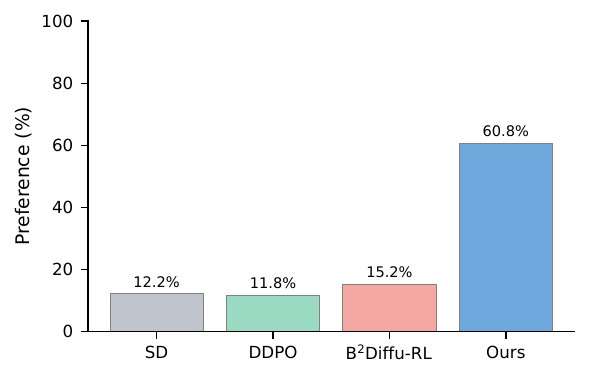}
        \caption{\textbf{Human preferences.} Pairwise ratings from 40 raters on Templates~1--3. Our method is preferred over DDPO and B$^2$-DiffuRL.}
        \label{fig:human-eval}
    \end{subfigure}
    \caption{\textbf{Additional quantitative results.} Our method achieves fewer collisions at matched spatial reward in 3D scene synthesis, improves diversity in the ablation study, and is preferred by human raters over DDPO and B$^2$-DiffuRL.}
    \label{fig:combined_metrics}
\end{figure}
\section{Conclusions}
Our work tackles the joint improvement of alignment, diversity, and rare-event success in diffusion models through reinforcement learning. The theoretical analysis and direct measurements of output sensitivity and Jacobian log-volume show why timestep selection must be treated carefully during reward optimization. This motivates our incremental sparse-update curriculum, while discrete Feynman--Kac pruning and branching improve the quality of the trajectories used for training. Across image generation, vanishing-point correction, and 3D scene synthesis, iADD improves the reward--diversity trade-off and rare-condition performance. Comparisons with AIG, DanceGRPO, and BranchGRPO further show that the proposed trajectory guidance and timestep curriculum remain effective beyond the original DDPO objective and can be composed with GRPO-style optimization.

%
%
\bibliographystyle{splncs04}
\bibliography{main}

\begin{thebibliography}{10}
\providecommand{\url}[1]{\texttt{#1}}
\providecommand{\urlprefix}{URL }
\providecommand{\doi}[1]{https://doi.org/#1}

\bibitem{andrychowicz2018hindsightexperiencereplay}
Andrychowicz, M., Wolski, F., Ray, A., Schneider, J., Fong, R., Welinder, P.,
  McGrew, B., Tobin, J., Abbeel, P., Zaremba, W.: Hindsight experience replay
  (2018), \url{https://arxiv.org/abs/1707.01495}

\bibitem{barcelo2024avoiding}
Barcel{\'o}, R., Alc{\'a}zar, C., Tobar, F.: Avoiding mode collapse in
  diffusion models fine-tuned with reinforcement learning. arXiv preprint
  arXiv:2410.08315  (2024)

\bibitem{Behjoo2025}
Behjoo, H., Chertkov, M.: Harmonic path integral diffusion. IEEE Access
  \textbf{13},  42196--42213 (2025). \doi{10.1109/ACCESS.2025.3548396}

\bibitem{black2023ddpo}
Black, K., Janner, M., Du, Y., Kostrikov, I., Levine, S.: Training diffusion
  models with reinforcement learning. arXiv preprint arXiv:2305.13301  (2023)

\bibitem{Chen2024forcing}
Chen, B., Marti~Monso, D., Du, Y., Simchowitz, M., Tedrake, R., Sitzmann, V.:
  Next-token prediction meets full-sequence diffusion. In: Advances in Neural
  Information Processing Systems (2024)

\bibitem{Chen2018neural}
Chen, R.T.Q., Rubanova, Y., Bettencourt, J., Duvenaud, D.K.: Neural ordinary
  differential equations. In: Advances in Neural Information Processing Systems
  (2018)

\bibitem{Choi2022}
Choi, J., Lee, J., Shin, C., Kim, S., Kim, H., Yoon, S.: Perception prioritized
  training of diffusion models. In: European Conference on Computer Vision
  (2022)

\bibitem{Christiano2017}
Christiano, P.F., Leike, J., Brown, T.B., Martic, M., Legg, S., Amodei, D.:
  Deep reinforcement learning from human preferences. In: Advances in Neural
  Information Processing Systems (2017)

\bibitem{Clark2023}
Clark, K., Vicol, P., Swersky, K., Fleet, D.J.: Directly fine-tuning diffusion
  models on differentiable rewards. In: International Conference on Learning
  Representations (2024), arXiv:2309.17400

\bibitem{DeBortoli2022}
De~Bortoli, V.: Convergence of denoising diffusion models under the manifold
  hypothesis. arXiv preprint, arXiv:2208.05314  (2022)

\bibitem{Moral2004}
Del~Moral, P.: Feynman-kac formulae. In: Feynman-Kac Formulae: Genealogical and
  Interacting Particle Systems with Applications, pp. 47--93. Springer (2004)

\bibitem{devidze2022explorationguided}
Devidze, R., Kamalaruban, P., Singla, A.: Exploration-guided reward shaping for
  reinforcement learning under sparse rewards. In: Oh, A.H., Agarwal, A.,
  Belgrave, D., Cho, K. (eds.) Advances in Neural Information Processing
  Systems (2022), \url{https://openreview.net/forum?id=W7HvKO1erY}

\bibitem{Dhariwal2021}
Dhariwal, P., Nichol, A.: Diffusion models beat gans on image synthesis. In:
  Advances in Neural Information Processing Systems (2021)

\bibitem{fan2023dpok}
Fan, Y., Watkins, O., Du, Y., Liu, H., Ryu, M., Boutilier, C., Abbeel, P.,
  Ghavamzadeh, M., Lee, K., Lee, K.: Dpok: Reinforcement learning for
  fine-tuning text-to-image diffusion models (2023)

\bibitem{fu20213d}
Fu, H., Cai, B., Gao, L., Zhang, L.X., Wang, J., Li, C., Xun, Z., Sun, C., Jia,
  R., Zhao, B., Zhang, H.: 3d-front: 3d furnished rooms with layouts and
  semantics. In: Proceedings of the IEEE/CVF International Conference on
  Computer Vision. pp. 10933--10942 (2021)

\bibitem{futurefu20203dfuture3dfurnitureshape}
Fu, H., Jia, R., Gao, L., Gong, M., Zhao, B., Maybank, S., Tao, D.: 3d-future:
  3d furniture shape with texture (2020),
  \url{https://arxiv.org/abs/2009.09633}

\bibitem{gokmen2025inferring}
Gokmen, A.B., Chhatkuli, A., Van~Gool, L., Paudel, D.P.: Inferring
  compositional 4d scenes without ever seeing one. arXiv preprint
  arXiv:2512.05272  (2025)

\bibitem{gupta2022unpackingrewardshapingunderstanding}
Gupta, A., Pacchiano, A., Zhai, Y., Kakade, S.M., Levine, S.: Unpacking reward
  shaping: Understanding the benefits of reward engineering on sample
  complexity (2022), \url{https://arxiv.org/abs/2210.09579}

\bibitem{hairer2014improved}
Hairer, M., Weare, J.: Improved diffusion monte carlo. Communications on Pure
  and Applied Mathematics  \textbf{67}(12),  1995--2021 (2014)

\bibitem{hare2019dealingsparserewardsreinforcement}
Hare, J.: Dealing with sparse rewards in reinforcement learning (2019),
  \url{https://arxiv.org/abs/1910.09281}

\bibitem{hendrycks2023gaussianerrorlinearunits}
Hendrycks, D., Gimpel, K.: Gaussian error linear units (gelus) (2023),
  \url{https://arxiv.org/abs/1606.08415}

\bibitem{ho2022imagenvideohighdefinition}
Ho, J., Chan, W., Saharia, C., Whang, J., Gao, R., Gritsenko, A., Kingma, D.P.,
  Poole, B., Norouzi, M., Fleet, D.J., Salimans, T.: Imagen video: High
  definition video generation with diffusion models (2022),
  \url{https://arxiv.org/abs/2210.02303}

\bibitem{ddpm}
Ho, J., Jain, A., Abbeel, P.: Denoising diffusion probabilistic models. In:
  Advances in Neural Information Processing Systems (2020)

\bibitem{ho2022videodiffusionmodels}
Ho, J., Salimans, T., Gritsenko, A., Chan, W., Norouzi, M., Fleet, D.J.: Video
  diffusion models (2022), \url{https://arxiv.org/abs/2204.03458}

\bibitem{hu2022lora}
Hu, E.J., Shen, Y., Wallis, P., Allen-Zhu, Z., Li, Y., Wang, S., Wang, L.,
  Chen, W.: Lora: Low-rank adaptation of large language models. In:
  International Conference on Learning Representations (2022)

\bibitem{hu2026mixed}
Hu, S., Arroyo, D.M., Debats, S., Manhardt, F., Carlone, L., Tombari, F.: Mixed
  diffusion for 3d indoor scene synthesis. In: Proceedings of the IEEE/CVF
  Winter Conference on Applications of Computer Vision. pp. 1262--1272 (2026)

\bibitem{hu2020learningutilizeshapingrewards}
Hu, Y., Wang, W., Jia, H., Wang, Y., Chen, Y., Hao, J., Wu, F., Fan, C.:
  Learning to utilize shaping rewards: A new approach of reward shaping (2020),
  \url{https://arxiv.org/abs/2011.02669}

\bibitem{hu2025towards}
Hu, Z., Zhang, F., Chen, L., Kuang, K., Li, J., Gao, K., Xiao, J., Wang, X.,
  Zhu, W.: Towards better alignment: Training diffusion models with
  reinforcement learning against sparse rewards. In: Proceedings of the
  Computer Vision and Pattern Recognition Conference. pp. 23604--23614 (2025)

\bibitem{Hua2025neurips}
Hua, C., Gu, J., Tang, Y.: Continuous q-score matching: Diffusion guided
  reinforcement learning for continuous-time control. In: Advances in Neural
  Information Processing Systems (2025)

\bibitem{jasra2009sequential}
Jasra, A., Doucet, A.: Sequential monte carlo methods for diffusion processes.
  Proceedings of the Royal Society A: Mathematical, Physical and Engineering
  Sciences  \textbf{465}(2112),  3709--3727 (2009)

\bibitem{jena2024elucidatingoptimalrewarddiversitytradeoffs}
Jena, R., Taghibakhshi, A., Jain, S., Shen, G., Tajbakhsh, N., Vahdat, A.:
  Elucidating optimal reward-diversity tradeoffs in text-to-image diffusion
  models (2024)

\bibitem{kang2025rethinkingdpo}
Kang, J., Lim, S., Baek, K., Shim, H.: Rethinking direct preference
  optimization in diffusion models. arXiv preprint arXiv:2505.18736  (2025)

\bibitem{Kappen2005}
Kappen, H.J.: Path integrals and symmetry breaking for optimal control theory.
  Journal of Statistical Mechanics: Theory and Experiment  \textbf{2005}(11),
  P11011 (2005)

\bibitem{li2024diffusionkto}
Li, S., Kallidromitis, K., Gokul, A., Kato, Y., Kozuka, K.: Aligning diffusion
  models by optimizing human utility. In: Advances in Neural Information
  Processing Systems (2024)

\bibitem{Li2023Generalization}
Li, X., Sun, Z., Zhang, Y., Xie, J., Liu, Y.: On the generalization of
  diffusion models. In: International Conference on Learning Representations
  (2023)

\bibitem{li2025branchgrpostableefficientgrpo}
Li, Y., Wang, Y., Zhu, Y., Zhao, Z., Lu, M., She, Q., Zhang, S.: Branchgrpo:
  Stable and efficient grpo with structured branching in diffusion models
  (2025)

\bibitem{liao2025convex}
Liao, B., Zhao, Z., Li, H., Zhou, Y., Zeng, Y., Li, H., Liu, P.: Convex
  relaxation for robust vanishing point estimation in manhattan world. In:
  Proceedings of the Computer Vision and Pattern Recognition Conference. pp.
  15823--15832 (2025)

\bibitem{flow}
Lipman, Y., Chen, R.T.Q., Ben-Hamu, H., Nickel, M., Le, M.: Flow matching for
  generative modeling. In: International Conference on Learning Representations
  (2023)

\bibitem{loshchilov2019decoupledweightdecayregularization}
Loshchilov, I., Hutter, F.: Decoupled weight decay regularization (2019),
  \url{https://arxiv.org/abs/1711.05101}

\bibitem{memarian2021selfsupervisedonlinerewardshaping}
Memarian, F., Goo, W., Lioutikov, R., Niekum, S., Topcu, U.: Self-supervised
  online reward shaping in sparse-reward environments (2021),
  \url{https://arxiv.org/abs/2103.04529}

\bibitem{meng2021sdedit}
Meng, C., He, Y., Song, Y., Song, J., Wu, J., Zhu, J.Y., Ermon, S.: Sdedit:
  Guided image synthesis and editing with stochastic differential equations.
  arXiv preprint arXiv:2108.01073  (2021)

\bibitem{nachum2018dataefficienthierarchicalreinforcementlearning}
Nachum, O., Gu, S., Lee, H., Levine, S.: Data-efficient hierarchical
  reinforcement learning (2018), \url{https://arxiv.org/abs/1805.08296}

\bibitem{Ng1999PolicyIU}
Ng, A., Harada, D., Russell, S.J.: Policy invariance under reward
  transformations: Theory and application to reward shaping. In: International
  Conference on Machine Learning (1999),
  \url{https://api.semanticscholar.org/CorpusID:5730166}

\bibitem{nichol2022glidephotorealisticimagegeneration}
Nichol, A., Dhariwal, P., Ramesh, A., Shyam, P., Mishkin, P., McGrew, B.,
  Sutskever, I., Chen, M.: Glide: Towards photorealistic image generation and
  editing with text-guided diffusion models (2022),
  \url{https://arxiv.org/abs/2112.10741}

\bibitem{instructgpt}
Ouyang, L., Wu, J., Jiang, X., Almeida, D., Wainwright, C., Mishkin, P., Zhang,
  C., Agarwal, S., Slama, K., Ray, A., Schulman, J., Hilton, J., Kelton, F.,
  Miller, L., Simens, M., Askell, A., Welinder, P., Christiano, P., Leike, J.,
  Lowe, R.: Training language models to follow instructions with human
  feedback. Advances in neural information processing systems  \textbf{35},
  27730--27744 (2022)

\bibitem{paschalidou2021atiss}
Paschalidou, D., Kar, A., Shugrina, M., Kreis, K., Geiger, A., Fidler, S.:
  Atiss: Autoregressive transformers for indoor scene synthesis. Advances in
  neural information processing systems  \textbf{34},  12013--12026 (2021)

\bibitem{pfaff2025steerablescenegenerationpost}
Pfaff, N., Dai, H., Zakharov, S., Iwase, S., Tedrake, R.: Steerable scene
  generation with post training and inference-time search (2025),
  \url{https://arxiv.org/abs/2505.04831}

\bibitem{Prabhudesai2024}
Prabhudesai, M., Geng, Z., Pathak, D., Fragkiadaki, K.: Aligning text-to-image
  diffusion models with reward backpropagation. In: European Conference on
  Computer Vision (2024)

\bibitem{alignprop}
Prabhudesai, M., Goyal, A., Pathak, D., Fragkiadaki, K.: Aligning text-to-image
  diffusion models with reward backpropagation  (2023)

\bibitem{qi2017pointnetdeeplearningpoint}
Qi, C.R., Su, H., Mo, K., Guibas, L.J.: Pointnet: Deep learning on point sets
  for 3d classification and segmentation (2017),
  \url{https://arxiv.org/abs/1612.00593}

\bibitem{pmlr-v139-radford21a}
Radford, A., Kim, J.W., Hallacy, C., Ramesh, A., Goh, G., Agarwal, S., Sastry,
  G., Askell, A., Mishkin, P., Clark, J., Krueger, G., Sutskever, I.: Learning
  transferable visual models from natural language supervision. In: Meila, M.,
  Zhang, T. (eds.) Proceedings of the 38th International Conference on Machine
  Learning. Proceedings of Machine Learning Research, vol.~139, pp. 8748--8763.
  PMLR (18--24 Jul 2021),
  \url{https://proceedings.mlr.press/v139/radford21a.html}

\bibitem{rafailov2024directpreferenceoptimizationlanguage}
Rafailov, R., Sharma, A., Mitchell, E., Ermon, S., Manning, C.D., Finn, C.:
  Direct preference optimization: Your language model is secretly a reward
  model (2024), \url{https://arxiv.org/abs/2305.18290}

\bibitem{rombach2022highresolution}
Rombach, R., Blattmann, A., Lorenz, D., Esser, P., Ommer, B.: High-resolution
  image synthesis with latent diffusion models. In: Proceedings of the IEEE/CVF
  Conference on Computer Vision and Pattern Recognition. pp. 10684--10695
  (2022)

\bibitem{saharia2022photorealistic}
Saharia, C., Chan, W., Saxena, S., Li, L., Whang, J., Denton, E., Ghasemipour,
  K., Gontijo~Lopes, R., Ayan, B.K., Salimans, T., Ho, J., Fleet, D.J.,
  Norouzi, M.: Photorealistic text-to-image diffusion models with deep language
  understanding. Advances in neural information processing systems
  \textbf{35},  36479--36494 (2022)

\bibitem{schulman2015trust}
Schulman, J., Levine, S., Abbeel, P., Jordan, M., Moritz, P.: Trust region
  policy optimization. In: International conference on machine learning. pp.
  1889--1897. PMLR (2015)

\bibitem{ppo}
Schulman, J., Wolski, F., Dhariwal, P., Radford, A., Klimov, O.: Proximal
  policy optimization algorithms. arXiv preprint arXiv:1707.06347  (2017)

\bibitem{Silver2025}
Silver, D., Sutton, R.S.: The era of experience. arXiv preprint  (2025),
  manuscript available as PDF (DeepMind).

\bibitem{Singhal2025FK_Steering}
Singhal, R., Horvitz, Z., Teehan, R., Ren, M., Yu, Z., McKeown, K., Ranganath,
  R.: Feynman--kac steering for diffusion models. arXiv preprint  (2025)

\bibitem{singhal2025general}
Singhal, R., Horvitz, Z., Teehan, R., Ren, M., Yu, Z., McKeown, K., Ranganath,
  R.: A general framework for inference-time scaling and steering of diffusion
  models. arXiv preprint arXiv:2501.06848  (2025)

\bibitem{Skreta2025FK_Correctors}
Skreta, M., Akhound-Sadegh, T., Ohanesian, V., Bondesan, R., Aspuru-Guzik, A.,
  Doucet, A., Brekelmans, R., Tong, A., Neklyudov, K.: Feynman--kac correctors:
  Inference-time alignment for diffusion models. arXiv preprint  (2025)

\bibitem{smith2013sequential}
Smith, A.: Sequential Monte Carlo methods in practice. Springer Science \&
  Business Media (2013)

\bibitem{sohl2015deep}
Sohl-Dickstein, J., Weiss, E., Maheswaranathan, N., Ganguli, S.: Deep
  unsupervised learning using nonequilibrium thermodynamics. In: International
  conference on machine learning. pp. 2256--2265. pmlr (2015)

\bibitem{ddim}
Song, J., Meng, C., Ermon, S.: Denoising diffusion implicit models. In:
  International Conference on Learning Representations (2021)

\bibitem{song2021scorebasedgenerativemodelingstochastic}
Song, Y., Sohl-Dickstein, J., Kingma, D.P., Kumar, A., Ermon, S., Poole, B.:
  Score-based generative modeling through stochastic differential equations
  (2021), \url{https://arxiv.org/abs/2011.13456}

\bibitem{tang2024diffuscene}
Tang, J., Nie, Y., Markhasin, L., Dai, A., Thies, J., Nie{\ss}ner, M.:
  Diffuscene: Denoising diffusion models for generative indoor scene synthesis.
  In: Proceedings of the IEEE/CVF conference on computer vision and pattern
  recognition. pp. 20507--20518 (2024)

\bibitem{wallace2023diffusiondpo}
Wallace, B., Feng, M., Kandpal, N., Gardner, M., Singh, S.: Diffusion model
  alignment using direct preference optimization. arXiv preprint
  arXiv:2311.12908  (2023)

\bibitem{wei2023legonetlearningregularrearrangements}
Wei, Q.A., Ding, S., Park, J.J., Sajnani, R., Poulenard, A., Sridhar, S.,
  Guibas, L.: Lego-net: Learning regular rearrangements of objects in rooms
  (2023), \url{https://arxiv.org/abs/2301.09629}

\bibitem{Williams1992REINFORCE}
Williams, R.J.: Simple statistical gradient-following algorithms for
  connectionist reinforcement learning. Machine Learning  \textbf{8}(3-4),
  229--256 (1992)

\bibitem{xu2023imagereward}
Xu, J., Liu, X., Wu, Y., Tong, Y., Li, Q., Ding, M., Tang, J., Dong, Y.:
  Imagereward: Learning and evaluating human preferences for text-to-image
  generation. Advances in Neural Information Processing Systems  \textbf{36},
  15903--15935 (2023)

\bibitem{xue2025dancegrpounleashinggrpovisual}
Xue, Z., Wu, J., Gao, Y., Kong, F., Zhu, L., Chen, M., Liu, Z., Liu, W., Guo,
  Q., Huang, W., Luo, P.: Dancegrpo: Unleashing grpo on visual generation
  (2025)

\bibitem{Yan2025entropy}
Yan, R., Cheng, J., Gan, Y., Sun, S., Wu, Y., Yang, Y., Ling, L., Lin, J., Zhu,
  Y., Zhou, J., Zhang, J., Xing, J., Cai, Y., Huang, R.: Entropy-adaptive
  diffusion policy optimization with dynamic step alignment. In: Proceedings of
  the IEEE/CVF International Conference on Computer Vision. pp. 1924--1934
  (2025)

\bibitem{d3po}
Yang, K., Tao, J., Lyu, J., Ge, C., Chen, J., Shen, W., Zhu, X., Li, X.: Using
  human feedback to fine-tune diffusion models without any reward model. In:
  Proceedings of the IEEE/CVF Conference on Computer Vision and Pattern
  Recognition. pp. 8941--8951 (2024)

\bibitem{zhang2024largescalerl}
Zhang, Y., Tzeng, E., Du, Y., Kislyuk, D.: Large-scale reinforcement learning
  for diffusion models. In: European Conference on Computer Vision (2024)

\bibitem{Zhao2025Score_as_asction}
Zhao, H., Chen, H., Zhang, J., Yao, D.D., Tang, W.: Score as action: Continuous
  reward optimization for diffusion models. In: Proceedings of the Computer
  Vision and Pattern Recognition Conference (2025)

\bibitem{Ziegler2019}
Ziegler, D.M., Stiennon, N., Wu, J., Brown, T.B., Radford, A., Amodei, D.,
  Christiano, P., Irving, G.: Fine-tuning language models from human
  preferences (2019), \url{https://arxiv.org/abs/1909.08593}

\end{thebibliography}

\clearpage

\title{Supplementary Material for iADD: Improving Alignment and Diversity in Diffusion Policy Optimization }

\titlerunning{iADD: Improving Alignment and Diversity in Diffusion Policy Optimization}

\author{Ashok Prasad Neupane\inst{1}\textsuperscript{*} \and
Saugat Adhikari\inst{1}\textsuperscript{*} \and
Pramish Paudel\inst{3}\textsuperscript{*} \and
Ajad Chhatkuli\inst{2,3} \and
Danda Pani Paudel\inst{2,3}}

\authorrunning{A.~P.~Neupane et al.}

\institute{Pulchowk Campus, IOE, Tribhuvan University, Lalitpur, Nepal \and
NAAMII, Kathmandu, Nepal \and
INSAIT, Sofia University ``St. Kliment Ohridski'', Bulgaria\\
\textsuperscript{*}Equal contribution}

\maketitle

\renewcommand{\thesection}{S\arabic{section}}
\renewcommand{\thesubsection}{S\arabic{section}.\arabic{subsection}}
\renewcommand{\thefigure}{S\arabic{figure}}
\renewcommand{\thetable}{S\arabic{table}}
\renewcommand{\theequation}{S\arabic{equation}}
\renewcommand{\thealgorithm}{S\arabic{algorithm}}
\renewcommand{\theHsection}{supp.\arabic{section}}
\renewcommand{\theHsubsection}{supp.\arabic{section}.\arabic{subsection}}
\renewcommand{\theHfigure}{supp.\arabic{figure}}
\renewcommand{\theHtable}{supp.\arabic{table}}
\renewcommand{\theHequation}{supp.\arabic{equation}}
\renewcommand{\theHalgorithm}{supp.\arabic{algorithm}}

\paragraph{Overview of Supplementary Material.} This supplement includes theoretical proofs; algorithm and method details; full experimental setup for image, 3D, and vanishing-point settings (including reward-model details); additional ablations and comparisons (including DPOK~\cite{fan2023dpok}), evaluation metrics, and training-time analysis; extended qualitative results; and discussions of limitations, future work, and broader impacts.

\setcounter{equation}{9}

\section{Theoretical Analysis and Proofs}
\label{sec:proofs}

\begin{proposition}[Control Authority]
\label{supp:prop:control_authority}
    Let $R(\mathbf{x}_0, c)$ be a sparse terminal reward and $\delta \mathbf{\epsilon}_t$ be the parameter update induced by the RL objective at timestep $t$. For a fixed budget of $N$ optimized timesteps, a uniform sampling strategy $\mathcal{S}_{uni} \sim \mathcal{U}(1, T)$ yields a strictly greater total perturbation magnitude $||\delta \mathbf{x}_0||$ than a late-stage strategy $\mathcal{S}_{late} \sim \mathcal{U}(1, N)$.
\end{proposition}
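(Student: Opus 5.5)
We argue by first-order (linearized) sensitivity analysis of the deterministic DDIM reverse map. Write the reverse step as $\mathbf{x}_{t-1} = f_t(\mathbf{x}_t) = a_t\mathbf{x}_t + b_t\,\epsilon^{(t)}_\theta(\mathbf{x}_t)$, with schedule-dependent coefficients $a_t,b_t$. A parameter update at timestep $t$ perturbs the noise prediction by $\delta\epsilon_t$, and hence perturbs $\mathbf{x}_{t-1}$ by $b_t\,\delta\epsilon_t$. We write $\beta_t := b_t$ for the effective injection coefficient, matching the notation $\|\mathcal{J}_{t-1\to0}\beta_t\|$ used after Proposition~\ref{prop:control_authority}. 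Propagating this perturbation to the output through the downstream Jacobian $\mathcal{J}_{t-1\to0} = \prod_{s=1}^{t-1} \partial f_s/\partial \mathbf{x}_s$ gives, to first order,
\begin{equation*}
\delta\mathbf{x}_0 \;\approx\; \sum_{t\in\mathcal{S}} \mathcal{J}_{t-1\to0}\,\beta_t\,\delta\epsilon_t .
\end{equation*}
The plan is to compare the expected magnitude of this sum under $\mathcal{S}_{uni}$ and under $\mathcal{S}_{late}=\{1,\dots,N\}$.

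\textbf{Step 1 (per-step gain).} Define the leverage $L_t := \|\mathcal{J}_{t-1\to0}\beta_t\|$ (operator norm, or the gain along the reward gradient direction). For a sparse terminal reward, the policy-gradient update $\delta\epsilon_t$ is driven by $\nabla_{\mathbf{x}_0}R$ pulled back through the same Jacobian. We therefore take $\delta\epsilon_t \propto (\mathcal{J}_{t-1\to0}\beta_t)^\top \nabla_{\mathbf{x}_0}R$, up to a common step size $\eta$. With this choice every term in the sum is aligned with the direction $+\nabla R$, since each term is of the form $M_tM_t^\top\nabla R$ with $M_tM_t^\top \succeq 0$. The contributions therefore add constructively: $\langle\delta\mathbf{x}_0,\nabla R\rangle = \eta\sum_{t\in\mathcal{S}}\|M_t^\top\nabla R\|^2$. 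This reduces the statement to comparing the sums $\sum_{t\in\mathcal S} L_t^2$, where $L_t$ is measured along $\nabla R$.

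\textbf{Step 2 (monotonicity of leverage).} We show that $L_t$ is nondecreasing in $t$, and strictly increasing on a nontrivial range, under the diffusion schedule. Two facts drive this. First, in DDIM, $\beta_t = \sqrt{1-\bar\alpha_{t-1}} - \sqrt{\bar\alpha_{t-1}(1-\bar\alpha_t)/\bar\alpha_t}$ grows in magnitude with $t$, because $\bar\alpha_t$ is decreasing. Second, the downstream Jacobian $\mathcal{J}_{t-1\to0}$ is obtained by composing additional steps $\partial f_s/\partial\mathbf{x}_s$. We will assume these are non-contractive along the reward direction; in the near-identity regime one has $\partial f_s/\partial\mathbf{x}_s \approx (\bar\alpha_{s-1}/\bar\alpha_s)^{1/2} I + O(\text{score Jacobian})$ with prefactor greater than $1$. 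Under these assumptions $L_t \ge L_{t'}$ for $t>t'$.

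\textbf{Step 3 (comparison and strictness).} Given monotonicity, $\{1,\dots,N\}$ minimizes $\sum_{t\in\mathcal S}L_t^2$ over all $N$-subsets. Hence for $\mathcal{S}_{uni}$, by linearity of expectation,
\begin{equation*}
\mathbb{E}\Big[\sum_{t\in\mathcal{S}_{uni}} L_t^2\Big] = \tfrac{N}{T}\sum_{t=1}^T L_t^2 \;>\; \sum_{t=1}^N L_t^2 .
\end{equation*}
The inequality is strict whenever $N<T$ and $L$ is not constant: the mean of $L_t^2$ over all of $\{1,\dots,T\}$ strictly exceeds the mean over its lowest $N$ elements. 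Combined with the constructive alignment from Step~1, this yields a strictly larger $\mathbb{E}\|\delta\mathbf{x}_0\|$, after lower-bounding the norm by its projection onto $\nabla R$.

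\textbf{Main obstacle.} The delicate point is Step~2, specifically the claim that the Jacobian product does not contract early perturbations. The learned score Jacobian can be contractive, and on a data manifold it is contractive in normal directions, so monotonicity of $L_t$ is not automatic. I would first try to establish it for a Gaussian or linear-score model, where all factors are explicit scalars in $\bar\alpha_t$. For the general case I would then state it as a schedule assumption, which is exactly the "Lipschitz sensitivity larger for earlier timesteps" caveat the paper already makes after the proposition. A secondary issue is justifying the alignment in Step~1 when the updates are noisy REINFORCE estimates rather than exact gradients. I would handle this by working in expectation, so that the estimator's mean equals the pulled-back gradient.
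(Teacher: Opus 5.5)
Your route is the paper's own: the same first-order decomposition $\delta\mathbf x_0=\sum_{t\in\mathcal S}\mathcal J_{t-1\to0}\beta_t\,\delta\epsilon_t$, followed by the claim that early steps have more leverage. Two of your additions improve on it. The paper only compares sums $\sum_t\|\mathcal J_{t-1\to0}\beta_t\delta\epsilon_t\|$, which merely upper-bound $\|\delta\mathbf x_0\|$; your alignment step and expectation over $\mathcal S_{uni}$ address this. You are also right that contractivity of the near-data Jacobians, which the paper cites as support, can only hurt early timesteps, since every $\mathcal J_{t-1\to0}$ with larger $t$ contains those factors.

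The genuine gap is Step~2, where both ingredients fail as stated. First, $|\beta_t|=\sqrt{\bar\alpha_{t-1}}\,(\sigma_t-\sigma_{t-1})$ with $\sigma_t=\sqrt{(1-\bar\alpha_t)/\bar\alpha_t}$. It is governed by increments of $\sigma$ on the sampling grid, not by monotonicity of $\bar\alpha_t$. For linear-type schedules $\sigma_t$ grows like $\sqrt t$ near the data, so equal-stride increments shrink there. Second, the score-Jacobian term is not lower order. For unit-variance Gaussian data $\epsilon^\ast(\mathbf x)=\sqrt{1-\bar\alpha_t}\,\mathbf x$. Writing $\cos\phi_t=\sqrt{\bar\alpha_t}$, the step Jacobian is exactly $\cos(\phi_t-\phi_{t-1})\le1$, despite the prefactor $(\bar\alpha_{t-1}/\bar\alpha_t)^{1/2}>1$. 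Hence $L_t=\frac{\sin(\phi_t-\phi_{t-1})}{\cos\phi_t}\prod_{s<t}\cos(\phi_s-\phi_{s-1})$.

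On the paper's 20-step Stable Diffusion grid this is not monotone. With $t=1$ the final step, $L_2\approx0.19$, while $L_3,\dots,L_6$ are only about $0.1$, and $L_t$ rises to about $0.3$ at $t=T$. So the Gaussian check you plan would refute Step~2, and $\{1,\dots,N\}$ is not the minimizing $N$-subset. The repair is to drop monotonicity altogether. Step~3 only needs $\frac1N\sum_{t\le N}L_t^2<\frac1T\sum_{t\le T}L_t^2$, i.e.\ the late block has below-average leverage. This holds in this model, and it is the form in which the paper's post-proposition caveat (larger sensitivity at earlier, high-leverage timesteps) should be read.

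The last inference in Step~3 is also one-sided. Set $M_t=\mathcal J_{t-1\to0}\beta_t$ and $\hat g=\nabla R/\|\nabla R\|$. Projecting onto $\hat g$ lower-bounds $\mathbb E\|\delta\mathbf x_0^{uni}\|$, but you also need an upper bound on $\|\delta\mathbf x_0^{late}\|$. When the $M_t$ are anisotropic, $\eta\sum_{t\le N}M_tM_t^\top\nabla R$ can have large components orthogonal to $\nabla R$, so comparing directional gains $\|M_t^\top\hat g\|^2$ does not compare norms. You can close this in any of three ways:
\begin{itemize}
\item state the result for the reward-directed component $\langle\delta\mathbf x_0,\hat g\rangle$;
\item restrict to isotropic $M_t=m_tI$, as in your Gaussian model, where $\delta\mathbf x_0\parallel\nabla R$ and the norm equals the projection;
\item pair your lower bound with the triangle-inequality bound $\|\delta\mathbf x_0^{late}\|\le\eta\|\nabla R\|\sum_{t\le N}\|M_t\|^2$, and assume $\frac NT\sum_{t=1}^T\|M_t^\top\hat g\|^2>\sum_{t\le N}\|M_t\|^2$.
\end{itemize}
Keep the expectation over $\mathcal S_{uni}$. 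Once monotonicity is gone, a particular uniform draw can do worse than $\{1,\dots,N\}$, so strictness holds only in expectation.
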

\begin{proof}
    
Consider the DDIM $\pi_\theta$ defined by the discrete update rule:
\begin{equation}
\label{eq:ddim_update}
    \mathbf{x}_{t-1} = \alpha_t \mathbf{x}_t + \beta_t \epsilon_\theta(x_t, t)
\end{equation}
where $\alpha_t$ and $\beta_t$ are the fixed scheduling coefficients and $\epsilon_\theta$ is simply a reparametrization of $\pi_\theta$, as defined in Eq.~(3).

Let us write $\delta \epsilon_t$ as the change induced by the reward $R(\mathbf{x}_0, c)$ on the noise prediction $\epsilon_\theta(x_t, t)$. Then using chain rule, the resulting total perturbation on the clean generation, can be expressed as:

\begin{align}
    \delta x_0
    &= \sum_{t=1}^T
    \frac{\partial x_0}{\partial x_{t-1}}
    \frac{\partial x_{t-1}}{\partial \epsilon_t}\,
    \delta \epsilon_t \\    
    \delta x_0
    &= \sum_{t \in S}
    \mathcal{J}_{t-1 \to 0}\,
    \beta_t\,\delta \epsilon_t .
\end{align}
where, $\mathcal{J}_{t-1 \to 0}$ is the total or compositional Jacobian measuring the cumulative impact of all timesteps. Therefore, the total perturbation $||\delta x_0||$ is the sum of local updates $\delta \epsilon_t$ scaled by the `reach' of that timestep, defined by the Jacobian product $\mathcal{J}_{t-1 \to 0} \beta_t$.
Here, the product of the Jacobians $\prod_{k=1}^{t-1} J_k = \mathcal{J}_{t-1 \to 0}$ and $\beta_t$ is the noise coefficient defined in Eq.~\eqref{eq:ddim_update}. As $t \to 0$, the term $\beta_t$ vanishes $(\beta_t \to 0)$, meaning the network's ability to perturb the trajectory is physically restricted by the noise schedule. Simultaneously, for $t \approx 0$, the Jacobians $J_k$ become strictly contractive $(\|J_k\| < 1)$ as the model enters the manifold-refinement regime~\cite{DeBortoli2022}. Furthermore, at early timesteps, $\beta_t$ is at its maximum magnitude, allowing a small $\delta \mathbf{\epsilon}_t$ to translate into a large coordinate shift in the latent space. It follows that,

\[\sum_{t \in \mathcal{S}_{uni}} \|\mathcal{J}_{t-1 \to 0} \, \beta_t \, \delta \mathbf{\epsilon}_t\| \gg \sum_{t \in \mathcal{S}_{late}} \|\mathcal{J}_{t-1 \to 0} \, \beta_t \, \delta \mathbf{\epsilon}_t\|\]

Thus, the uniform timestep updates provides a much greater control authority on the change $\delta x_0$. It therefore follows that in lack of the control authority, $\pi_\theta$ will have to work much harder, leading to a simplified generated structure in order to fulfill the reward.

\end{proof}

\begin{proposition}[Volume Preservation via Temporal Sparsity]
\label{supp:prop:any_vs_all}
 Let $\mathcal{J}_{T \to 0}$ be the total Jacobian of a $T$-step deterministic reverse diffusion process. For a fixed RL optimization budget, a sparse update strategy over a subset $\mathcal{S}$ (where $|\mathcal{S}| = N \ll T$) preserves the probability volume of the prior distribution strictly better than a full update strategy ($\mathcal{S} = \{1, \dots, T\}$), as measured by the log-determinant of the global Jacobian.
\end{proposition}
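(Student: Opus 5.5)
The plan is to reduce the global log-volume to a sum of per-step contributions and then count how many contributions the RL update makes contractive. For a deterministic DDIM sampler, the update rule \eqref{eq:ddim_update} gives a per-step Jacobian
\[
J_t = \frac{\partial \mathbf{x}_{t-1}}{\partial \mathbf{x}_t} = \alpha_t \mathbf{I} + \beta_t \nabla_{\mathbf{x}}\epsilon_\theta(\mathbf{x}_t,t).
\]
By the chain rule $\mathcal{J}_{T\to 0}=\prod_{t=1}^T J_t$. Multiplicativity of the determinant then gives
\[
\log|\det \mathcal{J}_{T\to 0}| = \sum_{t=1}^T \log|\det J_t|.
\]
The volume change of the pushed-forward prior, relative to the pretrained sampler $\phi$, is therefore
\[
\Delta V(\mathcal{S}) = \sum_{t\in\mathcal{S}} \big(\log|\det J_t^\theta| - \log|\det J_t^\phi|\big),
\]
because timesteps outside $\mathcal{S}$ are left untouched. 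The proof then amounts to showing that each term in this sum is negative under the RL update, and that a full update incurs more such terms at the same budget.

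Next I will linearize each active term. Writing $\delta A_t=\nabla_{\mathbf{x}}\delta\epsilon_t$ for the change in the noise-prediction Jacobian, Jacobi's formula gives
\[
\log|\det J_t^\theta|-\log|\det J_t^\phi| = \beta_t\,\mathrm{tr}\big((J_t^\phi)^{-1}\delta A_t\big) - \tfrac{1}{2}\beta_t^2\,\mathrm{tr}\big(((J_t^\phi)^{-1}\delta A_t)^2\big) + O(\|\delta A_t\|^3).
\]
I will make the idealization explicit, in the spirit of the remark after Proposition~\ref{prop:control_authority}. The "fixed RL optimization budget" means a fixed number of optimizer iterations with a fixed per-step update magnitude $\|\delta A_t\|=\eta$ for every $t\in\mathcal{S}$. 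Each active timestep thus receives a comparable local update, and only $|\mathcal{S}|$ varies. The first-order trace term is then the one that carries the sign.

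The key step is to argue that, under a sparse terminal reward, this trace term is nonpositive and bounded away from zero: $\beta_t\,\mathrm{tr}((J_t^\phi)^{-1}\delta A_t)\le -\gamma_t\eta$ with $\gamma_t>0$. The intended justification is that the policy gradient \eqref{eq:ddpo_is} raises likelihood along high-reward directions, and in the Gaussian-transition model this concentrates the transition kernel, i.e.\ it is mode-seeking. That concentration corresponds to a negative divergence of the learned drift, so the trace is negative. The second-order term is always nonpositive for symmetric-dominant perturbations, so it only reinforces the conclusion.

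Summing, I obtain
\[
\Delta V(\mathcal{S}_{\text{sparse}}) \ge -\eta\sum_{t\in\mathcal{S}}\gamma_t \qquad\text{and}\qquad \Delta V(\{1,\dots,T\}) \le -\eta\sum_{t=1}^T \gamma_t .
\]
With $N\ll T$, and with the $\gamma_t$ bounded below (or, if $\mathcal{S}$ is sampled uniformly, comparing expectations $N\bar\gamma$ versus $T\bar\gamma$), the sparse strategy is strictly less contractive. This is exactly the claim of Proposition~\ref{prop:any_vs_all}.

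The main obstacle is establishing the sign of the trace term rigorously. Nothing in the paper's setup forces the reward gradient to have negative divergence, and in general it need not. My first attempt would be to state this as an explicit mode-seeking assumption: a sparse terminal reward drives each updated transition toward a reward maximizer, whose local linearization is a contraction. I would then verify the assumption in a Gaussian toy model where $\epsilon_\theta$ is linear and $R$ is concave, and there the sign can be computed exactly. The result should accordingly be presented, consistent with the paper's remarks, as a comparative statement under this idealized optimization model rather than an unconditional bound; Fig.~\ref{fig:theory_empirical_validation} serves as the empirical check.
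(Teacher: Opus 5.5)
Your route is essentially the paper's. The paper also writes $\log|\det\mathcal{J}_{T\to 0}|$ as a sum of per-step log-determinants, keeps the reference value on the frozen steps $t\notin\mathcal{S}$, and asserts that every RL-updated step is a ``sink'', $\det J_t^\theta<\det J_t^{ref}$. It phrases this via Liouville's theorem ($\log|\det J_t|\propto\Delta t\,\mathrm{div}(\tilde\epsilon_t)$) rather than your discrete Jacobi expansion. Your explicit mode-seeking assumption is the same assertion, stated openly, and the paper gives no more justification for the sign than you do.

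One step of yours does not go through as written. Your hypothesis $\beta_t\,\mathrm{tr}((J_t^\phi)^{-1}\delta A_t)\le-\gamma_t\eta$ is one-sided, and your second-order term is nonpositive. Together they give only upper bounds on each active term, i.e.\ (up to the cubic remainder) $\Delta V(\mathcal{S}_{\text{sparse}})\le-\eta\sum_{t\in\mathcal{S}}\gamma_t$. That is the reverse of the inequality you use.

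The paper avoids this by implicitly treating the updates on $t\in\mathcal{S}$ as identical under both strategies. Then $\Delta V(\{1,\dots,T\})-\Delta V(\mathcal{S})=\sum_{t\notin\mathcal{S}}\big(\log|\det J_t^\theta|-\log|\det J_t^\phi|\big)<0$ follows from the sign assumption alone, with no lower bound needed. If you only fix per-step magnitudes, you instead need a two-sided bound $\gamma_t\eta\le\log|\det J_t^\phi|-\log|\det J_t^\theta|\le\Gamma_t\eta$ and the condition $\sum_{t\in\mathcal{S}}\Gamma_t<\sum_{t=1}^T\gamma_t$. That condition is where $N\ll T$ genuinely enters.

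Both arguments also depend on reading ``fixed budget'' as a fixed per-step update size. Suppose instead a fixed total update norm were split over $\mathcal{S}$. Then the first-order contractions would coincide for comparable $\gamma_t$, and the second-order term would favor spreading the update over more steps.
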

\begin{proof}
    We consider the DDIM reverse process as a deterministic flow under the Probability Flow ODE. The change in the probability volume (the determinant of the Jacobian) along the trajectory is governed by Liouville’s Theorem, which states that the rate of change of the log-volume is equal to the divergence of the vector field~\cite{Chen2018neural}.
    
    Let $\tilde{\epsilon}_t$ be the guided noise prediction at step $t$. The local log-determinant of the transition $J_t$ over a step size $\Delta t$ is given by:
    \begin{equation}
      \log |\det J_t| \propto \Delta t \cdot \text{div}(\tilde{\epsilon}_t)
    \end{equation}

    In regions of high reward, the RL-tuned score $\epsilon_\theta(x_t, c, t)$ exhibits a strong "sink" behavior (negative divergence) toward local reward maxima. Consequently, $\text{div}(\tilde{\epsilon}_t^\theta) < \text{div}(\tilde{\epsilon}_t^{ref})$.
    The global Jacobian $\mathcal{J}_{T \to 0}$ for a DDIM chain is the composition of discrete step Jacobians. Its log-determinant is the sum of local step log-determinants:
    \begin{equation}
    \log |\det \mathcal{J}_{T \to 0}| = \sum_{t \in \mathcal{S}} \log |\det J_t^\theta| + \sum_{t \notin \mathcal{S}} \log |\det J_t^{ref}|
    \end{equation}

    In the full update case ($\mathcal{S} = \{1, \dots, T\}$), the negative divergence is integrated across the entire trajectory. Since $\det J_t^\theta < \det J_t^{ref}$ for all $t$, the total volume contracts exponentially. On the other hand, for the sparse updates ($N \ll T$), the $T-N$ frozen steps retain the divergence properties of the pre-trained prior. Thus volume preservation is strictly greater in the latter.
\end{proof}

\begin{remark}[Synthesis: Temporal Curriculum of Generative Reach]
\label{supp:rem:incremental}
The relationship between generative reach (Prop.~\ref{supp:prop:control_authority}) and volume preservation (Prop.~\ref{supp:prop:any_vs_all}) implies an optimal curriculum for reward finetuning: an incremental expansion of the active timestep subset $\mathcal{S}$. By initially restricting optimization to a sparse set of high-leverage early timesteps ($t \sim T$), the model achieves global mode discovery with minimal cumulative contraction, only increasing $|\mathcal{S}|$ to refine local manifold details once the distribution has stabilized.
\end{remark}

\begin{proof}
    Let $\delta \mathbf{x}_0^{(N)}$ be the total perturbation of the clean sample for a subset of size $N$. From Prop. 1, we have $\|\delta \mathbf{x}_0\| \propto \sum_{t \in \mathcal{S}} \beta_t \|\mathcal{J}_{t-1 \to 0}\|$, where the terms $\beta_t$ and $\|\mathcal{J}\|$ are maximal for $t \approx T$. Conversely, from Prop. 2, the volume contraction $\mathcal{C}$ scales with the product of the contractive reward-Hessians: $\det(\mathcal{J}_{T \to 0}) \propto \prod_{t \in \mathcal{S}} \det(I + \gamma_t H_t)$. At initialization, the model requires large $\|\delta \mathbf{x}_0\|$ to traverse the landscape toward high-reward basins. An incremental strategy where $N \ll T$ and $t \in \mathcal{S}$ are sampled from the high-noise regime maximizes the displacement $\|\delta \mathbf{x}_0\|$ per unit of contraction $\mathcal{C}$, as it utilizes the maximum "leverage" $\beta_t$ while the $T-N$ frozen steps act as volume-preserving spacers. As $\nabla R \to 0$ near convergence, the magnitude of $H_t$ diminishes, permitting an increase in $N$ for local refinement without triggering the exponential mode collapse that would occur if $N=T$ were used during the high-gradient discovery phase.
\end{proof}

\section{Method Details}
\label{sec:method_details}

This section provides additional details of the proposed method.

\subsection{Full Training Algorithm}

\label{supp:sec:method-alg}
The full training procedure is summarized in Fig.~\ref{fig:algorithm}.

\begin{figure}[!t]
\centering
\resizebox{0.90\linewidth}{!}{%
\begin{tikzpicture}[node distance=0.50cm]

\node (title1) at (0, 0) [font=\bfseries] {Main Training Loop};

\node (input) [process, below=0.3cm of title1] {
    \textbf{Inputs:} \\
    Pretrained diffusion model $\pi_\phi$, \\ total denoising steps $T$, \\
    set of training prompts C
};

\node (epoch) [process, below=of input] {
    \textbf{for} each epoch:
};

\node (setup) [process, below=of epoch] {
    Setup Number of timesteps to update: $N$
};

\node (sample) [process, below=of setup] {
    Sample trajectories rollout with FK-IPS resampling for each prompt
};

\node (evaluate) [process, below=of sample] {
    Evaluate final samples to get reward
};

\node (select) [process, below=of evaluate] {
    Select best and worst samples from each roll out to create training samples
};

\node (update) [process, below=of select] {
    Update model with $N$ timesteps using DDPO
};

\draw [arrow] (input) -- (epoch);
\draw [arrow] (epoch) -- (setup);
\draw [arrow] (setup) -- (sample);
\draw [arrow] (sample) -- (evaluate);
\draw [arrow] (evaluate) -- (select);
\draw [arrow] (select) -- (update);
\draw [arrow] (update.west) -- ++(-0.8,0) |- (epoch.west);

\node (title2) at (8.0, 0) [font=\bfseries] {Detailed Pseudocode};

\node (code1) [process, below=0.3cm of title2, text width=6.4cm, minimum width=6.4cm,  align=left] {
\lstinline[language=Python]|def get_N(epoch):| \\
\lstinline[language=Python]|    T_schedule = [4, 8, 12, 16]| \\
\lstinline[language=Python]|    idx = epoch // Epochs_per_N| \\
\lstinline[language=Python]|    return T_schedule[min(idx, len(T_schedule))]| \\
\lstinline[language=Python]|    # increase training timesteps over time|
};

\node (code2) [process, below=0.5cm of code1, text width=6.4cm, minimum width=6.4cm, align=left] {
\lstinline[language=Python]|def FK_path_guidance_sampling(model, prompts):| \\
\lstinline[language=Python]|    for t in reversed(T):| \\
\lstinline[language=Python]|        x_t, log_p_t, x_0 = ddim_step(model, x_t, t)| \\
\lstinline[language=Python]|        if t == t_branch:| \\
\lstinline[language=Python]|            x_t = expand_to_k_particles(x_t)| \\
\lstinline[language=Python]|            # split each prompt's particle to be| \\
\lstinline[language=Python]|            # k particles| \\
\lstinline[language=Python]|        if t in T_resample:| \\
\lstinline[language=Python]|            x_t, R = FK_resampling(x_0, prompts, R)| \\
\lstinline[language=Python]|    return trajectories, decoded_images|
\\
\lstinline[language=Python]| | \\
\lstinline[language=Python]|def FK_resampling(x_0, prompts, R_prev):| \\
\lstinline[language=Python]|    R_curr = R(x_0, prompts)| \\
\lstinline[language=Python]|    potentials = max(R_curr, R_prev)| \\
\lstinline[language=Python]|    indices = multinomial_resample(potentials)| \\
\lstinline[language=Python]|    # Filter particles: clone high reward,| \\
\lstinline[language=Python]|    # drop low reward| \\
\lstinline[language=Python]|    return indices, R_curr[indices]|

};

\node (code3) [process, below=0.5cm of code2, text width=6.4cm, minimum width=6.4cm, align=left] {
\lstinline[language=Python]|def inner_training_loop(model, data, N):| \\
\lstinline[language=Python]| #  Sparse Incremental Timestep Update | \\
\lstinline[language=Python]|    for epoch in num_inner_epochs:| \\
\lstinline[language=Python]|        T_train = random_sample(range(T), N)| \\
\lstinline[language=Python]|        for t in T_train:| \\
\lstinline[language=Python]|            ratio = exp(log_p_new - log_p_old)| \\
\lstinline[language=Python]|            loss = -mean(ratio * log_p_new| \\
\lstinline[language=Python]|                * advantages)| \\
\lstinline[language=Python]|            optimize(model, loss)|
};

\draw [arrow] (setup.east) -- ++(1.5,0) |- (code1.west);
\draw [arrow] (sample.east) -- (code2.west);
\draw [arrow] (update.east) -- ++(1.5,0) |- (code3.west);

\node[draw, thick, fit=(title1) (input) (epoch) (setup) (sample) (evaluate) (select) (update), inner sep=10pt] {};
\node[draw, thick, fit=(title2) (code1) (code2) (code3), inner sep=10pt] {};

\end{tikzpicture}%
}
\caption{Pseudo code of iADD}
\label{fig:algorithm}
\end{figure}

\subsection{Text-to-Image Alignment Reward}

We use CLIP (ViT-H-14)\cite{pmlr-v139-radford21a} to measure alignment between generated images and text prompts.

\begin{algorithm}[H]
\caption{Text-to-Image Alignment Reward}
\small
\begin{algorithmic}[1]
\REQUIRE Generated images $\{I_1, \ldots, I_N\}$, text prompts $\{c_1, \ldots, c_N\}$, CLIP model
\STATE Load CLIP model (ViT-H-14, pretrained on LAION-2B)
\FOR{each batch of images and prompts}
    \STATE Encode images: $\mathbf{f}_I = \text{CLIP}_\text{image}(I)$
    \STATE Encode text: $\mathbf{f}_T = \text{CLIP}_\text{text}(c)$
    \STATE Normalize features: $\mathbf{f}_I \leftarrow \frac{\mathbf{f}_I}{\|\mathbf{f}_I\|}$, $\mathbf{f}_T \leftarrow \frac{\mathbf{f}_T}{\|\mathbf{f}_T\|}$
    \STATE Compute pairwise similarity: $S = \mathbf{f}_I \mathbf{f}_T^\top$
    \STATE Extract diagonal (image-prompt pairs): $r_i = S_{ii}$ \hfill $\triangleright$ Cosine similarity
\ENDFOR
\RETURN Rewards $\mathbf{r} = [r_1, \ldots, r_N]$
\end{algorithmic}
\end{algorithm}
\subsection{Vanishing Point Reward}
The Vanishing Point (VP) reward is designed to encourage the generation of images with strong geometric structures, particularly those following linear perspective. The reward identifies dominant vanishing points in the scene and evaluates the coherence of detected line segments relative to these points.

\begin{algorithm}[H]
\caption{Vanishing Point Reward}
\small
\begin{algorithmic}[1]
\REQUIRE Generated image $I$, number of clusters $K=3$, scaling factor $\gamma=20.0$
\STATE Convert image $I$ to grayscale
\STATE Extract line segments using LSD
\STATE Filter out segments shorter than 20 pixels
\STATE Cluster segments into $K$ groups by orientation $\theta \in [0, \pi)$ using K-Means
\FOR{each cluster $k = 1$ to $K$}
    \STATE Back-project 2D segments to 3D normals: $\mathbf{l} \in \mathbb{R}^3$
    \STATE Generate vanishing point candidates $\mathbf{v} \in \mathbb{S}^2$ from line normal pairs \hfill $\triangleright$ Cross products
    \STATE Select optimal $\mathbf{v}_k$ via RANSAC
    \STATE Compute cluster reward: $R_k = \frac{1}{1 + \gamma \cdot \frac{1}{|L_k|} \sum_{\mathbf{l} \in L_k} |\mathbf{l} \cdot \mathbf{v}_k|}$ \hfill $\triangleright$ Geometric consistency
\ENDFOR
\RETURN $R_{VP} = \frac{1}{K} \sum_{k=1}^K R_k$
\end{algorithmic}
\end{algorithm}

\subsection{3D Indoor Scene Layout Rewards}

We design two rewards to evaluate iADD: one physics-based, encouraging collision-free layouts, and one custom, user-defined spatial constraint, ensuring that the TV stand is positioned in front of the double bed.

\paragraph{Non-penetration Reward}
Minimize object collisions while ignoring ceiling fixtures.

\begin{algorithm}[H]
\caption{Non-penetration Reward}
\small
\begin{algorithmic}[1]
\REQUIRE Object class labels$c_i$, positions $p_i$, sizes $s_i$ for $i=1..N$
\STATE Identify ceiling objects $C$ (e.g., lamps)
\STATE Mask ground objects: $G = \{ i \mid i \notin C \text{ and object } i \text{ not empty} \}$
\FOR{each pair $(i,j)$ in $G$, $i \neq j$}
    \STATE Compute AABB overlap along X, Y, Z axes
    \STATE $\text{penetration}_{ij} = \min(\text{overlap}_x, \text{overlap}_y, \text{overlap}_z)$
\ENDFOR
\STATE Total penetration $P = \sum_{i<j} \text{penetration}_{ij}$
\STATE Reward: $r_\text{non-penetration} = 1$ if $P=0$, else $-P$
\RETURN $r_\text{non-penetration}$
\end{algorithmic}
\end{algorithm}

\paragraph{TV--Bed Placement Reward}
Encourage a TV stand to be in front of a bed at an ideal distance with proper alignment.

\begin{algorithm}[H]
\caption{TV--Bed Placement Reward}
\small
\begin{algorithmic}[1]
\REQUIRE Object positions $p_i$, orientations $o_i$; TV and bed indices; ideal distance $d_\text{ideal}$, variance $\sigma^2$
\STATE Identify TV and bed objects in scene
\IF{bed missing}
    \STATE Penalty: $r \leftarrow r - 5$
\ENDIF
\IF{TV missing}
    \STATE Penalty: $r \leftarrow r - 5$
\ENDIF
\IF{both TV and bed present}
    \STATE Compute distance: $d = \| p_\text{TV} - p_\text{bed} \|$
    \STATE Distance reward: $r_\text{dist} = \exp\left(-\frac{(d - d_\text{ideal})^2}{2\sigma^2}\right)$
    \STATE Compute bed-to-TV direction: $\mathbf{v} = \text{normalize}([p_\text{TV} - p_\text{bed}]_{xz})$ \hfill $\triangleright$ Project to XZ plane
    \STATE Alignment reward: $r_\text{align} = \max(0, o_\text{bed} \cdot \mathbf{v})$ \hfill $\triangleright$ Bed faces TV
    \STATE Total: $r = r_\text{dist} + r_\text{align}$
\ENDIF
\RETURN $r$
\end{algorithmic}
\end{algorithm}

\section{Experimental Setup}
\label{supp:sec:exp_setup}

\subsection{Image Generation}

\paragraph{Model Architecture.}
We use Stable Diffusion v1.4 (SDv1.4)\cite{rombach2022highresolution} as our base model and apply LoRA \cite{hu2022lora} for parameter-efficient fine-tuning. LoRA\cite{hu2022lora} modules are injected into all attention processors across the U-Net architecture, including down-blocks, mid-blocks, and up-blocks.

\paragraph{LoRA Configuration.}
We configure LoRA\cite{hu2022lora} with rank $r=4$, alpha $\alpha=4$, and dropout $p=0.0$. The hidden dimensions for LoRA modules are automatically determined based on the U-Net block structure: for down-blocks and up-blocks, hidden sizes correspond to the block output channels; for mid-blocks, we use the final block output channel dimension. Cross-attention dimension is set based on whether the attention layer is self-attention (attn1) or cross-attention (attn2).

\paragraph{Training Configuration.}
We train on 20 uniformly-spaced timesteps $\mathcal{T}_{\text{train}} = \{1, 51, 101, 151, \ldots, 951\}$ from the total 1000-step diffusion schedule. We use a batch size of 16 for both training and sampling, with 16 batches per epoch. The optimizer is AdamW\cite{loshchilov2019decoupledweightdecayregularization} with learning rate $3 \times 10^{-4}$ and gradient accumulation over 8 steps. No learning rate scheduler is applied. Image-generation experiments are conducted on a single NVIDIA H200 machine using mixed-precision FP16 training, and we use the DDIM \cite{ddim} sampler with $\eta = 1.0$ as the noise scheduler. Number of inner epochs is 2.

\paragraph{Incremental Training Schedule.}
Following our incremental training approach, we progressively expand the set of trained timesteps across 4 stages: $\{5, 10, 15\}$ timesteps, with 10 training epochs per increment.

\paragraph{FK Particle Filtering.}
For particle-based sampling, we use 4 particles per prompt. The branching timestep is set to $t_b = 5$, with resampling applied at timesteps $\{8, 12, 16\}$. We use the maximum potential function with $\lambda = 2.0$ for particle weighting.

\paragraph{Evaluation.}
At inference time, we evaluate on 20 timesteps offset by 1: $\mathcal{T}_{\text{eval}} = \{1, 51, 101, 151, \ldots, 951\}$, following the same uniform spacing and the DDIM sampler\cite{ddim} with $\eta = 1.0$ as the noise scheduler. We use classifier-free guidance scale $w=5.0$ for inference.

\subsection{Vanishing Point Detection}

For vanishing point experiments, we use the same model architecture and hyperparameters as described above for image generation.

\subsection{3D Scene Generation}

\paragraph{Base Model.}
We build upon MiDiffusion\cite{hu2026mixed}, using only the continuous-domain diffusion component. The model is pretrained on bedroom scenes from the 3D-FRONT dataset\cite{fu20213d}. Scene layouts predicted by the model are conditioned on floor plans, and we fetch corresponding CAD models from the 3D-Future\cite{futurefu20203dfuture3dfurnitureshape} asset library for rendering. The model directly predicts the sample ($x_0$) rather than the noise.

\paragraph{Scene Representation.}
Each scene is represented as an unordered set of objects, where each object is parameterized by:
\begin{itemize}
    \item Semantic class: 22-dimensional one-hot vector (including an empty object class)
    \item 3D position: $(x, y, z)$ coordinates
    \item 3D size: $(w, h, d)$ dimensions
    \item 2D rotation: $(\cos\theta, \sin\theta)$ where $\theta$ is the angle with the vertical axis
\end{itemize}

\paragraph{Object and Floor Plan Encoding.}
Following MiDiffusion\cite{hu2026mixed}, we encode object features by processing geometric attributes through an MLP and combining them with learnable class label embeddings. For floor plan conditioning, we sample 256 boundary points from each floor plan image and compute their outward-facing normal vectors. These floor plan features are extracted using a PointNet-based\cite{qi2017pointnetdeeplearningpoint} encoder adapted from LEGO-Net\cite{wei2023legonetlearningregularrearrangements}.

\paragraph{Denoising Network Architecture.}
The denoising network uses a Transformer architecture with 8 layers, embedding dimension of 512, 4 attention heads, feedforward dimension of 2048, and dropout rate of 0.1. We use GELU\cite{hendrycks2023gaussianerrorlinearunits} activation, adaptive layer normalization with absolute timestep encoding, and fully-connected MLP layers. The feature extractor is a PointNet-based architecture with layer dimensions $[4, 64, 64, 512, 64]$. 

\paragraph{LoRA Configuration.}
For 3D scene generation, we apply LoRA\cite{hu2022lora} to all attention projection layers (q\_proj, k\_proj, v\_proj, out\_proj) in the Transformer blocks with rank $r=16$, alpha $\alpha=16$, dropout $p=0.0$, and no bias adaptation. This configuration provides sufficient capacity for fine-tuning while maintaining parameter efficiency.

\paragraph{Training Configuration.}
We use a batch size of 32 for both training and sampling and keep other hyperparameters same as in case of image generation. 3D scene-generation experiments are conducted on a single NVIDIA RTX 2070 Ti machine using mixed-precision FP16 training.

\paragraph{FK Particle Filtering.}
Similar to image generation, we use 4 particles per prompt with the maximum potential function and $\lambda = 2.0$. The branching timestep is set to $t_b = 13$, with resampling applied at timesteps $\{13, 16\}$.

\paragraph{Training and Inference.}
Training and inference timesteps follow the same configuration as image generation: 20 uniformly-spaced timesteps for training and 20 offset timesteps for evaluation, using DDIM sampling with $\eta = 1.0$. The incremental training schedule also mirrors image generation with 4 stages of $\{5, 10, 15, 20\}$ timesteps and 10 epochs per increment.

\subsection{Branch Feynman-Kac Trajectory Rollout}
Branching\cite{hu2025towards} is applied before FK resampling to increase the number of particles from 1 to \texttt{num\_particles} (e.g., 4). In our experiments, we branch at the 5th denoising timestep so that particles share enough early trajectory history and diverge later. This gives multiple trajectories with shared history, allowing us to select paths that produce the best and worst samples.

Inference-time FK steering\cite{singhal2025general} explores different potential designs. In our experiments, max-potential and difference-potential variants achieve similar metrics. We use the following max-potential form:
\[
G_t(x_T, \ldots, x_t, c) = \exp\!\left(\lambda_{\mathrm{FK}} \max_{s=t}^{T} R(x_s, c)\right)
\]

Max potential favors particles with higher rewards. We use this choice because \cite{Singhal2025FK_Steering} reports that max potential better preserves sample diversity.

Similarly, prior FK-steering work studies different values of $\lambda_{\mathrm{FK}}$ (e.g., 2 and 10). Our corresponding comparison is shown in Table~\ref{tab:lambda_fk_ablation}.

\begin{table}[t]
\centering
\caption{Effect of $\lambda_{\mathrm{FK}}$ on Inception Score.}
\label{tab:lambda_fk_ablation}
\begin{tabular}{lc}
\toprule
\textbf{Method} & \textbf{Inception Score} \\
\midrule
Ours ($\lambda_{\mathrm{FK}}=10$) & 1.2830 \\
Ours ($\lambda_{\mathrm{FK}}=2$) & \textbf{1.3035} \\
\bottomrule
\end{tabular}
\end{table}

FK resampling can be applied at different timesteps, but overly frequent resampling may waste computation without significant gains because particles may not have diverged enough. We therefore use resampling at the 8th, 12th, and 16th denoising timesteps. All experiments use 4 particles.

\subsection{Prompt Templates and Dataset}

To evaluate the effectiveness and generalization capabilities of our method, we curated a diverse set of prompt templates spanning several semantic and structural categories. These templates are designed to test the model's ability to handle compositional generalization, out-of-distribution (OOD) attribute binding, spatial reasoning, and geometric regularity.

\paragraph{Prompt Templates}
We utilize four distinct templates\cite{hu2025towards}, each targeting a specific challenge in text-to-image generation:
\begin{itemize}
    \item \textbf{Template 1: Animal-Action Compositionality.} This template pairs various animals (e.g., \textit{cat, shark, kangaroo}) with specific actions (e.g., \textit{riding a bike, playing chess, washing dishes}). The dataset contains 45 prompts.
    \item \textbf{Template 2: Rare Color-Object Binding.} This template focuses on binding colors to fruits and vegetables. We define \textit{Common} prompts as those following natural distributions (e.g., \textit{red apple, yellow banana}) and \textit{Rare} prompts as unnatural or OOD combinations (e.g., \textit{brown banana, green strawberry}). Total 40 prompts are present for this template.
    \item \textbf{Template 3: Spatial Relations.} This template evaluates spatial reasoning using relations such as \textit{on, under, on the left of} and \textit{on the right of}. It contains total of 40 spatial composition prompts to test the model's adherence to relational constraints.
    \item \textbf{Template 4: Geometric Structures.} Designed specifically for evaluating the Vanishing Point reward, this template includes 24 prompts describing scenes with strong linear perspective, such as \textit{railroad tracks stretching to the horizon} or \textit{an empty straight road disappearing into the distance}.
\end{itemize}


\paragraph{Complete Prompt List}
Table~\ref{tab:prompts} provides a comprehensive list of 10 prompts for each template, showcasing the range of scenarios used in our experiments. To maintain clarity, we present the prompts in a structured format that categorizes the training and testing variations.
\begin{table*}[t]
\centering
\caption{Representative prompts from each template category for text-to-image evaluation.}
\label{tab:prompts}
\small
\setlength{\tabcolsep}{4pt}
\begin{tabular}{@{}p{0.22\textwidth}p{0.74\textwidth}@{}}
\toprule
\textbf{Template} & \textbf{Representative Prompts} \\
\midrule
\textbf{T1: Animal-Action} & 
\begin{minipage}[t]{\linewidth}
\begin{enumerate}
    \item a cat washing dishes
    \item a cat playing chess
    \item a dog riding a bike
    \item a dog playing chess
    \item a horse riding a bike
    \item a horse playing chess
    \item a dolphin riding a bike
    \item a gorilla playing chess
    \item a fox riding a bike
    \item a chicken playing chess
\end{enumerate}
\end{minipage} \\
\midrule
\textbf{T2: Rare Binding} & 
\begin{minipage}[t]{\linewidth}
\begin{enumerate}
    \item red apple
    \item yellow banana
    \item orange orange
    \item purple grape
    \item red watermelon
    \item yellow apple
    \item brown banana
    \item green strawberry
    \item black grape
    \item brown kiwi
\end{enumerate}
\end{minipage} \\
\midrule
\textbf{T3: Spatial Relations} & 
\begin{minipage}[t]{\linewidth}
\begin{enumerate}
    \item vase on table
    \item shirt on person
    \item watch on person
    \item jacket on person
    \item motorcycle on road
    \item motorcycle behind person
    \item person behind person
    \item building behind trees
    \item hydrant behind motorcycle
    \item trees behind grass
\end{enumerate}
\end{minipage} \\
\midrule
\textbf{T4: Geometric} & 
\begin{minipage}[t]{\linewidth}
\begin{enumerate}
    \item a straight railroad track leading to a vanishing point at dawn
    \item a long marble hallway in a palace with ornate ceiling
    \item a straight canal through tulip fields in the Netherlands
    \item a bamboo forest path lined with tall bamboo stalks
    \item an infinity pool edge overlooking a mountain valley
    \item railway tracks stretching to the horizon at sunset
    \item an empty straight road disappearing into the distance
    \item a long tunnel with circular arches receding into darkness
    \item a straight highway through a flat desert landscape
    \item a tree-lined avenue with branches forming an arch overhead
\end{enumerate}
\end{minipage} \\
\bottomrule
\end{tabular}
\end{table*}



\section{Evaluation Metrics}
\label{sec:metrics}

This section describes the metrics used to evaluate the performance of our method.

\subsection{Image Generation Metrics}

\paragraph{Inception Score (IS).}
Following B$^2$-DiffuRL, we compute Inception Score using CLIP embeddings instead of the traditional Inception-v3 classifier.

\begin{algorithm}[H]
\caption{CLIP-based Inception Score}
\small
\begin{algorithmic}[1]
\REQUIRE CLIP model, generated images $\{I_1, \ldots, I_N\}$, number of splits $K$
\ENSURE Inception Score: mean $\mu$ and standard deviation $\sigma$
\STATE \textbf{// Step 1: Encode all images}
\FOR{each batch of images}
    \STATE $\mathbf{e}_i = \text{CLIP}_\text{image}(I_i)$ \hfill $\triangleright$ Get image embeddings
    \STATE $p(y|x_i) = \text{softmax}(\mathbf{e}_i)$ \hfill $\triangleright$ Convert to probability distribution
\ENDFOR
\STATE Collect all predictions: $P = [p(y|x_1), \ldots, p(y|x_N)]$
\STATE \textbf{// Step 2: Compute score per split}
\FOR{$k = 1$ to $K$}
    \STATE $P_k \leftarrow P\left[\frac{k-1}{K}N : \frac{k}{K}N\right]$ \hfill $\triangleright$ Split data
    \STATE $p(y) = \frac{1}{|P_k|}\sum_{i} p(y|x_i)$ \hfill $\triangleright$ Marginal distribution
    \STATE \textbf{// Compute KL divergence for each sample}
    \FOR{each $p(y|x_i)$ in $P_k$}
        \STATE $\text{KL}_i = \sum_y p(y|x_i) \log \frac{p(y|x_i)}{p(y)}$ \hfill $\triangleright$ KL$(p(y|x) \| p(y))$
    \ENDFOR
    \STATE $s_k = \exp\left(\frac{1}{|P_k|}\sum_i \text{KL}_i\right)$ \hfill $\triangleright$ Score for split $k$
\ENDFOR
\STATE $\mu = \frac{1}{K}\sum_{k=1}^K s_k$, $\sigma = \sqrt{\frac{1}{K}\sum_{k=1}^K (s_k - \mu)^2}$
\RETURN $(\mu, \sigma)$
\end{algorithmic}
\end{algorithm}

Higher IS indicates both better image quality (low entropy within $p(y|x)$) and greater diversity (high entropy in marginal $p(y)$). For comparison, we sample 1080 samples for every case.

\subsection{3D Indoor Scene Generation Metrics}

\paragraph{Collision Rate.}
We measure the percentage of objects involved in collisions across all generated scenes. Collisions are detected using Axis-Aligned Bounding Box (AABB) overlap tests, where two objects are considered colliding if their bounding boxes intersect. The collision rate is computed as:
\begin{equation}
\text{Collision Rate} = \frac{\text{Number of objects with collisions}}{\text{Total number of objects}} \times 100\%
\end{equation}
Lower collision rates indicate physically more plausible scene layouts. We exclude ceiling-mounted objects (e.g., lamps) from collision detection as they do not interact with ground-level furniture.


\section{Prompt Rarity Analysis}
\label{sec:rarity}

This section provides analysis of prompt rarity used in our experiments.

\subsection{Reward Distribution Across Prompts}
We analyze the performance of the pretrained Stable Diffusion (SD) model across a wide variety of prompts. By sorting prompts using their average pretrained CLIP reward (or Vanishing Point reward for geometric templates), we observe a characteristic reward curve that highlights the model's performance variance. As shown in Fig.~\ref{fig:reward_grid}, the rewards span a significant range, where prompts at the lower end represent concepts the pretrained model finds challenging to represent accurately.


\subsection{Rare vs Common Prompt Definition}
To systematically evaluate model improvements, we define ``Rare'' and ``Common'' prompts based on the pretrained reward distribution. For each template, we sort all prompts by their pretrained reward and identify the bottom 25\% (Q3) as ``Rare'' prompts and the top 25\% (Q1) as ``Common'' prompts. The remaining 50\% of prompts are categorized as intermediate. This definition allows us to measure how fine-tuning methods improve performance in regions where the original model is most deficient.

\subsection{Quartile Analysis}
We further partition the entire evaluation prompt set (aggregated across all templates) into quartiles based on pretrained rewards. As shown in Fig.~\ref{fig:quartile_analysis}, our proposed method (\textit{Ours}) consistently out-performs SD and competitive baselines across all quartiles. Notably, the relative improvement is most pronounced in the third quartile (Rare prompts), where \textit{Ours} achieves a 16.7\% increase over the SD baseline. The gains are sustained at 12.0\% for Intermediate (Q2) and 10.5\% for Common (Q3) prompts, demonstrating that our method effectively shifts the reward distribution upward across the entire spectrum of prompt difficulty.

\begin{figure*}[t]
    \centering
    \includegraphics[width=\textwidth]{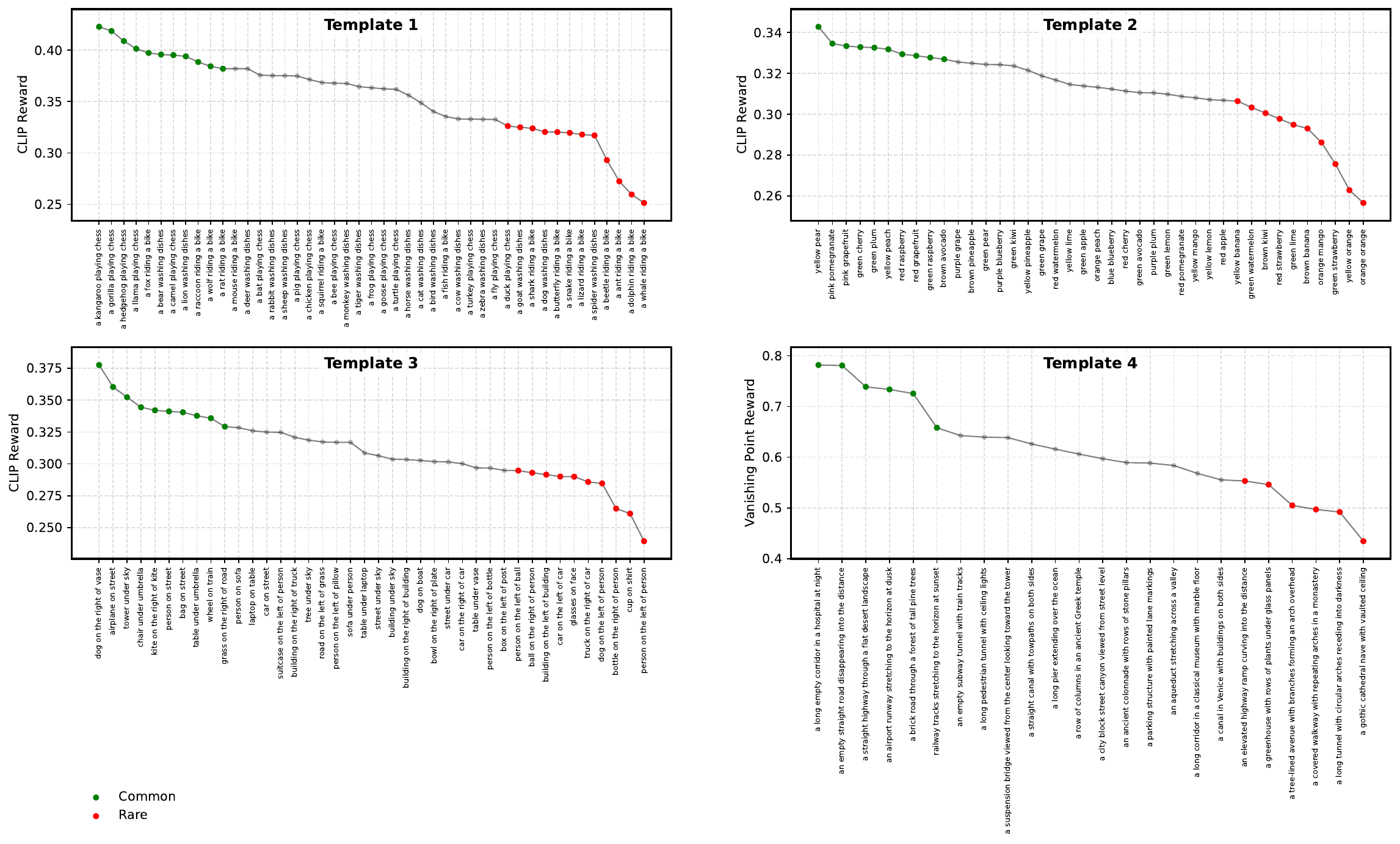}
    \caption{\textbf{Pretrained Reward Distribution across Templates.} We visualize the CLIP reward (T1-T3) and Vanishing Point reward (T4) of the pretrained Stable Diffusion model across sorted evaluation prompts. For each template, we define \textit{Rare} prompts as those in the bottom 25\% (red) and \textit{Common} prompts as those in the top 25\% (green). The characteristic curve highlights significant performance variance in the base model, particularly for challenging prompts in T1-T3.}
    \label{fig:reward_grid}
\end{figure*}

\begin{figure}[t]
    \centering
    \includegraphics[width=\linewidth]{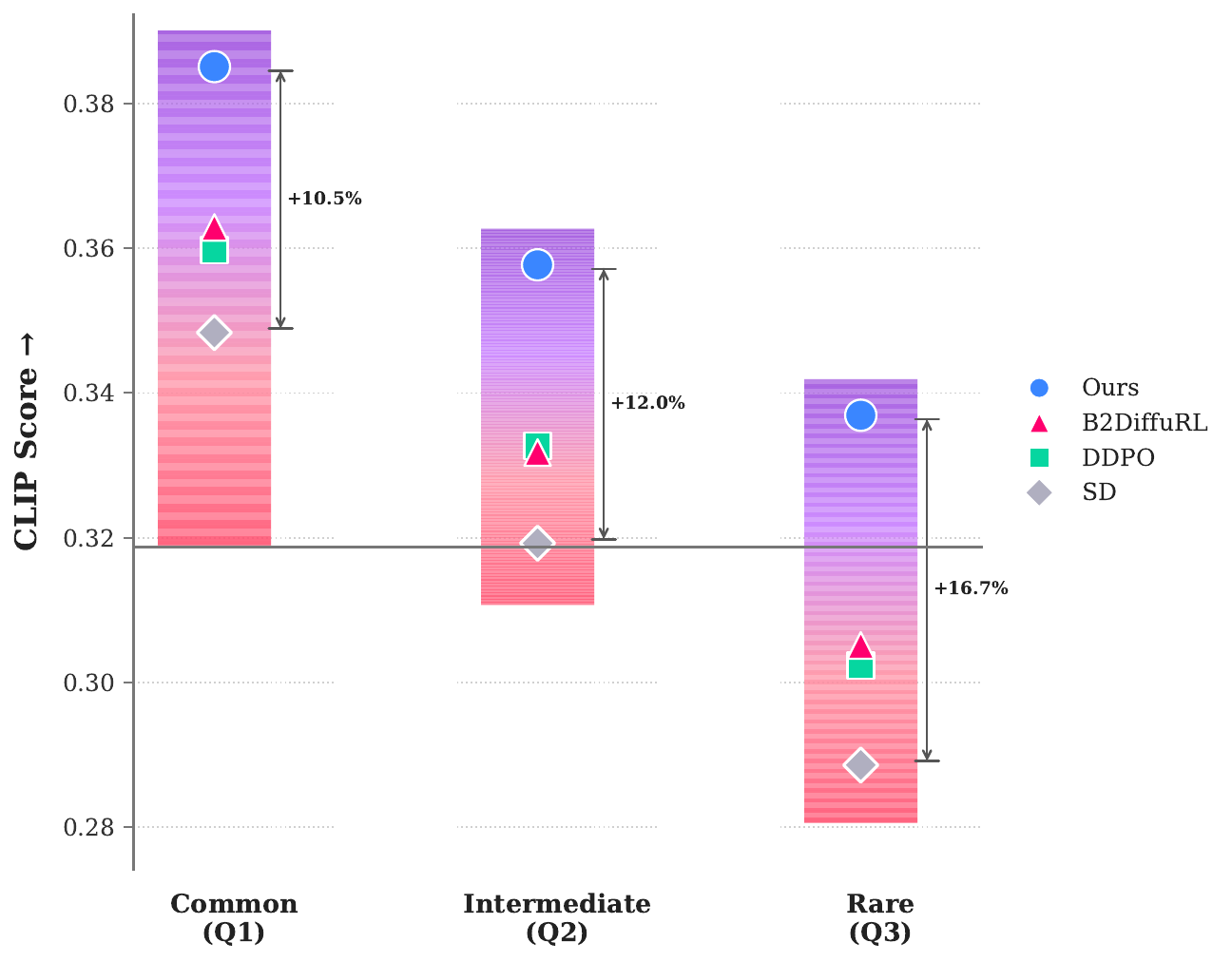}
    \caption{\textbf{Quartile Performance Analysis.} Comparison of model performance across prompt difficulty quartiles (Q1: Common, Q2: Intermediate, Q3: Rare). The horizontal line represents the global pretrained SD baseline. Our method (\textit{Ours}) consistently outperforms SD, DDPO, and B$^2$-DiffuRL, with the largest relative gain observed in the most challenging \textit{Rare} quartile ($+16.7\%$), while maintaining significant leads in intermediate ($+12.0\%$) and common ($+10.5\%$) scenarios.}
    \label{fig:quartile_analysis}
\end{figure}

\section{Additional Experiments}
\label{sec:extra_exp}

This section contains additional experiments that further validate the proposed method.

\subsection{Comparison with DPOK}

DPOK\cite{fan2023dpok} suggests that incorporating a KL-divergence regularization term during optimization can mitigate reward hacking while improving sample diversity. The intuition is that constraining the optimized policy to remain close to the reference model prevents the model from exploiting weaknesses in the reward function.

To evaluate this claim in our setting, we compare our method with DPOK-style KL regularization and other baseline methods. Following the evaluation protocol, we measure diversity while keeping the reward fixed at $0.38$, and measure reward while keeping the diversity fixed at $1.285$. This controlled evaluation isolates the trade-off between reward maximization and diversity.

\begin{table}[t]
\centering
\begin{tabular}{lcccc}
\hline
\textbf{Method} & \textbf{Reward} $\uparrow$ & \textbf{Diversity} $\uparrow$ & \textbf{Rarity} $\uparrow$ & \textbf{AUC} $\uparrow$ \\
\hline
SD\cite{rombach2022highresolution} & 0.3565 & 1.3067 & - & - \\
\hline
DPOK\cite{fan2023dpok} & 0.3582 & 1.2855 & 28.33 & - \\
B$^2$Diff-RL\cite{hu2025towards} + KL regularization & 0.3646 & 1.2784 & 27.5 & -\\
Ours + KL regularization & \textbf{0.3800} & \textbf{1.3002} & \textbf{55.83} & \textbf{-} \\
\hline
\end{tabular}
\caption{Comparison of different methods under KL-divergence regularization. Diversity is measured while fixing the reward to $0.38$, and reward is measured while fixing diversity to $1.285$. While KL regularization helps reduce reward hacking as suggested by DPOK, our method still achieves higher diversity and reward than existing approaches.}
\label{tab:kl_comparison}
\end{table}

While KL-divergence regularization stabilizes optimization and reduces reward exploitation, it introduces an additional constraint that may limit the achievable performance. In our experiments, although the KL-regularized variant improves robustness against reward hacking, it does not match the performance of our method without KL regularization. This suggests that our optimization strategy already provides a better balance between reward maximization and diversity without requiring explicit KL constraints.
\subsection{Generalization Across Prompts VP}


\section{Additional Quantitative Results}
\label{sec:quantitative}

This section provides additional quantitative results supporting the effectiveness of our approach.

\subsection{Template-wise Tables for Image Generation}

We present the detailed quantitative results in Tables~\ref{tab:results_t1}, \ref{tab:results_t2}, and~\ref{tab:results_t3} for each individual template (T1, T2, and T3) used in our text-to-image generation experiments.

\begin{table}[h]
    \centering
    \begin{tabular}{lcccc}
        \toprule
        Method & Alignment  $\uparrow$ & Diversity $\uparrow$ & Rarity $\uparrow$ & AUC $\uparrow$  \\
        \midrule
        SD & - & - & - \\
        DDPO & - & - & - \\
        B$^2$-DiffuRL & - & - & - \\
        \textbf{Ours} & - & - & - \\
        \bottomrule
    \end{tabular}
    \vspace{0.1in}
    \caption{Quantitative results for Template T1.}
    \label{tab:results_t1}
\end{table}

\begin{table}[h]
    \centering
    \begin{tabular}{lcccc}
        \toprule
        Method & Alignment  $\uparrow$ & Diversity $\uparrow$ & Rarity $\uparrow$ & AUC $\uparrow$  \\
        \midrule
        SD & - & - & - \\
        DDPO & - & - & - \\
        B$^2$-DiffuRL & - & - & - \\
        \textbf{Ours} & - & - & - \\
        \bottomrule
    \end{tabular}
    \vspace{0.1in}
    \caption{Quantitative results for Template T2.}
    \label{tab:results_t2}
\end{table}

\begin{table}[h]
    \centering
    \begin{tabular}{lcccc}
        \toprule
        Method & Alignment  $\uparrow$ & Diversity $\uparrow$ & Rarity $\uparrow$ & AUC $\uparrow$  \\
        \midrule
        SD & - & - & - \\
        DDPO & - & - & - \\
        B$^2$-DiffuRL & - & - & - \\
        \textbf{Ours} & - & - & - \\
        \bottomrule
    \end{tabular}
    \vspace{0.1in}
    \caption{Quantitative results for Template T3.}
    \label{tab:results_t3}
\end{table}

\subsection{Vanishing Point Evaluation}

In this section, we evaluate our method on the geometric vanishing-point task. Figure~\ref{fig:vp_reward_is} illustrates the Inception Score (IS) versus reward curve. Unlike standard settings, where alignment improvements often come at the cost of image quality, our method improves both reward and IS simultaneously. One possible reason is that the line-detection-based reward favors sharper, more structured images, which can also increase IS.

However, we argue that the IS might not be a highly reliable metric for evaluating the true quality of images in scenarios with strict geometric constraints, such as vanishing points. To robustly verify that the visual quality of the images generated by our method is actually better than the baselines, we conducted a comprehensive user study. The user study results (shown in Figure~\ref{fig:vp_user_study}) confirm that human evaluators consistently prefer the visual quality and alignment of our method over the alternatives.

\begin{figure}[h]
    \centering
    \includegraphics[width=0.6\linewidth]{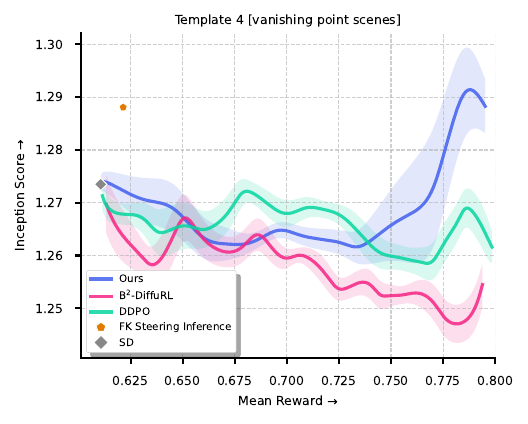}
    \caption{Inception Score vs Mean Reward plot for the vanishing point task. Our method achieves higher alignment while the inception score also increases, demonstrating an improvement in both metrics without a strict tradeoff.}
    \label{fig:vp_reward_is}
\end{figure}

\begin{figure}[h]
    \centering
    \begin{subfigure}{0.48\linewidth}
        \includegraphics[width=\linewidth]{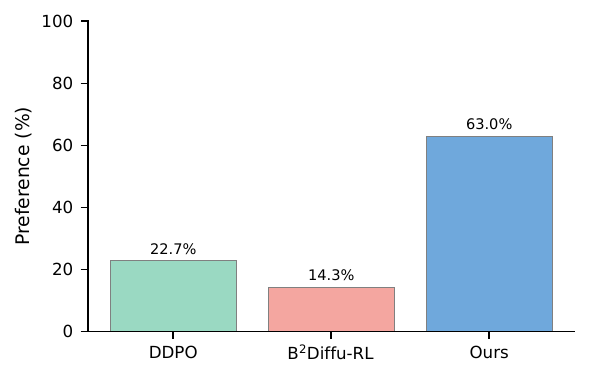}
        \caption{Global}
    \end{subfigure}\hfill
    \begin{subfigure}{0.48\linewidth}
        \includegraphics[width=\linewidth]{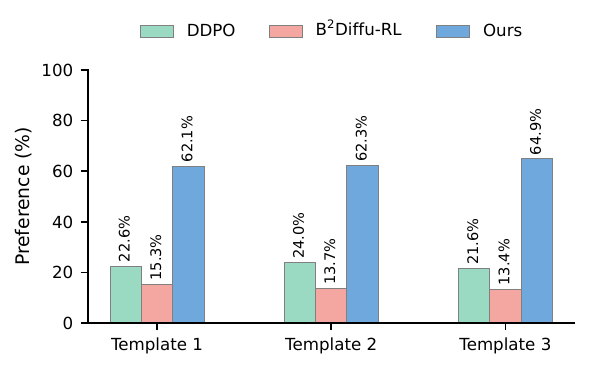}
        \caption{Per Template}
    \end{subfigure}
    \\\vspace{0.6em}
    \begin{subfigure}{0.8\linewidth}
        \includegraphics[width=\linewidth]{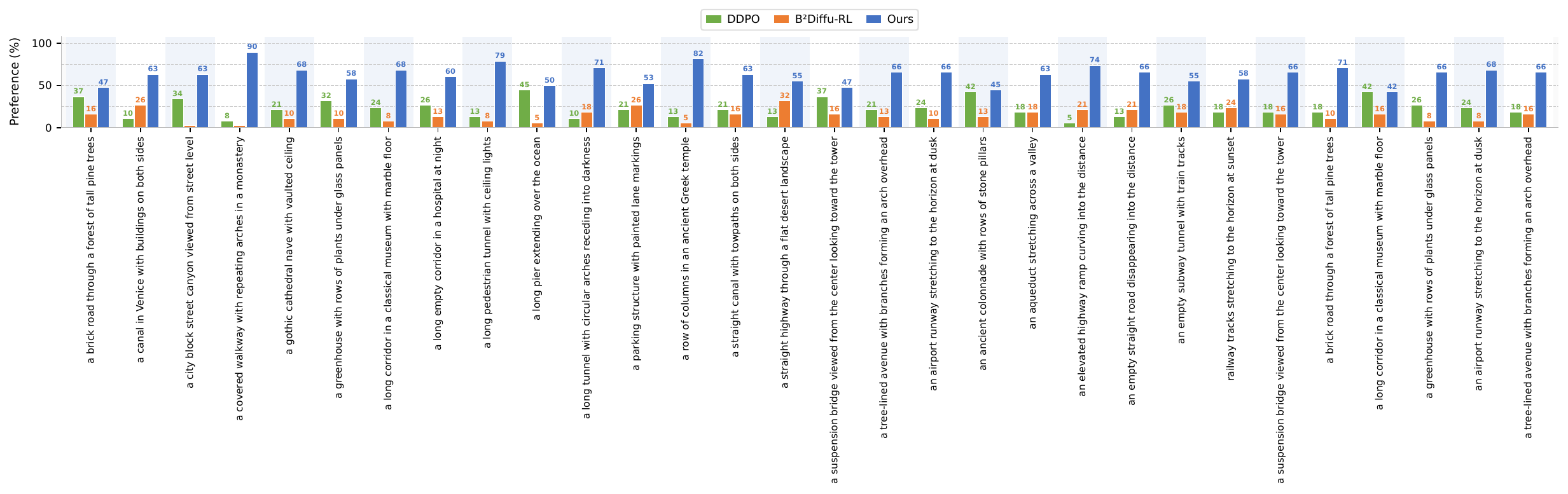}
        \caption{Per Prompt}
    \end{subfigure}
    \caption{User study results for the vanishing point task, showing global preference, per-template preference, and per-prompt preference. Human evaluators consistently preferred our method.}
    \label{fig:vp_user_study}
\end{figure}

\subsection{User Study Results for Image Generation}

In addition to the geometric evaluation, we provide the detailed user study plots for the standard image generation templates. Figure~\ref{fig:img_user_study} visualizes the human preference rates broken down both per template and per prompt.

\begin{figure}[h]
    \centering
    \begin{subfigure}{0.6\linewidth}
        \includegraphics[width=\linewidth]{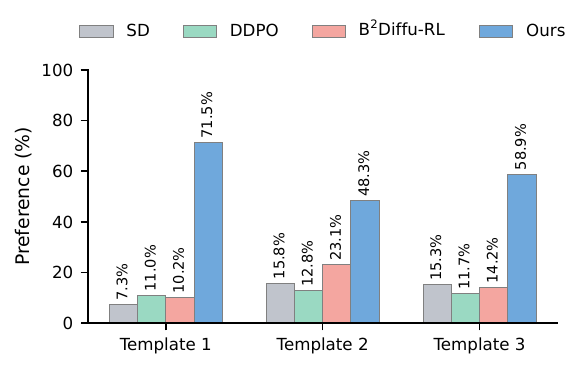}
        \caption{Per Template}
    \end{subfigure}
    \\\vspace{0.6em}
    \begin{subfigure}{0.8\linewidth}
        \includegraphics[width=\linewidth]{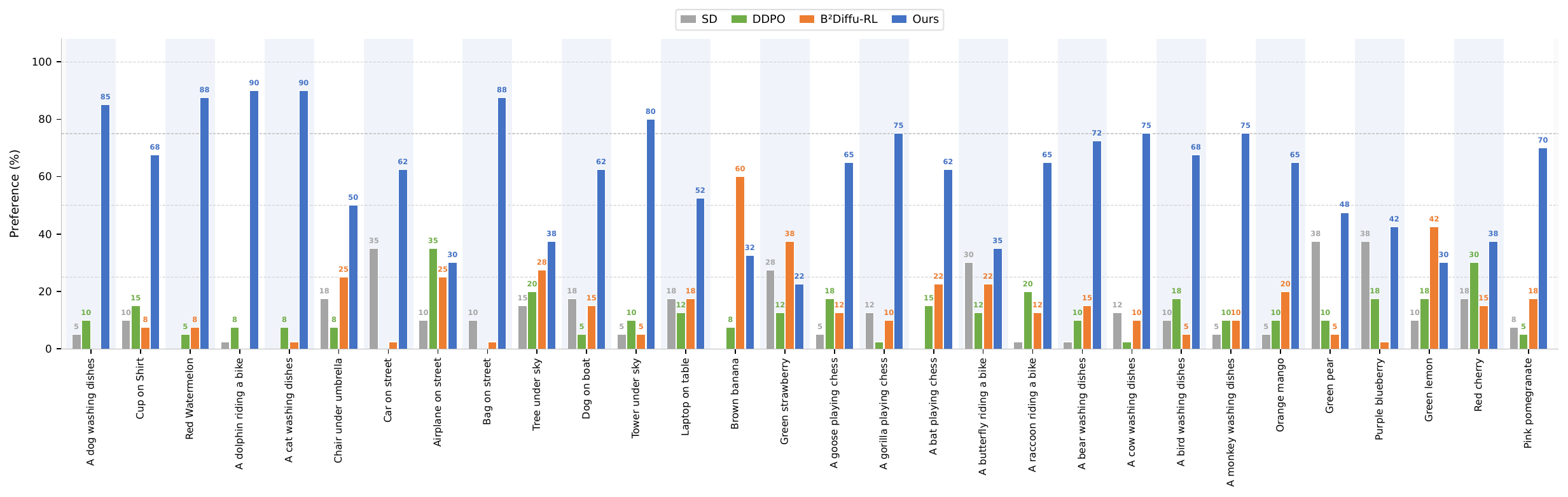}
        \caption{Per Prompt}
    \end{subfigure}
    \caption{User study results for general image synthesis broken down per template and per prompt.}
    \label{fig:img_user_study}
\end{figure}

\subsection{Quartile Reward Analysis}

To provide a deeper understanding of the performance across different difficulty levels, we plot the quartile rewards for each template. Figures~\ref{fig:quartile_t1_t2} and~\ref{fig:quartile_t3_geo} illustrate how our method performs across the 25th, 50th, and 75th percentiles of prompt difficulties for templates T1, T2, T3, and the vanishing point task.

\begin{figure}[h]
    \centering
    \begin{subfigure}{0.48\linewidth}
        \includegraphics[width=\linewidth]{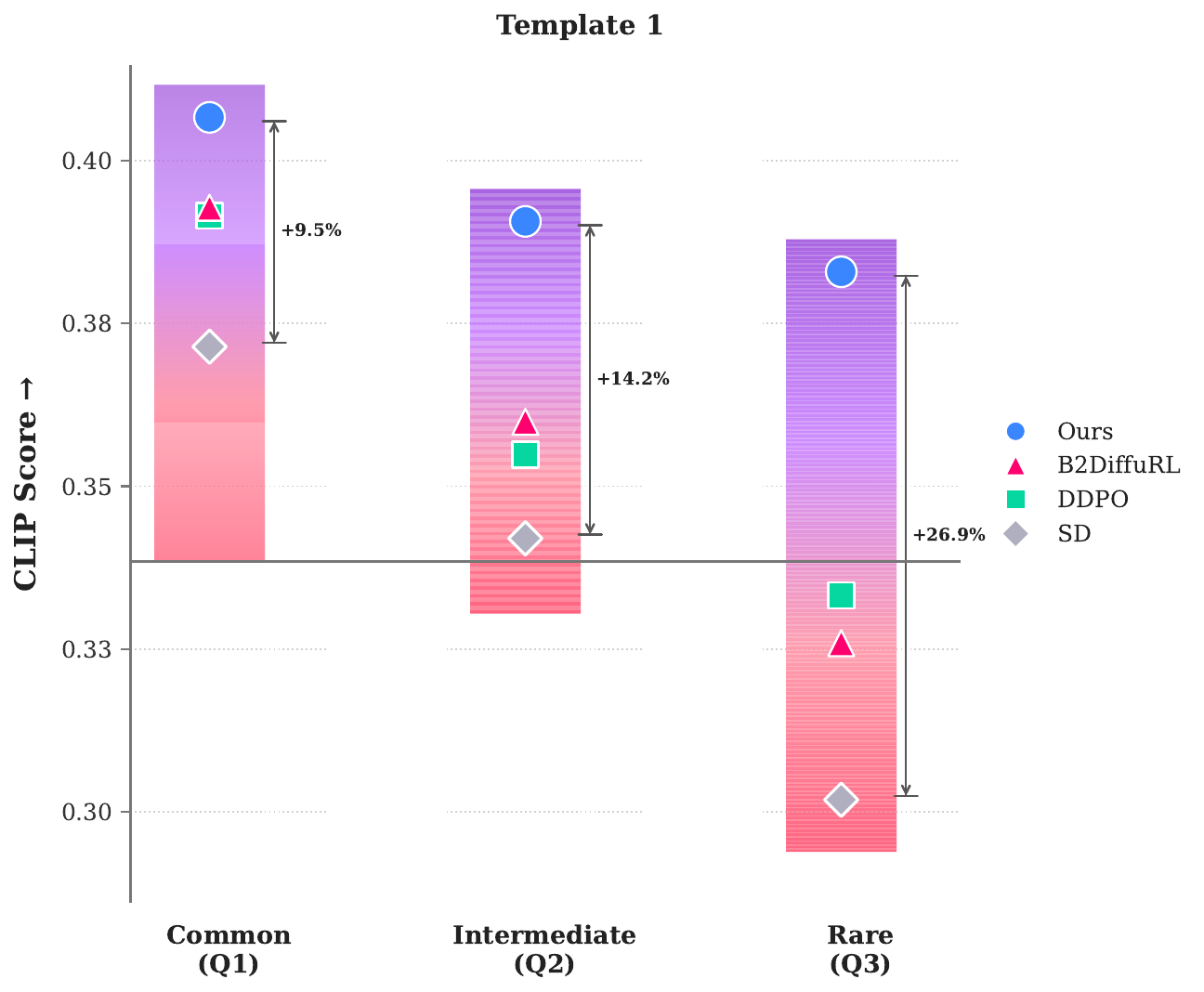}
        \caption{Template T1}
    \end{subfigure}
    \begin{subfigure}{0.48\linewidth}
        \includegraphics[width=\linewidth]{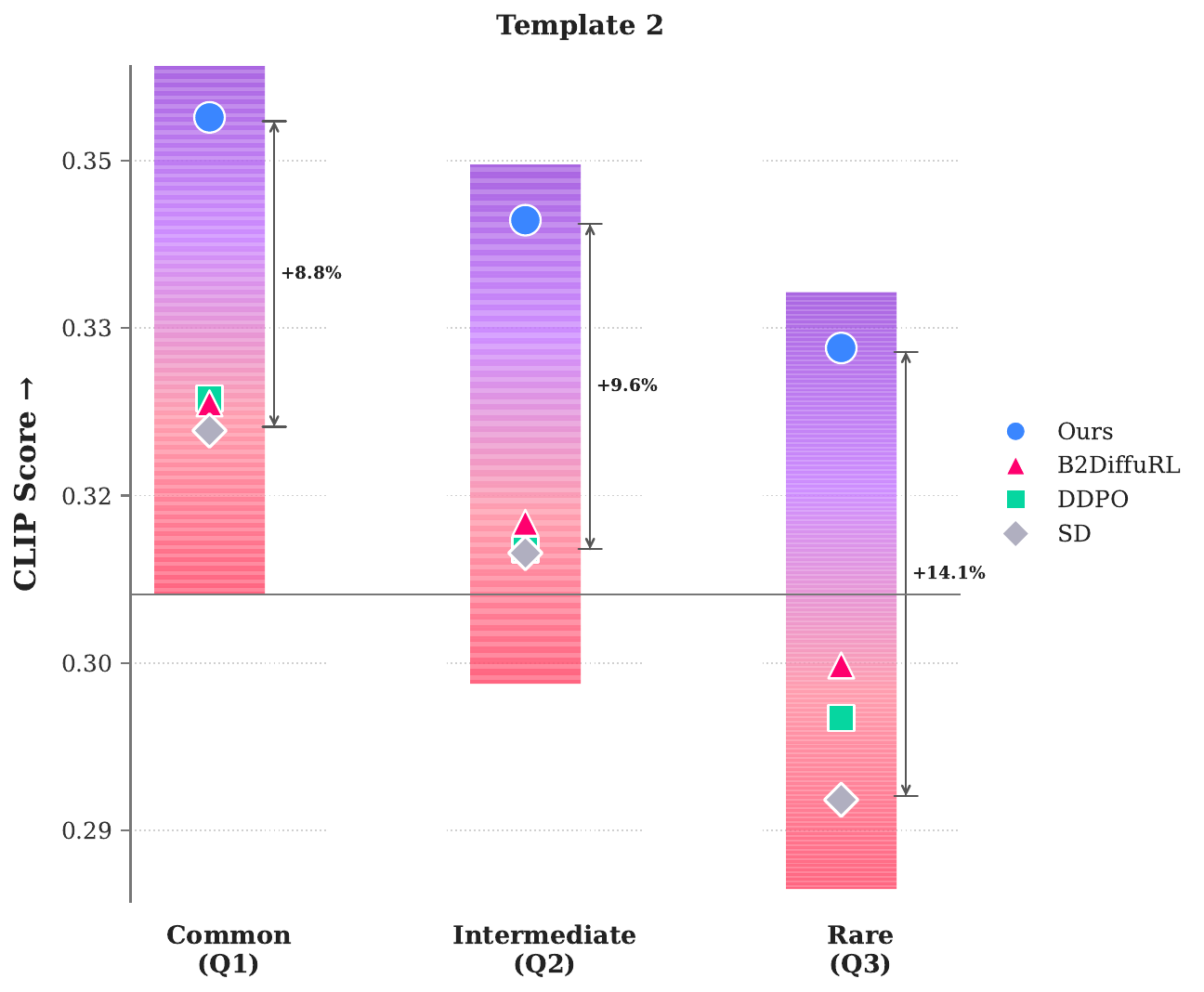}
        \caption{Template T2}
    \end{subfigure}
    \caption{Quartile reward plots for Template T1 and T2 scenarios.}
    \label{fig:quartile_t1_t2}
\end{figure}

\begin{figure}[h]
    \centering
    \begin{subfigure}{0.48\linewidth}
        \includegraphics[width=\linewidth]{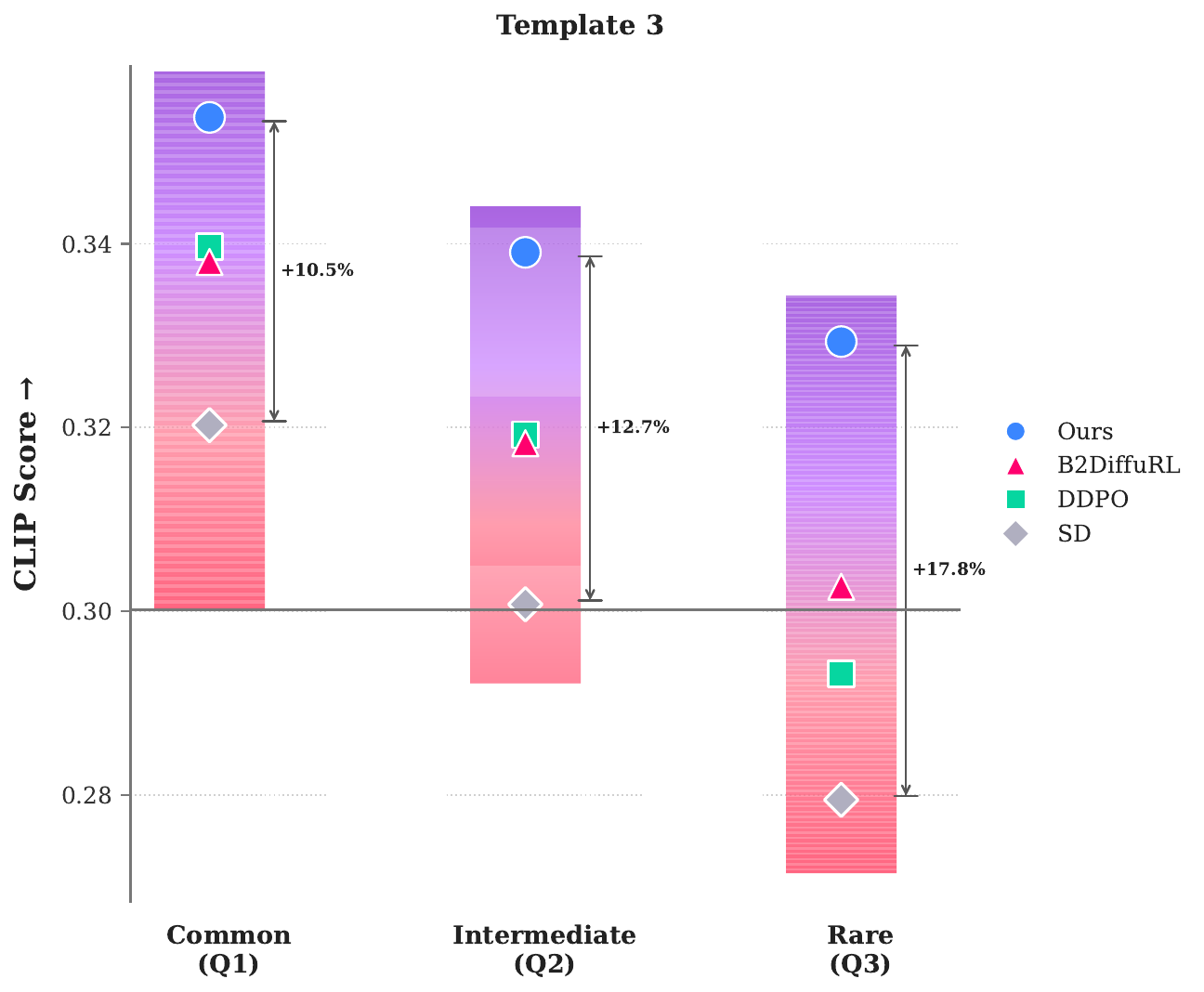}
        \caption{Template T3}
    \end{subfigure}
    \begin{subfigure}{0.48\linewidth}
        \includegraphics[width=\linewidth]{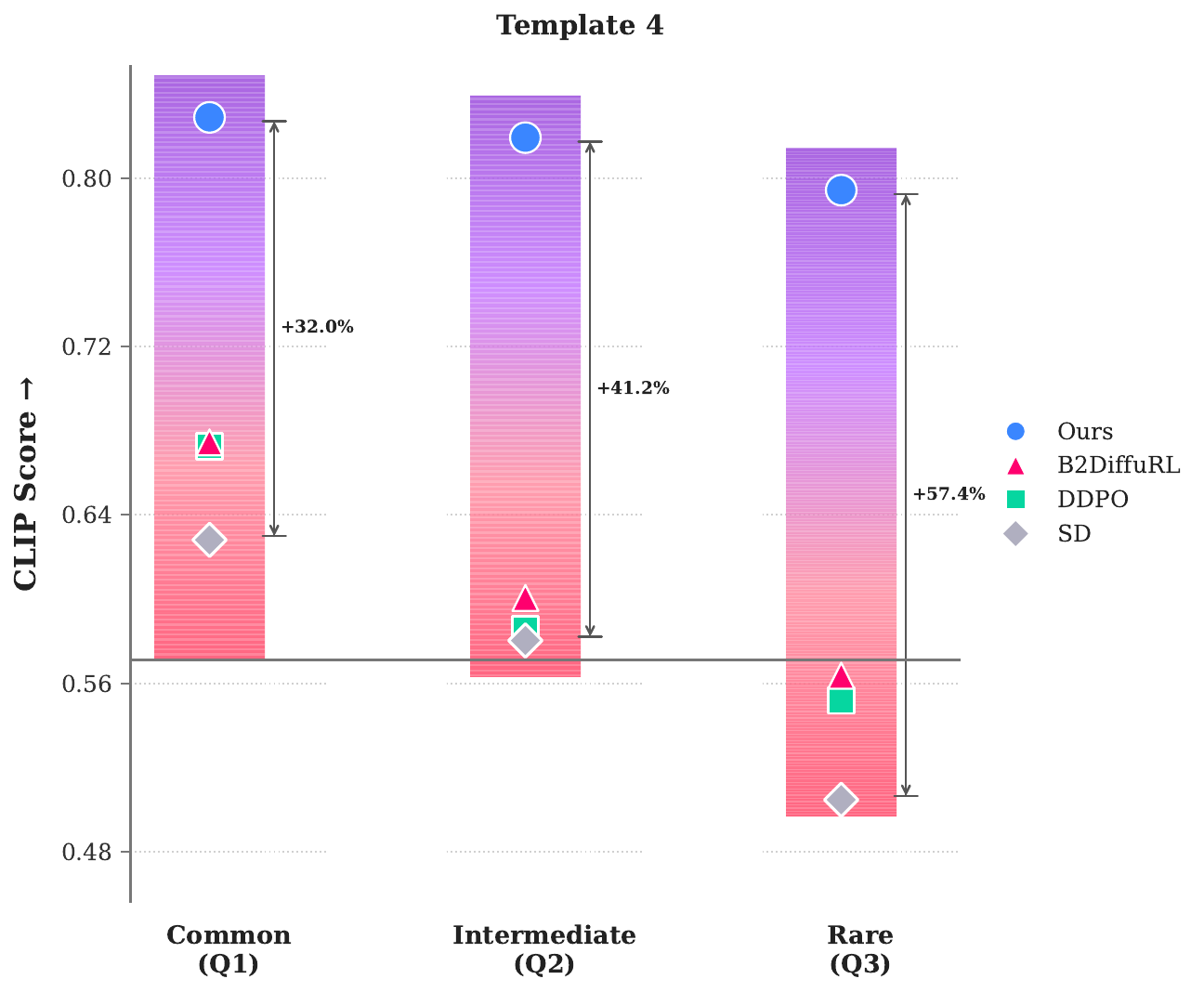}
        \caption{Vanishing Point}
    \end{subfigure}
    \caption{Quartile reward plots for Template T3 and the vanishing point geometric task.}
    \label{fig:quartile_t3_geo}
\end{figure}

\section{Additional Qualitative Results}
\label{sec:qualitative}

This section provides additional qualitative results.

\subsection{Human Evaluation Samples}


\subsection{Vanishing Point Examples}


\subsection{3D Scene Renderings}





\section{Training Time Comparison}
To compare convergence efficiency across methods, we report the wall-clock GPU time required to reach the same mean reward target (CLIP score of 0.388). This comparison isolates optimization speed under a common reward endpoint and highlights the effect of incremental training across different RL-based diffusion methods.

\begin{table}[H]
\centering
\caption{Time taken to reach the same mean reward point (0.388 CLIP score).}
\label{tab:training_time}
\begin{tabular}{lc}
\toprule
\textbf{Method} & \textbf{GPU Hours (Training Time)} \\
\midrule
DDPO & 2.55 \\
DDPO + incremental & 2.46 \\
B$^2$-DiffuRL & 3.42 \\
B$^2$-DiffuRL + incremental & 2.76 \\
FK + incremental (ours) & 9.55 \\
Branch-FK + incremental (iADD) & 7.43 \\
\bottomrule
\end{tabular}
\end{table}

\section{Limitations and Future Work}
\label{sec:limitations}

A primary limitation of our approach is its higher training cost. As reported in Table~\ref{tab:training_time}, our method requires 7.43 GPU hours, compared with 2.55 GPU hours for DDPO, to reach the same mean reward (CLIP score of 0.388). This overhead is mainly attributable to the Feynman-Kac particle-based sampling procedure, which requires repeated reward evaluations during trajectory generation. Although the incremental timestep update strategy partially reduces this cost, overall training remains slower than DDPO. Importantly, after training, inference-time cost is comparable to DDPO\cite{black2023ddpo} and does not incur the additional overhead associated with inference-time FK steering\cite{singhal2025general}.

A second limitation concerns reward estimation from noisy intermediate states. Our current approach computes rewards from denoised predictions inferred directly from noisy latents, which can be unreliable at earlier timesteps. For instance, in 3D indoor scene layout generation, improvements are limited because critical geometric attributes (e.g., object size and position) are often not accurately recoverable from noisy states, even midway through the denoising trajectory. In such settings, alternative reward-estimation schemes, as discussed in inference-time FK steering\cite{singhal2025general}, may be more suitable.

Future work should investigate more accurate estimators of terminal rewards from intermediate noisy states. In addition, stronger reward models are needed to improve prompt-image alignment. The CLIP\cite{pmlr-v139-radford21a} score used in our experiments has known failure modes; for example, for the prompt "chicken playing chess", it may assign a high score to images showing cooked chicken near a chessboard, despite weak semantic alignment. A natural next step is to extend this framework to video diffusion models, where physics-aware rewards from simulation engines could encourage temporally consistent and physically plausible video generation.

\section{Broader Impacts.}
Diffusion-based generative models are increasingly used in design, education, media creation, simulation, and human--AI interaction. A key practical limitation, however, is that high-scoring outputs do not necessarily satisfy physical or geometric rules of the real world. Our work shows that reinforcement-learning-based alignment can be trained to better respect task-specific constraints (e.g., perspective consistency or collision-free 3D layouts) while preserving output quality. This may improve reliability in applications where structural correctness matters. At the same time, more controllable generation can also be misused to produce convincing but misleading synthetic content. For this reason, safeguards, responsible deployment, and robust detection and auditing tools remain essential.


\end{document}